\renewcommand{\tilde}{\widetilde}
\renewcommand{\hat}{\widehat}
\newcommand\given[1][]{\:#1\vert\:}
\newcommand\givenn[1][]{\:#1\middle\vert\:}
\def \A {\mathcal{A}}
\def \B {\mathbb{B}}
\def \B {\mathcal{B}}
\def \D {\mathcal{D}}
\def \E {\mathbb{E}}
\def \H {\mathcal{H}}
\def \I {\mathcal{I}}
\def \M {\mathcal{M}}
\def \O {\mathcal{O}}
\def \R {\mathbb{R}}
\def \T {\top}
\def \W {\mathcal{W}}
\def \X {\mathcal{X}}
\def \Y {\mathcal{Y}}
\def \a {\mathbf{a}}
\def \f {\tilde{f}}
\def \g {\mathbf{g}}
\def \p {\mathbf{p}}
\def \u {\mathbf{u}}
\def \v {\mathbf{v}}
\def \w {\mathbf{w}}
\def \x {\mathbf{x}}
\def \y {\mathbf{y}}
\def \z {\mathbf{z}}
\def \wh {\hat{\w}}
\def \Rcal {\mathcal{R}}
\def \Ot {\tilde{\O}}
\def \ellb {\boldsymbol{\ell}}
\def \meta {\mathtt{meta}\mbox{-}\mathtt{regret}}
   \let\norm\undefined \DeclarePairedDelimiter\norm{\lVert}{\rVert}
\DeclarePairedDelimiter\abs{\lvert}{\rvert}
\newcommand\inner[2]{\langle #1, #2 \rangle}
\newcommand\ceil[1]{\lceil #1 \rceil}
\DeclareMathOperator*{\argmin}{arg\,min}
\newenvironment{proof}{\par\noindent{\bf Proof\ }}{\hfill\BlackBox\\[2mm]}
\newtheorem{myThm}{Theorem}
\newtheorem{myCor}[myThm]{Corollary}
\newtheorem{myLemma}[myThm]{Lemma}
\newtheorem{myProp}[myThm]{Proposition}
\theoremstyle{definition}
\newtheorem{myAssumption}{Assumption}
\newtheorem{myDef}{Definition}
\newtheorem{myRemark}{Remark}
\definecolor{wine_red}{RGB}{228,48,64}
\definecolor{DSgray}{cmyk}{0,1,0,0}
\def \ellb {\boldsymbol{\ell}}
\def \p {\boldsymbol{p}}
\def \meta {\mathtt{meta}\text{-}\mathtt{regret}}
\def \base {\mathtt{base}\text{-}\mathtt{regret}}
\def \epsilon {\varepsilon}
\def \SC {Scream.Control}
\def \define {\triangleq}
\def \Ah {\hat{A}}
\def \Bh {\hat{B}}
\def \Sh {\hat{S}}
\def \wh {\hat{w}}
\def \Psih {\hat{\Psi}}
\def \Rcal {\psi}
\def \Gmeta {G_{\text{meta}}}
\def \wcirc {\mathring{\w}}
\def \vcirc {\mathring{\v}}
\newcommand{\op}{\mathrm{op}}
\newcommand{\loneop}{\mathrm{\ell_1,op}}
\newcommand \ind[1]{\mathbf{1}_{#1}}
\newcommand \term[1]{\mathtt{term}~(\mathtt{#1})}
\newcommand \DR[2]{\D_{\psi}(#1, #2)}
\newcommand \sbr[1]{\left( #1 \right)}
\newcommand \mbr[1]{\left[ #1 \right]}
\DeclarePairedDelimiter\Fnorm{\lVert}{\rVert_{\mathrm{F}}}
\DeclarePairedDelimiter\opnorm{\lVert}{\rVert_{\op}}
\DeclarePairedDelimiter\Loneopnorm{\lVert}{\rVert_{\loneop}}
\begin{document}
\title{Non-stationary Online Learning with Memory and
Non-stochastic Control}

\author{\name Peng Zhao \email zhaop@lamda.nju.edu.cn \\
     \addr National Key Laboratory for Novel Software Technology\\ 
   Nanjing University, Nanjing 210023, China \AND
   \name Yu-Hu Yan \email yanyh@lamda.nju.edu.cn \\
     \addr National Key Laboratory for Novel Software Technology \\
     Nanjing University, Nanjing 210023, China\AND
   \name Yu-Xiang Wang \email yuxiangw@cs.ucsb.edu \\
   \addr Department of Computer Science\\
   University of California, Santa Barbara, CA 93106, USA \AND
   \name Zhi-Hua Zhou \email zhouzh@lamda.nju.edu.cn \\
     \addr National Key Laboratory for Novel Software Technology\\
   Nanjing University, Nanjing 210023, China}
\editor{Shipra Agrawal}
\maketitle

\begin{abstract}We study the problem of Online Convex Optimization (OCO) with memory, which allows loss functions to depend on past decisions and thus captures temporal effects of learning problems. In this paper, we introduce \emph{dynamic policy regret} as the performance measure to design algorithms robust to non-stationary environments, which competes algorithms' decisions with a sequence of changing comparators. We propose a novel algorithm for OCO with memory that provably enjoys an \emph{optimal} dynamic policy regret in terms of time horizon, non-stationarity measure, and memory length. The key technical challenge is how to control the \emph{switching cost}, the cumulative movements of player's decisions, which is neatly addressed by a novel switching-cost-aware online ensemble approach equipped with a new meta-base decomposition of dynamic policy regret and a careful design of meta-learner and base-learner that explicitly regularizes the switching cost. The results are further applied to tackle non-stationarity in \emph{online non-stochastic control}~\citep{ICML'19:online-control}, i.e., controlling a linear dynamical system with adversarial disturbance and convex cost functions. We derive a novel gradient-based controller with dynamic policy regret guarantees, which is the first controller provably competitive to a sequence of changing policies for online non-stochastic control.
\end{abstract}
\begin{keywords}
  online learning, online convex optimization with memory, online non-stochastic control, non-stationary environments, dynamic policy regret, online ensemble
\end{keywords}

\section{Introduction}
\label{sec:introduction}
Online Convex Optimization (OCO)~\citep{book'12:Shai-OCO,book'16:Hazan-OCO} is a versatile model of learning in adversarial environments, which can be regarded as a sequential game between a player and an adversary (environments). At each round, the player makes a prediction from a convex set $\w_t \in \W \subseteq \R^d$, the adversary simultaneously selects a convex loss $f_t: \W \mapsto \R$, and the player incurs a loss $f_t(\w_t)$. The goal of the player is to minimize the cumulative loss. The framework is found useful in a variety of disciplines including learning theory, game theory, and  optimization, etc~\citep{book/Cambridge/cesa2006prediction}.

The standard OCO framework considers only \emph{memoryless} adversary, in the sense that the resulting loss is only determined by the player's current prediction without involving past ones. In real-world applications, particularly those related to online decision making, it is often the case that past predictions/decisions would also contribute to the current loss, which makes the standard OCO framework not viable. To remedy this issue, Online Convex Optimization with Memory (OCO with Memory) was proposed as a simplified and elegant model to capture the temporal effects of learning problems~\citep{TIT'02:OCOwithMemory,NIPS'15:OCOmemory}. Specifically, at each round, the player makes a prediction $\w_t \in \W$, the adversary chooses a loss function $f_t: \W^{m+1} \mapsto \R$, and the player will then suffer a loss $f_t(\w_{t-m},\ldots,\w_t)$. Notably, now the loss function depends on both current and past predictions. The parameter $m$ is the memory length, and evidently the OCO with memory model reduces to the standard memoryless OCO when memory length $m=0$. The performance measure for OCO with memory is \emph{policy regret}~\citep{ICML'12:policy-regret}, defined as 
\begin{equation}
\label{eq:policy-Reg}
  \textnormal{Regret}_T = \sum_{t=1}^T f_t(\w_{t-m:t}) - \min_{\v \in  \W} \sum_{t=1}^T f_t(\v,\ldots,\v),
\end{equation}
where throughout the paper we adopt the notation $\a_{i:j}$ to denote the vector sequence $\a_i,\ldots,\a_j$. We start the index from $1$ for convenience. Recent studies apply online learners with provable low policy regret to a variety of related problems~\citep{COLT'18:smoothed-OCO,ICML'19:online-control,ALT'19:DanielyMansour,NIPS'20:chen-switch-constrain}. However, the policy regret~\eqref{eq:policy-Reg} only measures the performance versus a \emph{fixed} comparator and is thus not suitable for learning in non-stationary and open environments~\citep{book/mit/sugiyama2012machine,zhou:openML}. For instance, in the recommendation system, the users' interest may change when looking through the product pages; in the traffic flow scheduling, the traffic network pattern changes throughout the day. Therefore, it is necessary to design online decision-making algorithms with robustness to non-stationary environments. To this purpose, we introduce the \emph{dynamic policy regret} to guide algorithm design, measuring the competitive performance against an arbitrary sequence of \emph{time-varying} comparators $\v_1,\ldots,\v_T \in \W$, defined as 
\begin{equation}
\label{eq:dynamic-policy-Reg}
  \textnormal{D-Regret}_T(\v_{1:T}) = \sum_{t=1}^T f_t(\w_{t-m:t}) - \sum_{t=1}^{T} f_t(\v_{t-m:t}).
\end{equation}
The upper bound of $\textnormal{D-Regret}_T(\v_{1:T})$ should be a function of the comparator sequence $\v_{1:T}$, while the algorithm is agnostic to the choice of comparators. The proposed measure is very general---it subsumes  static policy regret~\eqref{eq:policy-Reg} as a special case when comparators become the best predictor in hindsight, i.e., $\v_{1:T} = \v^* \in \argmin_{\v \in  \W} \sum_{t=1}^T f_t(\v,\ldots,\v)$. Therefore, dynamic policy regret is a more stringent measure than standard policy regret and algorithms that optimize it are more robust to non-stationary environments.

The fundamental challenge of dynamic policy regret optimization is how to simultaneously compete with all comparator sequences with vastly different levels of non-stationarity. Our approach builds upon recent advance of non-stationary online learning~\citep{NIPS'18:Zhang-Ader,NIPS'20:sword, JMLR:sword++} to hedge the uncertainty via the meta-base online ensemble structure, along with several new ingredients specifically designed for the OCO with memory setting. In particular, it is essential to control the \emph{switching cost} for OCO with memory, the cumulative movement of player's predictions. The amount is relatively easy to control in static policy regret~\citep{NIPS'15:OCOmemory}, yet becomes much harder in dynamic policy regret and could even scale linearly due to the meta-base online ensembles structure. Intuitively, online algorithms minimizing dynamic regret necessitate maintaining a certain  probability of aggressive movement to catch up with potential changes within non-stationary environments, which results in tensions between dynamic regret and switching cost. We elegantly address the difficulty by proposing a \emph{switching-cost-aware online ensemble} approach. Our approach features a novel meta-base decomposition of dynamic policy regret and a switching-cost-regularized surrogate loss, which avoids directly handling switching cost altogether but regularizes the switching cost to meta-learner and base-learner instead. Our proposed online-ensemble algorithm provably enjoys an \emph{optimal} $\O(\sqrt{T(1+P_T)})$ dynamic policy regret, where $P_T = \sum_{t=2}^{T} \norm{\v_{t-1} - \v_t}_2$ denotes the unknown path length of comparators. As a byproduct, our result can serve as a solution for minimizing dynamic regret of \emph{online convex optimization with switching cost}, a variant of classic OCO setting by penalizing switching cost of returned decisions~\citep{MLJ'99:transaction-cost,COLT'14:higher-switching,COLT'18:smoothed-OCO}. Specifically, consider the OCO problem with online functions $h_1,\ldots,h_T$ with $h_t: \W \mapsto \R$. Denote by $\w_1,\ldots,\w_T$ the returned decisions by our algorithm. Then, we have $\sum_{t=1}^T h_t(\w_t) - \sum_{t=1}^T h_t(\v_t) + \lambda \sum_{t=2}^T \norm{\w_t - \w_{t-1}}_2 \leq \O(\sqrt{\lambda T(1+P_T)})$, where $P_T$ is the path length as defined above. We also establish the lower bound to show the \emph{minimax optimality} in terms of switching-cost coefficient $\lambda$, time horizon $T$, and path length $P_T$. Compared to our conference paper~\citep{AISTATS'22:scream}, the current result improves the dependence in the memory parameter $\lambda$ to be optimal, which is achieved via a novel usage of the laze update mechanism.
 
The results of OCO with memory yield an important application in online decision-making problems. Specifically, we investigate the problem of \emph{online non-stochastic control}~\citep{ICML'19:online-control}, i.e., controlling a linear dynamical system with adversarial (non-stochastic) disturbance and adversarial convex cost functions. Online non-stochastic control has attracted much recent research attention due to its relaxed assumptions on  disturbances and flexibility of cost functions. Existing studies mainly focus on optimizing static policy regret, whereas the optimal controller of each round would naturally change over iterations since the disturbances and cost functions both change adversarially. Therefore, it is necessary to investigate \emph{dynamic policy regret}, which competes controller's performance with time-varying benchmark controllers. By adopting the ``disturbance-action'' policy parameterization~\citep{ICML'19:online-control}, online non-stochastic control is reduced to OCO with memory, and thus its dynamic policy regret can be optimized by a similar meta-base online ensemble structure as developed before. Our designed controller attains an $\Ot(\sqrt{T(1+P_T)})$ dynamic policy regret, where $P_T$ measures the fluctuation of compared controllers. To the best of our knowledge, this is the first controller competitive to a sequence of changing ``disturbance-action'' policies. Given that our techniques for OCO with memory provide a provable way to handle the memory effects of past decisions, we anticipate that they would have broader applications in online decision-making problems.

The main contributions of this paper are summarized as follows.
\begin{itemize}[noitemsep,nolistsep]
  \item We introduce \emph{dynamic policy regret} as the performance measure to guide the algorithm design of OCO with memory and online non-stochastic control to enhance the robustness of online algorithms to non-stationary environments.
  \item We propose a novel algorithm for OCO with memory, which enjoys an \emph{optimal} dynamic policy regret of order $\O(\sqrt{T(1+P_T)})$. To achieve this, several key algorithmic ingredients are designed to handle unknown environments and control switching cost.
  \item The results are further applied to the problem of online non-stochastic control, yielding an online controller with $\Ot(\sqrt{T(1+P_T)})$ dynamic policy regret, which is the first online controller competitive with a sequence of \emph{time-varying} policies.
\end{itemize}

In the following, we first review related works in Section~\ref{sec:appendix-related-work} and then introduce some preliminaries in Section~\ref{sec:problem}. Next, we present the main results for OCO with memory and online non-stochastic control in Section~\ref{sec:OCOwMemory} and Section~\ref{sec:online-control}. Section~\ref{sec:appendix-experiment} reports the experiments. We finally conclude the paper in Section~\ref{sec:conclusion}. All the proofs are included in appendices. \section{Related Work}
\label{sec:appendix-related-work}
In this section, we briefly discuss related works on OCO with memory, online non-stochastic control, and dynamic regret minimization for online learning.

\paragraph{OCO with Memory.} OCO with memory is initiated by~\citet{TIT'02:OCOwithMemory}, who prove an $\O(T^{2/3})$ policy regret by a blocking technique. Later,~\citet{NIPS'15:OCOmemory} propose a simple gradient-based algorithm that provably achieves $\O(\sqrt{T})$ and $\O(\log T)$ policy regret for convex and strongly convex functions, respectively. Recent study discloses that the policy regret of OCO with memory over exp-concave functions is at least $\Omega(T^{1/3})$~\citep[Theorem 2.3]{NIPS'20:max-control}. One of the key concepts of OCO with memory is \emph{switching cost}, the cumulative movement of decisions, which is also concerned in smoothed online learning~\citep{COLT'18:smoothed-OCO,NIPS'19:smoothed-OCO,AISTATS'19:smooth-LQR}, online learning with switching budget~\citep{COLT'18:switch-cost,NIPS'20:chen-switch-constrain,COLT'21:lazy-OCO,NeurIPS'21:OCO-CSC}. Online learning with memory is also studied in the prediction with expert advice setting~\citep{COLT'10:OCOmemory,TIT'14:Neu-memory,NIPS'13:switch-cost,COLT'18:switch-cost} and bandit settings~\citep{ICML'12:policy-regret,STOC'14:bandit-switch-cost,COLT'18:switch-cost,NIPS'19:graph-switch-cost}.

\paragraph{Online Non-stochastic Control.} Recently, there is a surge of interest to apply modern statistical and algorithmic techniques to the control problem. 
Online non-stochastic control is proposed by~\citet{ICML'19:online-control}, where the regret is chosen as the performance measure and the disturbance is allowed to be adversarially chosen. When online cost functions are convex and Lipschitz,~\citet{ICML'19:online-control} obtain an $\O(\sqrt{T})$ policy regret for known linear dynamical system by introducing the DAC parameterization and reducing the problem to OCO with memory.~\citet{ALT'20:control-Hazan} show an $\O(T^{2/3})$ policy regret for unknown system via system identification. In addition,~\citet{ICML'20:log-control} propose the online learning with advantages technique and obtain logarithmic regret for known system with quadratic cost and adversarial disturbance, whose results are strengthened by~\citet{NIPS'20:max-control} to accommodate arbitrary changing costs. All mentioned results are developed for fully observed system, and~\citet{COLT'20:control-improper} present a clear picture for non-stochastic control with partially observed systems. We are still witnessing a variety of recent advances, for example, non-stochastic control with bandit feedback~\citep{NIPS'20:bandit-control,NIPS'20:bandit-Koren}, adaptive regret minimization~\citep{arXiv'20:adaptive-OCOmemory,AISTATS'22:SA-OCOwMemory,NIPS'22:CA-OCOwMemory}, etc. We will present more discussions on the relationship between these works for adaptive regret minimization and our work (for dynamic regret minimization) at the end of this section. There are other related works studying non-stationary online control from the lens of competitive ratio~\citep{NIPS'20:smoothed-OCO,TAC'22:competitive-control} and robust control~\citep{arXiv'20:Goel,L4DC'22:Goel}. In addition, there have been considerable efforts dedicated to the broader field of online (stochastic) control over the past several decades. While only a handful can be mentioned here~\citep{guo1995performance, COLT'97:PACcontrol, COLT'11:LQC,ICML'18:Cohen,DeanMMRT:Control20,COLT'22:efficientOLC,NIPS'22:optimalALC}, interested readers can refer to the references therein to explore more recent developments in this area.

\paragraph{Dynamic Regret.} Benchmarking the regret in term of changing comparators dates back to early development of prediction with expert advice~\citep{journals/ml/HerbsterW98,JMLR'01:Herbster}, in which they studied a special form of dynamic regret that supports the comparators change for at most $S$ times (often referred to as $S$-tracking/shifting/switching regret)~\citep{journals/ml/HerbsterW98,JMLR'01:Herbster,JMLR'02:bousquet-dynamic,conf/nips/Cesa-BianchiGLS12,ICML'16:GyorgyS-shiftregret,NIPS'16:Wei-non-stationary-expert,NIPS'19:Zheng,COLT'22:corral_T}. For online convex optimization, \citet{ICML'03:zinkvich} pioneers the study of dynamic regret and shows that OGD can attain an $\O(\sqrt{T}(1+P_T))$ dynamic regret.~\citet{ICML'18:zhang-dynamic-adaptive} show that the minimax lower bound is $\Omega(\sqrt{T(1+P_T)})$ and close the gap by proposing an algorithm with an $\O(\sqrt{T(1+P_T)})$ regret. Recent works achieve problem-dependent guarantees by exploiting smoothness and incorporating the optimistic online learning techniques~\citep{NIPS'20:sword,JMLR:sword++}, and other works obtain an improved rate by exploiting exp-concavity or strong convexity~\citep{COLT'21:baby-strong-convex,AISTATS'22:sc-proper}. More results for dynamic regret minimization have been developed in bandit convex optimization~\citep{JMLR'21:BCO}, Markov decision processes~\citep{ICML'22:mdp}, online label shift problems~\citep{NeurIPS'22:label_shift,arxiv'23:baby-OLS}, time-varying games~\citep{ICML'22:TVgame,ICML'23:SMontoneGame}, etc. We note that the dynamic regret measure studied in this paper is also called the \emph{universal} dynamic regret, in the sense that the regret guarantee holds universally against any comparator sequence in the domain. Another special variant called the \emph{worst-case} dynamic regret is frequently studied in the literature~\citep{OR'15:dynamic-function-VT,AISTATS'15:dynamic-optimistic,CDC'16:dynamic-sc,NIPS'17:zhang-dynamic-sc-smooth,NIPS'19:Wangyuxiang,UAI'20:simple,L4DC'21:sc_smooth}, which specifies comparators as the optimizers of online functions. The worst-case dynamic regret is less general than the universal one. Indeed, both worst-case dynamic regret and static regret are special cases of the universal dynamic regret with different choices of comparators, and we refer the reader to~\citep{JMLR:sword++} for more elaborations.

\paragraph{More Discussions.} Online non-stochastic control in non-stationary environments is also recently studied via the measure of \emph{adaptive regret}~\citep{ICML'09:Hazan-adaptive,ICML'15:Daniely-adaptive}---the regret compared to the best policy on any interval in the time horizon.~\citet{arXiv'20:adaptive-OCOmemory} propose the first controller with an $\Ot(\sqrt{T})$ expected adaptive regret on any interval in the total horizon. The result is strengthened in a recent work (concurrent to our paper)~\citep{AISTATS'22:SA-OCOwMemory}, which presents a strongly adaptive controller with an $\Ot(\sqrt{\abs{\I}})$ deterministic adaptive regret on any interval $\I \subseteq [T]$. The two papers and our work all study non-stationary online control, however, the concerned measures and used techniques are completely different. \textbf{(1)}~Measures: dynamic regret examines the global behavior to ensure a competitive performance with time-varying compared polices, whereas adaptive regret focuses on the local behavior with respect to a fixed strategy. Even though a black-box reduction from dynamic regret to adaptive regret has been known in the simpler setting of prediction with expert advice (i.e., online linear optimization over the simplex)~{\citep[Theorem~4]{COLT'15:Luo-AdaNormalHedge}}, the relationship between strongly adaptive regret and universal dynamic regret for online convex optimization over the general setup~\citep[Section~5]{IJCAI:2020:Zhang} remains highly \emph{unclear}, which is even more vague when further taking the switching cost into account. \textbf{(2)}~Techniques: optimizing either dynamic regret or adaptive regret requires the meta-base online ensemble structure to deal with uncertainty of the non-stationary environments. However, the specific techniques, especially the way to control switching cost, exhibit significant difference.~\citet{arXiv'20:adaptive-OCOmemory} leverage the Follow-the-Leading-History framework~\citep{ICML'09:Hazan-adaptive} with a shrinking technique~\citep{COLT'10:OCOmemory} to keep previous experts unchanged with a certain probability to reduce the switching cost, so their result holds in expectation only. The improved result of $\O(\sqrt{\abs{\I}})$ deterministic strongly adaptive regret bound~\citep{AISTATS'22:SA-OCOwMemory} is achieved by a very different framework drawn inspirations from parameter-free online learning~\citep{ICML'20:Ashok}. By contrast, the key ingredients of our approach are the novel meta-base decomposition and the switching-cost-regularized loss, which avoid explicitly handling the switching cost of final decisions but directly control the switching cost of meta-algorithm and individual base-algorithm. These mechanisms finally lead to a deterministic dynamic policy regret guarantee for our methods.  \section{Preliminaries}
\label{sec:problem}
This section introduces preliminaries for online convex optimization (OCO) with memory.

\paragraph{Problem Setup.} OCO with memory is a variant of standard OCO framework to capture the long-term effects of past decisions, whose protocol is shown below.

\begin{algorithmic}[1]
  \FOR{$t = m+1, \ldots, T$}
    \STATE the player chooses a decision $\w_t \in \W$;
    \STATE the adversary reveals the loss $f_t: \W^{m+1} \mapsto \R$ that applies to last $m+1$ decisions;
    \STATE the player suffers a loss of $f_t(\w_{t-m},\ldots,\w_t)$; 
  \ENDFOR
\end{algorithmic}
In above, $m$ is the memory length, and $f_t: \W^{m+1} \mapsto \R$ is convex in memory, which means its unary function $\f_t(\w) = f_t(\w,\ldots,\w)$ is convex in $\w$. Clearly, OCO with memory recovers the standard memoryless OCO when $m=0$. The standard measure is policy regret~\citep{ICML'12:policy-regret} as defined in~\eqref{eq:policy-Reg}. We introduce a strengthened measure called \emph{dynamic policy regret} to compete with changing comparators as defined in~\eqref{eq:dynamic-policy-Reg}. The dynamic policy regret upper bound usually involves the path length $P_T = \sum_{t=2}^{T} \norm{\v_{t-1} - \v_t}_2$, which measures the variation of comparators and thus captures the environmental non-stationarity. Throughout the paper, $\O(\cdot)$-notation is used to express regret upper bound as a function of $T$ and $P_T$, and $\Ot(\cdot)$-notation omits logarithmic factors in $T$. To make it clear, we mention that the $\O(\cdot)$-notation does not hide $\log \log P_T$ or $\log \log T$ terms, even though they are indeed small.

\paragraph{Assumptions.} Next, we introduce several standard assumptions~\citep{NIPS'15:OCOmemory}. For simplicity we focus on the $\ell_2$-norm and the extension to general primal-dual norms is straightforward.
\begin{myAssumption}[coordinate-wise Lipschitzness]
\label{assume:Lipschitz}
The online function $f_t: \W^{m+1} \mapsto \R$ is $L$-coordinate-wise Lipschitz, i.e., $\abs{f_t(\x_0,\ldots,\x_m) - f_t(\y_0,\ldots,\y_m)} \leq L \sum_{i=0}^{m} \norm{\x_i - \y_i}_2$.
\end{myAssumption}

\begin{myAssumption}[bounded gradient]
\label{assume:bounded-gradient}
The gradient norm of the unary loss is at most $G$, i.e., for all $\w \in \W$ and $t \in [T]$, $\norm{\nabla \f_t(\w)}_2 \leq G$.
\end{myAssumption}

\begin{myAssumption}[bounded domain]
\label{assume:bounded-norm}
The domain $\W$ is convex, closed, and satisfies $\norm{\w -\w'}_2 \leq D$ for all $\w, \w' \in \W$. For convenience, we also assume $\mathbf{0} \in \W$.
\end{myAssumption}

\paragraph{Static Regret of OCO with Memory.} This part briefly reviews the result of static policy regret.~\citet{NIPS'15:OCOmemory} propose a simple approach based on the gradient descent based on the observation that when online functions are coordinate-wise Lipschitz, the policy regret can be upper bounded by the switching cost and the vanilla regret over the unary loss, formally,
\begin{align*}
  \sum_{t=1}^T f_t(\w_{t-m:t}) - \min_{\v \in \W} \sum_{t=1}^T \f_t(\v)  \leq \lambda \sum_{t=2}^T \norm{\w_{t} - \w_{t-1}}_2 + \sum_{t=1}^T \f_t(\w_t) - \min_{\v \in \W} \sum_{t=1}^T \f_t(\v),
\end{align*}
where $\lambda = m^2 L$. The first term is the \emph{switching cost} measuring the cumulative movement of  decisions $\w_{1:T}$ and the remaining term is the standard regret of memoryless OCO. Consequently, it is natural to perform Online Gradient Descent (OGD)~\citep{ICML'03:zinkvich} over the unary loss $\tilde{f}_t$, i.e., $\w_{t+1} = \Pi_{\W}[\w_t - \eta \nabla \f_t(\w_t)]$, where $\eta > 0$ is the step size and $\Pi_{\W}[\cdot]$ denotes the projection onto the nearest point in $\W$. It is well-known that with an appropriate step size OGD enjoys an $\O(\sqrt{T})$ regret in memoryless OCO. Further,~\citet{NIPS'15:OCOmemory} show that the produced decisions move sufficiently slowly. Indeed, switching cost satisfies $\sum_{t=2}^T \norm{\w_{t} - \w_{t-1}}_2 \leq \O(\eta T)$, which will not affect the final regret order by choosing $\eta = \O(1/\sqrt{T})$. Combining both facts yields an $\O(\sqrt{T})$ static policy regret~\citep[Theorem 3.1]{NIPS'15:OCOmemory}. \section{OCO with Memory}
\label{sec:OCOwMemory}
This section presents dynamic policy regret of OCO with memory. We begin with the gentle case when the path length is known, and then handle the general case when it is unknown and present the overall result.

\subsection{A Gentle Start: known path length}

Similar to the static regret analysis mentioned in the last section, we first upper-bound the dynamic policy regret~\eqref{eq:dynamic-policy-Reg} in the following way:
\begin{align}
& \textnormal{D-Regret}_T(\v_{1:T}) \leq \sum_{t=1}^T \f_t(\w_t) -  \sum_{t=1}^T \f_t(\v_t) + \lambda \sum_{t=2}^T \norm{\w_{t} - \w_{t-1}}_2 + \lambda \sum_{t=2}^T \norm{\v_{t} - \v_{t-1}}_2. \label{eq:dynamic-regret-upper-bound}
\end{align}
There are three terms in the upper bound: dynamic regret of unary functions, switching cost of final decisions, and switching cost of comparators. Therefore, it is natural to deploy OGD over unary functions, and we can prove the following dynamic policy regret guarantee. The proof can be found in Appendix~\ref{sec:proof-OCOwMemory-static}.
\begin{myThm}
\label{thm:dynamic-regret-OCOwMemory}
Under Assumptions~\ref{assume:Lipschitz}--\ref{assume:bounded-norm}, running OGD over unary losses $\f_1,\ldots,\f_T$ ensures 
\begin{equation}
  \label{eq:dynamic-OGD-untuned}
  \textnormal{D-Regret}_T(\v_{1:T}) = \sum_{t=1}^T f_t(\w_{t-m:t}) - \sum_{t=1}^{T} f_t(\v_{t-m:t}) \leq \O\Big(\eta T + \frac{1+ P_T}{\eta} + P_T\Big)
\end{equation}
for any comparator sequence $\v_1,\ldots,\v_T \in \W$, where $P_T = \sum_{t=2}^{T} \norm{\v_t - \v_{t-1}}_2$ is the path length measuring fluctuation of the comparator sequence.
\end{myThm}
Suppose the value of path length $P_T$ were known a priori, Theorem~\ref{thm:dynamic-regret-OCOwMemory} indicates an optimal $\O(\sqrt{T(1+P_T)})$ dynamic policy regret by setting step size as $\eta = \O(\sqrt{(1+P_T)/T})$, matching the $\Omega(\sqrt{T(1+P_T)})$ lower bound of memoryless OCO~\citep{NIPS'18:Zhang-Ader}. However, this step size tuning is not realistic because we cannot attain the prior information of path length $P_T = \sum_{t=2}^{T} \norm{\v_{t-1} - \v_t}_2$. Indeed, since the dynamic policy regret measure holds for any comparator sequence $\v_1,\ldots,\v_T$ that can be arbitrarily selected in the feasible domain $\W$, the path length $P_T$ essentially captures the environmental non-stationarity and is \emph{unknown} to the player. In Section~\ref{sec:OCOmemory-challenge}, we will further elucidate the challenge of designing online algorithms that enjoy optimal dynamic policy regret and meanwhile do not require prior knowledge of environmental non-stationarity, especially due to the switching cost arising in OCO with memory. In Section~\ref{sec:new-decompose}, we will present our solution by introducing several novel algorithmic ingredients. Finally, in Section~\ref{sec:optimal-memory} we further improve the algorithm to achieve an optimal memory dependence along with the corresponding lower bound argument to show the minimax optimality of our results.

\subsection{Challenge: unknown path length and switching cost of OCO with memory}
\label{sec:OCOmemory-challenge}
As mentioned in the last paragraph, the fundamental difficulty of attaining optimal dynamic policy regret lies in the infeasible step size tuning that depends on the unknown comparator sequence $\v_1,\ldots,\v_T$. We emphasize that such an unpleasant dependence cannot be removed by the well-known doubling trick~\citep{JACM'97:doubling-trick}, because we cannot monitor the empirical value of path length, $P_t = \sum_{s=2}^t \norm{\v_s - \v_{s-1}}_2$, as comparators $\v_1,\ldots,\v_T$ can be arbitrarily chosen in the feasible domain $\W$ and are entirely unknown to the learner. Similar challenge also emerges in recent studies of memoryless non-stationary online learning~\citep{NIPS'18:Zhang-Ader,NIPS'20:sword}, inspired by which we employ the meta-base online ensemble framework to design a two-layer approach to optimize the dynamic policy regret. Below, we will first briefly review the framework and then elucidate the challenge of its application in OCO with memory, mainly due to the tension between dynamic regret and switching cost, which necessitates additional new ideas. 

\paragraph{Meta-base Online Ensemble Framework.} The framework admits a two-layer structure and is essentially an online ensemble method~\citep{book'12:ensemble-zhou,thesis:zhao2021-eng}. We first need to design an appropriate pool of candidate step sizes $\H = \{\eta_1,\ldots,\eta_N\}$ to ensure the existence of a step size $\eta_{i^*}$ that approximates optimal step size $\eta_*$ well. Then, multiple base-learners $\B_1,\ldots,\B_N$ are maintained, and each performs base-algorithm (for example, OGD) with a step size $\eta_i \in \H$ and generates the decision sequence $\w_{1,i},\w_{2,i},\ldots,\w_{T,i}$. Finally, a meta-learner, supposed to be able to track the best base-learner, is used to combine all intermediate results of base learners to produce final output $\w_{1},\w_{2},\ldots,\w_{T}$, where $\w_t = \sum_{i=1}^{N} p_{t,i} \w_{t,i}$. The final output of meta-base algorithm can well approximate the decision sequence of the best base-learner (the one with  near-optimal step size $\eta_{i^*}$) and thus ensure a good dynamic regret bound. 

Indeed, by employing OGD over unary functions $\f_1,\ldots,\f_T$ and designing a proper step size pool $\H$, it is not hard to prove a dynamic regret bound over unary functions, that is, $\sum_{t=1}^T \f_t(\w_t) -  \sum_{t=1}^T \f_t(\v_t) \leq \O(\sqrt{T(1+P_T)})$. Then, by~\eqref{eq:dynamic-regret-upper-bound} we have
\begin{align*}
  &\textnormal{D-Regret}_T(\v_{1:T}) \leq \O(\sqrt{T(1+P_T)}) + \O(P_T) + \sum_{t=2}^T \norm{\w_{t} - \w_{t-1}}_2.
\end{align*}
So we are in the position to control \emph{switching cost}. Below, we demonstrate that a vanilla deployment of the meta-base method may move too fast to achieve a sublinear switching cost and will ruin the overall policy regret bound, which necessitates additional novel algorithmic ingredients to better balance the dynamic regret and switching cost. 

\paragraph{Switching Cost.} The switching cost is the pivot of the analysis for OCO with memory.~\citet{NIPS'15:OCOmemory} demonstrate that many popular OCO algorithms for static regret minimization naturally produce slow-moving decisions, however, it becomes more difficult in dynamic regret. Intuitively, for dynamic online algorithms, it is necessary to keep some probability of aggressive movement in order to catch up with the potential changes of non-stationary environments, which results in \emph{tensions between dynamic regret and switching cost}. Formally, denote by $\w_t = \sum_{i=1}^{N} p_{t,i} \w_{t,i}$ the final decision returned by the two-layer approach, then the switching cost can be bounded by
\begin{align}
  {} & \sum_{t=2}^{T} \norm{\w_{t} - \w_{t-1}}_2 \leq D \sum_{t=2}^{T} \norm{\p_t - \p_{t-1}}_1 + \sum_{t=2}^{T} \sum_{i=1}^{N} p_{t,i} \norm{\w_{t,i} - \w_{t-1,i}}_2. \label{eq:sc-decompose}
\end{align}
A formal proof is presented in Appendix~\ref{sec:sc-decompose}. In the upper bound, the first term $\sum_{t=2}^{T} \norm{\p_t - \p_{t-1}}_1$ is the switching cost of meta-learner, which is at most $\O(\sqrt{T})$. However, the second term $\sum_{t=2}^{T} \sum_{i=1}^{N} p_{t,i} \norm{\w_{t,i} - \w_{t-1,i}}_2$, the weighted sum of switching cost of all base-learners, becomes the major barrier, which could be very large and even grow linearly over iterations. Specifically, for each base-learner $\B_i$ (OGD with step size $\eta_i$), its switching cost is at most $\O(\eta_i T)$; additionally, to ensure a coverage of the optimal step size, the pool of candidate step sizes is usually set as $\H = \{ \eta_i = \O(2^i \cdot T^{-1/2}), i \in [N]\}$ such that $\eta_1 = \O(T^{-1/2})$ and $\eta_N = \O(1)$. Therefore, the base-learner with larger step sizes would incur unacceptable switching cost, for instance, the switching cost of base-learner $\B_N$ could grow linearly, of order $\O(T)$. As a result, the term $\sum_{t=2}^{T} \sum_{i=1}^{N} p_{t,i} \norm{\w_{t,i} - \w_{t-1,i}}_2$ could be enlarged by base-learners whose step sizes are too large and therefore is difficult to control.

\subsection{Algorithmically Enforcing Low Switching Cost: a new meta-base decomposition}
\label{sec:new-decompose}
To resolve the challenge of switching cost in dynamic policy regret minimization, we propose a novel switching-cost-aware online ensemble approach. Specifically, we start with proposing the following new meta-base regret decomposition to avoid directly controlling switching cost of final predictions or controlling switching cost of every base-learner:
\begin{align}
  & \sum_{t=1}^T \f_t(\w_t) -  \sum_{t=1}^T \f_t(\v_t) + \lambda \sum_{t=2}^T \norm{\w_{t} - \w_{t-1}}_2 \label{eq:new-decompose}\\
  & \leq \sum_{t=1}^T \inner{\nabla \f_t(\w_t)}{\w_t - \v_t} + \lambda D \sum_{t=2}^{T} \norm{\p_t - \p_{t-1}}_1 + \lambda \sum_{t=2}^{T} \sum_{i=1}^{N} p_{t,i} \norm{\w_{t,i} - \w_{t-1,i}}_2 \nonumber\\
  & = \underbrace{\sum_{t=1}^T \big( \inner{\p_t}{\ellb_t}  - \ell_{t,i} \big) + \lambda D \sum_{t=2}^{T} \norm{\p_t - \p_{t-1}}_1}_{\meta}  + \underbrace{\sum_{t=1}^T  \big( g_t(\w_{t,i}) - g_t(\v_t) \big) + \lambda \sum_{t=2}^T \norm{\w_{t,i} - \w_{t-1,i}}_2}_{\base}.\nonumber
\end{align}
The first inequality follows from the convexity of unary functions and switching cost decomposition~\eqref{eq:sc-decompose}, and for convenience we introduce the notation of linearized loss $g_t(\w) = \inner{\nabla \f_t(\w_t)}{\w}$. The second equation is crucial, in which the key ingredient is the introduced \emph{switching-cost-regularized surrogate loss} $\ellb_t \in \R^N$ for the meta-algorithm, defined as 
\begin{equation}
\label{eq:surrogate-loss-meta}
\ell_{t,i} \define g_t(\w_{t,i}) + \lambda \norm{\w_{t,i} - \w_{t-1,i}}_2.
\end{equation}
Intuitively, the base-learner's switching cost is now taken into account when evaluating its performance---the meta-learner will impose more penalty on base-learners with larger switching cost. Technically, the key improvement upon previous analysis in~\eqref{eq:sc-decompose} lies in the switching cost term of the base-learner: we now only need to bound switching cost of a single base-learner $\sum_{t=2}^{T} \norm{\w_{t,i} - \w_{t-1,i}}_2$, which is to be contrasted to the switching cost of all the base-learners $\sum_{t=2}^{T} \sum_{i=1}^{N} p_{t,i} \norm{\w_{t,i} - \w_{t-1,i}}_2$. 

Furthermore, noting that the new meta-base decomposition~\eqref{eq:new-decompose} holds simultaneously for \emph{any} index $i \in [N]$, we can therefore choose the compared index as $i^*$ (the one with near-optimal step size) and the switching cost of this base-learner $\B_{i^*}$ is at most $\O(\eta_{i^*} T) = \O(\sqrt{T(1+P_T)})$. In other words, we successfully escape from those base-learners with unacceptably large step sizes, whose switching cost is too large to tolerate. 

Consequently, we can tackle switching cost in the meta-base methods with the help of the switching-cost-regularized technique. The rest is more or less standard. Specifically, the meta-base regret decomposition indicates the following requirements on the base-algorithm and meta-algorithm: 
\begin{itemize}
  \item base-algorithm needs to achieve low dynamic regret over unary functions and tolerate its own switching cost $\sum_{t=2}^T \norm{\w_{t,i} - \w_{t-1,i}}_2$;
  \item meta-algorithm needs to optimize the switching-cost-regularized loss to impose more penalty on base-learners with larger switching cost, and tolerate its own switching cost $\sum_{t=2}^T \norm{\p_{t} - \p_{t-1}}_1$.
\end{itemize}

Below, we outline the specific configurations of our switching-cost-aware online ensemble approach (including settings of step size pool, base-algorithm, and meta-algorithm) to fulfill  above requirements. 
\paragraph{Step Size Pool.} We initiate $N = \left\lceil \frac{1}{2} \log_2(1+T) \right\rceil + 1 = \O(\log T)$ base-learners, with step size pool set as
\begin{equation}
  \label{eq:step-size-pool}
  \H = \left\{\eta_i \givenn \eta_i = 2^{i-1}\cdot \sqrt{\frac{D^2}{(\lambda G + G^2)T}},\ i \in [N] \right\}.
\end{equation}

\paragraph{Base-algorithm.} The base-algorithm is chosen as OGD running over the linearized loss $\{g_t\}_{t=1:T}$. The switching cost of each base-learner can be safely controlled, as indicated by Theorem~\ref{thm:dynamic-regret-OCOwMemory}. More specifically, there are $N$ base-learners denoted by $\B_1,\ldots,\B_N$ and the base-learner $\B_i$ (with step size $\eta_i \in \H$) performs
\begin{equation*}
  \w_{t+1,i} = \Pi_{\W}[\w_{t,i} - \eta_i \nabla g_t(\w_{t,i})] = \Pi_{\W}[\w_{t,i} - \eta_i \nabla \tilde{f}_t(\w_t)].
\end{equation*}
The second equation is from $g_t(\w) = \inner{\nabla \f_t(\w_t)}{\w}$ and the update exhibits the computational advantage due to linearization: although multiple base-learners are performed, they share the same gradient and thus the algorithm only calculates one gradient per iteration, rather than $N$ gradients as was anticipated.

\paragraph{Meta-algorithm.} The meta-algorithm is set as the well-known Hedge algorithm~\citep{JCSS'97:boosting} \emph{running over the switching-cost-regularized loss}. The weight $\p_{t+1} \in \Delta_N$ is updated by $p_{t+1,i} \propto p_{t,i} \exp(-\epsilon \ell_{t,i})$, where $\ellb_{t} \in \R^N$ is the switching-cost-regularized surrogate loss defined in~\eqref{eq:surrogate-loss-meta} and $\epsilon > 0$ is the learning rate. Then, the meta-regret $\sum_{t=1}^T \big( \inner{\p_t}{\ellb_t} - \ell_{t,i} \big) + \lambda D \sum_{t=2}^{T} \norm{\p_t - \p_{t-1}}_1$, essentially the static regret with switching cost, can be well controlled with $\epsilon = \O(\sqrt{1/T})$. For technical reasons, we adopt a non-uniform initialization by setting $\p_1 \in \Delta_N$ with $p_{1,i} \propto 1/(i^2 + i)$. The dependence of learning rate on $T$ can be removed by either a time-varying tuning or doubling trick.

We finally remark that base-algorithm (OGD) and meta-algorithm (Hedge) can be understood in a unified view from the aspect of Online Mirror Descent (OMD)~\citep{book'93:nemirovskij-yudin,book'12:Shai-OCO,NIPS'11:universal-OMD}. OMD is a powerful online method accommodating general geometries and both OGD and Hedge are its special instances. We can generalize the dynamic policy regret of Theorem~\ref{thm:dynamic-regret-OCOwMemory} from OGD to OMD, and this can be used to extend all the results in this paper from $\ell_2$-norm to general primal-dual norms. More descriptions are supplied in Appendix~\ref{sec:appendix-proof-OMD}.

\begin{algorithm}[!t]
   \caption{\textbf{Scream}}
   \label{alg:scream}
\begin{algorithmic}[1]
  \REQUIRE{step size pool $\H = \{\eta_1,\ldots,\eta_N\}$, learning rate of meta-algorithm $\epsilon$}
  \STATE{Initialization: $\w_{1:m} \in \W$, $\w_{m,i} \in \W$, $\forall i \in [N]$; $\p_m \in \Delta_N$ with $p_{m,i} \propto 1/(i^2+i)$, $\forall i\in[N]$\\}
    \FOR{$t=m+1$ {\bfseries to} $T$}
      \STATE {Receive $\w_{t,i}$ from base-learner $\B_i$ for $i \in [N]$}
      \STATE {Submit the decision $\w_t = \sum_{i=1}^{N} p_{t,i}\w_{t,i}$}
      \STATE {Suffer a loss of $f_t(\w_{t-m},\ldots,\w_t)$}
      \STATE {Observe the online function $f_t:\W^{m+1} \mapsto \R$ that applies to last $m + 1$ decisions}\
      \STATE {Construct the linearized loss by $g_t(\w) = \inner{\nabla \f_t(\w_t)}{\w}$}
      \STATE {Construct the switching-cost-regularized loss $\ellb_t \in \R^N$ with $\ell_{t,i} = g_t(\w_{t,i}) + \lambda \norm{\w_{t,i} - \w_{t-1,i}}_2$ for $i \in [N]$}
      \STATE {Update the weight $\p_{t+1} \in \Delta_N$ according to $p_{t+1,i} \propto p_{t,i} \exp(-\epsilon \ell_{t,i})$}
      \STATE {Base-learner $\B_i$ updates the local decision by $\w_{t+1,i} = \Pi_{\W}[\w_{t,i} - \eta_i \nabla \f_t(\w_t)]$, $\forall i \in [N]$}
    \ENDFOR
\end{algorithmic}
\end{algorithm}

\paragraph{Overall Algorithm.} Combining all above ingredients, we propose the \underline{S}witching-\underline{C}ost-\underline{R}egularized \underline{E}nsemble \underline{A}lgorithm for OCO with \underline{M}emory (\textbf{Scream}) algorithm, which is based on online mirror descent and admits a two-layer meta-base online ensemble structure. Algorithm~\ref{alg:scream} presents overall procedures: each base-learner performs OGD with its step size as shown in Line 10; the meta-learner combines local decisions and updates the weight according to the switching-cost-regularized loss as described in Lines 4--9. The following theorem demonstrates that our algorithm can attain a favorable dynamic policy regret, striking a good balance between regret and switching cost.
\begin{myThm}
\label{thm:main-result-OCOmemory}
Under Assumptions~\ref{assume:Lipschitz}--\ref{assume:bounded-norm}, by setting the learning rate optimally of meta-algorithm as $\epsilon = \sqrt{2/((2\lambda + G)(\lambda + G)D^2T)}$ and the step size pool $\H$ as~\eqref{eq:step-size-pool}, our proposed Scream algorithm ensures that for any comparator sequence $\v_1, \ldots, \v_T \in \W$, we have
\begin{equation*}
  \textnormal{D-Regret}_T(\v_{1:T}) \leq \O \big(\sqrt{\lambda T(1+P_T)} + \lambda^{\frac{3}{4}} \sqrt{T} (1+ \log \log P_T) + \lambda P_T \big),
\end{equation*}
where $\lambda = m^2 L$ and $P_T = \sum_{t=2}^{T} \norm{\v_{t-1} - \v_t}_2$. So dynamic policy regret is $\O(\sqrt{T(1+P_T)})$. 
\end{myThm}

The proof of Theorem~\ref{thm:main-result-OCOmemory} is presented in Appendix~\ref{sec:appendix-proof-OCOmemory}.

\begin{myRemark}
Since the dynamic policy regret holds for any comparator sequence, by simply setting comparators as the fixed best decision in hindsight (now $P_T = 0$), our dynamic policy regret implies the $\O(\sqrt{T})$ static policy regret~\citep{NIPS'15:OCOmemory}. Second, when omitting the consideration of the $\lambda$-dependence, the dynamic regret bound simplifies to $\O(\sqrt{T(1+P_T)})$, which is \emph{minimax optimal} in terms of $T$ and $P_T$, as an $\Omega(\sqrt{T(1+P_T)})$ lower bound has been established for the dynamic regret of memoryless OCO~\citep{NIPS'18:Zhang-Ader}, which is a special case of OCO with memory when setting $m = 0$.
\end{myRemark} 

\begin{myRemark} We further examine the \emph{memory dependence} of the attained bounds. The dynamic policy regret in Theorem~\ref{thm:main-result-OCOmemory} exhibits a quadratic dependence on the memory length $m$ (i.e., linear dependence on $\lambda = m^2 L$). Recall that the dynamic policy regret is upper bounded by the dynamic regret of unary functions and switching cost of decisions (i.e., $\sum_{t=1}^T \f_t(\w_t) -  \sum_{t=1}^T \f_t(\v_t) + \lambda \sum_{t=2}^T \norm{\w_{t} - \w_{t-1}}_2$) as well as the switching cost/ path length of comparators (i.e., $\lambda \sum_{t=2}^T \norm{\v_{t} - \v_{t-1}}_2 = \lambda P_T$), namely,
\begin{align}
& \textnormal{D-Regret}_T(\v_{1:T}) \leq \underbrace{\sum_{t=1}^T \f_t(\w_t) -  \sum_{t=1}^T \f_t(\v_t) + \lambda \sum_{t=2}^T \norm{\w_{t} - \w_{t-1}}_2}_{\texttt{dynamic regret of OCO with switching cost}} + \underbrace{\lambda \sum_{t=2}^T \norm{\v_{t} - \v_{t-1}}_2}_{\texttt{path length}~(= \lambda P_T )}. \label{eq:dynamic-regret-upper-bound}
\end{align}
Notably, the last path length term is the variation of comparators and thus irrelevant to the algorithm, which already exhibits a quadratic memory dependence. As a result, in the following we will focus the memory dependence of the first two terms, which is essentially the \emph{dynamic regret of OCO with switching cost}. Indeed, our conference version~\citep{AISTATS'22:scream} gives an $\O(\sqrt{\lambda T(1+P_T)} + \lambda \sqrt{T} (1+ \log \log P_T)) \le \O(\lambda \sqrt{T(1+P_T)})$ regret bound,\footnote{Note that the $\log \log P_T$ term can be dominated by $\sqrt{P_T}$ and is thus absorbed within the $\O(\cdot)$-notation.} whereas Theorem~\ref{thm:main-result-OCOmemory} of this paper improves the result to $\O(\sqrt{\lambda T(1+P_T)} + \lambda^{3/4} \sqrt{T} (1+ \log \log P_T)) \le \O(\lambda^{3/4} \sqrt{T(1+P_T)})$ through a refined \emph{analysis} (there is no modification on the algorithm), achieving an $\lambda^{1/4}$ improvement.

As a benefit, when choosing a fixed comparator, Theorem~\ref{thm:main-result-OCOmemory} implies an $\O(\lambda^{3/4} \sqrt{T})$ static regret, improving upon the $\O(\lambda \sqrt{T})$ static regret implication based on the dynamic policy regret in the conference version~\citep{AISTATS'22:scream}, where $\lambda = \O(m^2)$ is the squared memory length. Nevertheless, the best static policy regret for OCO with switching cost is  $\O(\sqrt{\lambda T}) = \O(m\sqrt{T})$, which enjoys a linear dependence on the memory length~\citep{NIPS'15:OCOmemory} (see discussions in Appendix~\ref{sec:appendix-memory-dependency} for details), and our result still exhibits a gap here. Therefore, we are wondering what the optimal memory dependence of dynamic regret for OCO with switching cost is. We answer this question in the next subsection.
\end{myRemark}

\subsection{Improved Algorithm with an Optimal Memory Dependence}
\label{sec:optimal-memory}
In this part, we resolve the question raised at the end of the last subsection. Specifically, we first illustrate the failure of Scream algorithm in achieving optimal memory dependence; and then we propose an improved algorithm building upon Scream (Algorithm~\ref{alg:scream}) that enjoys an $\O(\sqrt{\lambda T(1+P_T)})$ dynamic regret for OCO with switching cost, hence matching the $\O(\sqrt{\lambda T})$ static regret~\citep{NIPS'15:OCOmemory} when choosing a fixed comparator such that $P_T = 0$. We finally supply the lower bound to demonstrate the minimax optimality of our attained upper bound in terms of the memory dependence.

\paragraph{Failure of Scream Algorithm.} Inspecting the proof of Theorem~\ref{thm:main-result-OCOmemory}, we can observe that the sub-optimality of memory dependence mainly comes from the meta-regret $\sum_{t=1}^T \inner{\p_t}{\ellb_t} - \sum_{t=1}^T \ell_{t,i} + \lambda D \sum_{t=2}^{T} \norm{\p_t - \p_{t-1}}_1$ (see the decomposition in~\eqref{eq:new-decompose} for more details). Specifically, consider the switching cost of meta-algorithm, which can be upper bounded as follows:
\begin{equation}
  \label{eq:meta-sc-slow}
  \lambda \sum_{t=2}^T \norm{\p_t - \p_{t-1}}_1 \leq \lambda \sum_{t=2}^T \epsilon \norm{\ell_t}_{\infty} \le \lambda \epsilon \Gmeta T \leq \O(\lambda^{\frac{3}{4}} \sqrt{T}),
\end{equation}
where the first inequality holds by the standard analysis on the meta-algorithm (see~\eqref{eq:meta-regret-analysis} for more details). The second inequality is by definition of $\Gmeta = \sup_{t \in [T], i\in [N]} \abs{\ell_{t,i}}$, that is, the maximum scale of the loss of meta-algorithm. The last inequality is due to the setting of $\epsilon = \O(1/\sqrt{T})$ and our analysis shows that $\Gmeta \leq \O(\sqrt{\lambda})$. 

From~\eqref{eq:meta-sc-slow}, we can see that the switching cost of meta-algorithm exhibits an undesirable memory dependence of order $\O(\lambda^{3/4}) = \O(m^{3/2})$, whereas our desired one is linear in $m$. Therefore, it is natural to ask for an improved meta-algorithm that can enjoy a better memory dependence. However, we present the following theorem to negatively show that when the loss of meta-algorithm lies in the range of $[-C,C]$ for some $C >0$, \emph{any} algorithm must incur a regret of $\Omega(\sqrt{\lambda C T})$. The proof is deferred to Appendix~\ref{sec:appendix-proof-PEA-SC-lower}.

\begin{myThm}
  \label{thm:PEA-SC-lower}
  Consider a $T$-round prediction with expert advice problem with $\lambda$-switching cost. Given $\lambda >0$ and $C > 0$, there exists a sequence of loss functions $\ellb_1,\ldots,\ellb_T$ satisfying $\ellb_t \in [-C,C]^N$ for all $t \in [T]$ such that any feasible expert algorithm (whose output is $\p_1,\ldots,\p_T \in \Delta_N$) incurs the following regret 
  \begin{equation*}
    \sum_{t=1}^T \langle \ellb_t, \p_t\rangle - \min_{i \in [N]}\sum_{t=1}^T \ell_{t,i}+ \lambda \sum_{t=2}^T \| \p_t - \p_{t-1} \|_1 \geq \Omega(\sqrt{\lambda C T}).
  \end{equation*}
\end{myThm}

In our case, we have $\abs{\ell_{t,i}} \leq GD+\sqrt{\lambda}$ (see the argument in~\eqref{eq:Gmeta} for details). Therefore, by applying Theorem~\ref{thm:PEA-SC-lower}, we know that the meta-algorithm will incur at least $\Omega(\lambda^{3/4} \sqrt{T})$ regret, which prohibits Scream from achieving the desired $\O(\sqrt{\lambda})$ memory dependence. 

\paragraph{An Improved Algorithm.} To address this memory dependence issue, we propose an improved algorithm called \textbf{Lazy Scream}, presented in Algorithm~\ref{alg:lazy-scream}, which is a simple variant of the vanilla Scream algorithm (see Algorithm~\ref{alg:scream}). Specifically, Lazy Scream builds upon Scream with episodic updates, and proceeds in $K$ epochs (Line 2). The $k$-th epoch is denoted by $\I_k$ such that $\abs{\I_k} = \Delta$, for all $k \in [K]$. Specifically, the algorithm updates at the epoch-level, for each epoch $\I_k$, the learner submits the same decision (Line~5) and computes the cumulative loss gradient (Line~7), and at the end of each epoch, the learner sends the cumulative gradient to the original Scream algorithm (Algorithm~\ref{alg:scream}) for update (Line~9). The next theorem shows that Lazy Scream  attains an improved dynamic policy regret in terms of memory length, whose proof can be found in Appendix~\ref{sec:appendix-proof-optimal-memory}. 

\begin{algorithm}[!t]
  \caption{\textbf{Lazy Scream}}
  \label{alg:lazy-scream}
  \begin{algorithmic}[1]
  \REQUIRE{Scream $\A$ (Algorithm~\ref{alg:scream}), epoch number $B$, epoch length $\Delta$}
  \STATE{Initialization: $\w_{1}$ from Scream $\A$}
  \FOR{$k=1$ {\bfseries to} $K$}
    \STATE {Initialize $\nabla_k=\mathbf{0}$}
    \FOR{$t=(k-1)\Delta + 1$ {\bfseries to} $k\Delta$}
      \STATE {Submit the decision $\w_t = \wcirc_k$}
      \STATE {Suffer a loss of $f_t(\w_{t-m:t})$}
      \STATE {$\nabla_k = \nabla_k + \nabla \f_t(\w_t) = \nabla_k + \nabla \f_t(\wcirc_k)$}
    \ENDFOR
    \STATE Send $\nabla_k$ to Scream $\A$ for update and receive $\wcirc_{k+1}$
  \ENDFOR
  \end{algorithmic}
\end{algorithm}

\begin{myThm}
  \label{thm:optimal-memory}
  Under the same assumptions as Theorem~\ref{thm:main-result-OCOmemory}, by setting the learning rate of meta-algorithm optimally and the step size pool $\H$ as~\eqref{eq:step-size-pool}, our proposed Lazy Scream (Algorithm~\ref{alg:lazy-scream}) with epoch length $\Delta = \sqrt{\lambda}$ ensures that
  \begin{equation*}
    \textnormal{D-Regret}_T(\v_{1:T}) \le \O \big(\sqrt{\lambda T(1+P_T)} + \lambda P_T \big),
  \end{equation*}
  for any comparator sequence $\v_1, \ldots, \v_T \in \W$, where $\lambda = m^2 L$ and $P_T = \sum_{t=2}^{T} \norm{\v_{t-1} - \v_t}_2$.
\end{myThm}
Theorem~\ref{thm:optimal-memory} implies an $\O(\sqrt{\lambda T(1+P_T)})$ dynamic regret for OCO with switching cost. Below we further prove that our result is \emph{minimax optimal} in switching-cost coefficient $\lambda$, time horizon $T$, and path length $P_T$. 
\begin{myThm}
  \label{thm:lower-dynamic}
  Given a real value $\tau \in [0, DT]$ and a parameter $\lambda > 0$, there exist (1) a sequence of convex loss functions $h_1,\ldots,h_T$ with $h_t: \W \mapsto \R$ for $t \in [T]$, which satisfy Assumption~\ref{assume:bounded-gradient} and some feasible domain $\W \subseteq \R^d$ with Assumption~\ref{assume:bounded-norm}; and (2) a sequence of comparators $\v_1,\ldots,\v_T \in \R^d$ whose path length $P_T(\v_1,\ldots,\v_T) = \sum_{t=2}^T \norm{\v_{t} - \v_{t-1}}_2 \leq \tau$, such that any online algorithm returning $\w_1,\ldots,\w_T \in \W$ satisfies 
  \begin{equation}
    \label{eq:lower-bound}
    \sum_{t=1}^T h_t(\w_t) -  \sum_{t=1}^T h_t(\v_t) + \lambda \sum_{t=2}^T \norm{\w_{t} - \w_{t-1}}_2 \geq \Omega(\sqrt{\lambda \tau T}).
  \end{equation}
\end{myThm}

Theorem~\ref{thm:lower-dynamic} demonstrates the minimax optimality of the obtained $\O(\sqrt{\lambda T(1+P_T)})$ dynamic regret bound for OCO with switching cost, which is optimal in terms of switching-cost coefficient $\lambda$, time horizon $T$, and path length $P_T$. The corresponding proof can be found in Appendix~\ref{sec:appendix-proof-lower-bound}. \section{Online Non-stochastic Control}
\label{sec:online-control}
In this section, we apply the results of OCO with memory to an important online decision-making problem, online non-stochastic control~\citep{ICML'19:online-control}, which draws much attention from researchers in online learning and control theory communities~\citep{ICML'19:online-control,COLT'20:control-improper,ALT'20:control-Hazan,NIPS'20:max-control,NIPS'20:bandit-control,NIPS'20:bandit-Koren,arXiv'20:adaptive-OCOmemory,AISTATS'22:SA-OCOwMemory}.

\subsection{Problem Statement}
\paragraph{Problem Setting.} We study the online control of the linear dynamical system (LDS) governed by
\begin{equation}
  \label{eq:dynamics}
  x_{t+1} = A x_{t} + B u_t + w_t,
\end{equation}
where at iteration $t$, the controller provides the control $u_t$ upon the observed dynamical state $x_t$ and suffers a cost $c_t(x_t,u_t)$ with convex function $c_t: \R^{d_x} \times \R^{d_u} \mapsto \R$. Following the notational convention of previous works, throughout the section we will use unbold fonts to denote vectors (including control signal, state, disturbance, etc.). We focus on online \emph{non-stochastic} control~\citep{ICML'19:online-control}, that is, the disturbance can be generated arbitrarily and no statistical assumption is imposed on its distribution; additionally, cost functions can be chosen adversarially. The adversarial nature of the disturbance and online cost functions hinders an a priori computation of the optimal policy as in settings of classical control theory~\citep{kalman1960contributions} and therefore requires techniques from modern online learning to tackle adversarial environments. 

\paragraph{Policy Regret.} The standard measure for online non-stochastic control is the \emph{policy regret}~\citep{ICML'19:online-control}, defined as the difference between cumulative loss of the designed controller $\mathcal{A}$ and that of the compared controller $\pi \in \Pi$, namely,
\begin{equation}
  \label{eq:control-Reg}
  \mbox{Regret}_T = \sum_{t=1}^{T} c_t(x_t,u_t) - \min_{\pi \in \Pi} \sum_{t=1}^{T} c_t(x_t^{\pi},u_t^{\pi}).
\end{equation}
The comparator could be chosen with complete foreknowledge of the disturbance and loss functions. Recently, a variety of control algorithms have been proposed to optimize this measure under different settings~\citep{ICML'19:online-control,ALT'20:control-Hazan,COLT'20:control-improper,NIPS'20:bandit-Koren,NIPS'20:bandit-control,ICML'20:log-control}. However, we argue that competing with a fixed controller may be not appropriate, especially because the unknown disturbances and cost functions can change arbitrarily in the non-stochastic control setting so that the optimal controller of each round would also change accordingly. Therefore, it is necessary to enable the online controller to compete with \emph{time-varying} controllers to adapt to those changes. To this end, we generalize the standard measure~\eqref{eq:control-Reg} to the \emph{dynamic policy regret} to benchmark the algorithm with a sequence of \emph{time-varying} controllers $\pi_1,\ldots,\pi_T \in \Pi$, formally,
\begin{equation}
\label{eq:control-dynamic-Reg}
  \textnormal{D-Regret}_T(\pi_{1:T})= \sum_{t=1}^{T} c_t(x_t,u_t) - \sum_{t=1}^{T} c_t(x_t^{\pi_t},u_t^{\pi_t}).
\end{equation}
The measure clearly subsumes the standard policy regret~\eqref{eq:control-Reg} when choosing the compared controllers as a fixed one, i.e., $\pi_* \in \argmin_{\pi \in \Pi} \sum_{t=1}^{T} c_t(x_t^{\pi},u_t^{\pi})$. In this work, the benchmark set $\Pi$ is chosen as the class of disturbance-action controllers (see Definition~\ref{def:DAC}), which encompasses many controllers of interest.  

\subsection{Reduction to OCO with Memory}
\label{sec:control-reduction}
Following the pioneering work~\citep{ICML'19:online-control}, we will work on the \emph{Disturbance-Action Controller} (DAC) policy class, which parametrizes the executed action as a linear function of the past disturbances. By doing so, we can reduce online non-stochastic control to OCO with memory so that the results of Section~\ref{sec:OCOwMemory} can be leveraged to design robust controllers with provable dynamic policy regret guarantee. 
\begin{myDef}[Disturbance-Action Controller, DAC]
\label{def:DAC}
A disturbance-action controller, denoted by $\pi(K,M)$, with memory length $H$ is specified by a fixed matrix $K$ and parameters $M = (M^{[1]},\ldots,M^{[H]})$. At each iteration $t$, the controller $\pi(K,M)$ chooses the action as a linear map of the past disturbances with an offset linear controller, formally, $u_t = -Kx_t + \sum_{i=1}^{H} M^{[i]} w_{t-i}$. 
\end{myDef}
For convenience, we define $w_i = 0$ for $i < 0$. The DAC policy is implementable because the disturbance can be recovered by $w_{t} = x_{t+1} - A x_{t} - B u_{t}$ as system dynamics $A$ and $B$ are supposed to be known. Our method can also extend to the scenario of online non-stochastic control with unknown systems, which is presented at the end of this section. The following proposition by~\citet{ICML'19:online-control} presents an important property of DAC policy.
\begin{myProp}[Lemma 4.3 of \citet{ICML'19:online-control}]
\label{proposition:DAC-state}
Suppose the initial state is $x_0 = 0$ and one chooses the DAC controller $\pi(K,M_t)$ at iteration $t$, the reaching state and the corresponding DAC control are 
\begin{align*}
x_t^K(M_{0:t-1}) = {}& \sum_{i=0}^{H + t-1} \Psi_{t-1,i}^{K,t-1}(M_{0:t-1}) w_{t-1-i},\\
u_t^K(M_{0:t}) = {}& -Kx_t^K(M_{0:t-1}) + \sum_{i=1}^{H} M_t^{[i]} w_{t-i},
\end{align*}
where $\tilde{A}_K = A - BK$ and $$\Psi_{t,i}^{K,h}(M_{t-h:t}) = \tilde{A}_K^i \ind{i \leq h} + \sum_{j=0}^{h} \tilde{A}_K^j B M_{t-j}^{[i-j]} \ind{1 \leq i-j \leq H}.$$
\end{myProp}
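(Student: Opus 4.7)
The plan is to prove the state representation by induction on $t$ and then read off the control representation by substituting the state into the DAC rule $u_t = -K x_t + \sum_{i=1}^{H} M_t^{[i-1]} w_{t-i}$. The base case $t = 0$ holds vacuously because $x_0 = 0$ and every summand on the right-hand side vanishes under the convention $w_j = 0$ for $j < 0$ together with the indicators in the definition of $\Psi$. The $u_t$-formula requires no separate induction: once the $x_t$-expression is in hand, it follows from the DAC rule by one line of algebra.

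For the inductive step, I would assume the formula holds at time $t$ and expand
\[
	x_{t+1} = A x_t + B u_t + w_t = \tilde{A}_K x_t + w_t + B \sum_{i=1}^{H} M_t^{[i-1]} w_{t-i},
\]
using $u_t = -K x_t + \sum_{i} M_t^{[i-1]} w_{t-i}$ and $\tilde{A}_K = A - BK$. Substituting the inductive hypothesis for $x_t$ and grouping terms by $w_{t-j}$ for $j \in \{0, 1, \ldots, H+t\}$ reduces the whole induction to checking, for every such $j$, the single matrix identity
\[
	I \cdot \mathbf{1}_{j=0} + B M_t^{[j-1]} \mathbf{1}_{1 \leq j \leq H} + \tilde{A}_K \, \Psi_{t-1, j-1}^{K, t-1}(M_{0:t-1}) \mathbf{1}_{j \geq 1} = \Psi_{t, j}^{K, t}(M_{0:t}).
\]

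The main obstacle is the careful bookkeeping of the two indicators in the definition of $\Psi$. I would split the sum $\sum_{j'=0}^{t}$ defining $\Psi_{t, j}^{K, t}$ into its $j' = 0$ summand and its $j' \geq 1$ tail. The $j' = 0$ summand, together with the free-evolution term $\tilde{A}_K^j \mathbf{1}_{j \leq t}$ evaluated at $j = 0$, matches the first two contributions on the left (the identity when $j = 0$ and the control term $B M_t^{[j-1]}$ when $1 \leq j \leq H$; note that for $j = 0$ the $j'=0$ summand itself is killed by $\mathbf{1}_{1 \leq j \leq H}$). The $j' \geq 1$ tail, after reindexing $j' = j'' + 1$, produces a leading factor of $\tilde{A}_K$ multiplying exactly $\Psi_{t-1, j-1}^{K, t-1}(M_{0:t-1})$: the power $\tilde{A}_K^{j''}$ matches, the control offset aligns as $t - j' = (t-1) - j''$, the indicator $\mathbf{1}_{1 \leq j - j' \leq H}$ becomes $\mathbf{1}_{1 \leq (j-1) - j'' \leq H}$, and the free-evolution indicator shifts correctly from $\mathbf{1}_{j \leq t}$ to $\mathbf{1}_{(j-1) \leq (t-1)}$. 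The verification is mechanical but error-prone, so the main care lies in ensuring every indicator range realigns with the new parameters $(h, i) = (t, j)$ of $\Psi_{t, j}^{K, t}$. Once the state identity is established, the formula for $u_t$ is obtained immediately by plugging the just-proved expression for $x_t^K(M_{0:t-1})$ into the DAC rule.
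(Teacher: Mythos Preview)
Your induction argument is correct, and the key recursion
\[
\Psi_{t,j}^{K,t}(M_{0:t}) = I\cdot\mathbf{1}_{j=0} + B M_t^{[j-1]}\mathbf{1}_{1\leq j\leq H} + \tilde{A}_K\,\Psi_{t-1,j-1}^{K,t-1}(M_{0:t-1})\,\mathbf{1}_{j\geq 1}
\]
checks out exactly as you describe once the $j'=0$ summand is peeled off and the tail is reindexed.

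The paper takes a genuinely different route. Rather than inducting on $t$, it proves a strengthened statement: for \emph{every} $h\in\{0,\ldots,t\}$,
\[
x_{t+1} = \tilde{A}_K^{h+1} x_{t-h} + \sum_{i=0}^{H+h}\Psi_{t,i}^{K,h}(M_{t-h:t})\,w_{t-i},
\]
by unrolling the one-step recursion $x_{t+1}=\tilde{A}_K x_t + w_t + B\sum_i M_t^{[i-1]}w_{t-i}$ for $h+1$ steps and then performing a five-step chain of index exchanges to collapse the resulting double sum into the $\Psi$ form. Your proposition is then the special case $h=t-1$, where the leading $\tilde{A}_K^{t}x_0$ term vanishes. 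Your induction is cleaner and avoids the index gymnastics, but the paper's approach buys the parametrized-in-$h$ formula, which is precisely what is needed later to define and control the truncated state $y_t$ (where one sets $h=H$ rather than $h=t-1$). If you go your route, be aware that the general-$h$ version will still have to be established separately for the truncation analysis.
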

Evidently, both state $x_t$ and control $u_t$ are linear functions of DAC parameters $M_{0:t}$, so the cost $c_t(x_t^K(M_{0:t-1}),u_t^K(M_{0:t}))$ is a function of historical parameters $M_{0:t}$. Thereby, the remaining challenge is to handle this \emph{memory} issue due to the state transition of online control, which can be addressed by OCO with memory studied in Section~\ref{sec:OCOwMemory}. Note that there is one big caveat in applying the technique---the current memory length is not fixed but growing with time, which is not feasible in OCO with memory. To this end,~\citet{ICML'19:online-control} further propose a truncation operation that truncates the state with a fixed memory length $H$ and defines the following truncated loss.
\begin{myDef}[Truncated Loss]
\label{def:truncated-loss}
For the cost function $c_t: \R^{d_x} \times \R^{d_u} \mapsto \R$ and DAC policies $\{\pi(K,M_t)\}_{t=1,\ldots,T}$, given memory length $H$, the induced truncated loss $f_t: \M^{H+2} \mapsto \R$ is defined as 
\begin{equation*}
  f_t(M_{t-1-H:t}) = c_t(y_t^K(M_{t-1-H:t-1}), v_t^K(M_{t-1-H:t})),
\end{equation*}
where the truncated state and truncated DAC control are $$y_{t+1}^K = \sum_{i=0}^{2H} \Psi_{t,i}^{K,H}(M_{t-H:t}) w_{t-i}, ~\text{ and } v_{t+1}^K = - K y_{t+1}^K(M_{t-H:t}) + \sum_{i=1}^{H} M_{t+1}^{[i]} w_{t+1-i}.$$ 
\end{myDef}
It can be proved that the error introduced by the truncation operation (the gap between $f_t$ and $c_t$) can be precisely controlled. Therefore, by feeding the truncated loss $f_t$ to the OCO with memory framework with a memory length of $H+2$, we finish the reduction from online non-stochastic control to OCO with memory.

\subsection{Dynamic Policy Regret of Online Non-stochastic Control}
\label{sec:control-algorithm-theory}
The above reduction enables us to leverage results of OCO with memory (Section~\ref{sec:OCOwMemory}) to design online controllers competitive with time-varying compared policies. We propose the \textbf{Scream.Control} algorithm, consisting of the following two components:
\begin{itemize}
  \item[(1)] DAC parameterization for reduction: using DAC control $u_t = \pi(K, M_t)$ for parameterization and define the unary loss of the truncated loss, i.e., $\tilde{f}_t: \mathcal{M} \mapsto \R$ with $\tilde{f}_t(M) = f_t(M,\ldots,M)$ (see Definition~\ref{def:truncated-loss}).
  \item[(2)] meta-base online ensemble structure for OCO with memory:  performing Scream algorithm of Section~\ref{sec:OCOwMemory} over unary loss $\tilde{f}_t$, and using meta-algorithm to combine intermediate parameters $M_{t,1},\ldots,M_{t,N}$ from all base-learners to produce the final $M_t$.
\end{itemize}

\begin{algorithm}[!t]
   \caption{\textbf{Scream.Control}}
   \label{alg:control}
\begin{algorithmic}[1]
  \REQUIRE{step size pool $\H = \{\eta_1,\ldots,\eta_N\}$; learning rate of meta-learner $\epsilon$; memory length $H$; linear controller $K$; feasible domain $\M$}
  \STATE{Initialization: $u_1,\ldots,u_H$, any feasible output control signals for the first $H$ rounds;}
  \STATE{Initialization: base decisions of the $H$-th round $M_{H,1}, M_{H,2}, \ldots M_{H,N} \in \mathcal{M}$; non-uniform weight $\p_{H+1} \in \Delta_N$ with $p_{H+1,i} \propto 1/(i^2 + i)$, $\forall i \in [N]$}
    \FOR{$t=H+1$ {\bfseries to} $T$}
      \STATE {Receive $M_{t,i}$ from base-learner $\B_i$ for $i \in [N]$}
      \STATE {Obtain the policy parameter $M_t = \sum_{i=1}^{N} p_{t,i}M_{t,i}$}
      \STATE {Output $u_t = - Kx_t + \sum_{i=1}^{H} M_t^{[i]} w_{t-i}$}
      \STATE {Suffer a loss of $c_t(x_t,u_t)$ and observe the cost function $c_t: \R^{d_x} \times \R^{d_u} \mapsto \R$}
      \STATE {Construct the truncated loss $f_t: \M^{H+2} \mapsto \R$ by Definition~\ref{def:truncated-loss} and the linearized loss by $g_t(M) = \inner{\nabla \f_t(M_t)}{M}$}
      \STATE {Compute the switching-cost-regularized loss $\ellb_t \in \R^N$ with $\ell_{t,i} = \lambda \Fnorm{M_{t,i} - M_{t-1,i}} + g_t(M_{t,i})$ for $i \in [N]$}
      \STATE {Update the weight to $\p_{t+1} \in \Delta_N$ via $p_{t+1,i} \propto p_{t,i} \exp(-\epsilon \ell_{t,i})$}
      \STATE {Base-learner $\B_i$ updates the local parameter by $M_{t+1,i} = \Pi_{\mathcal{M}}[M_{t,i} - \eta_i \nabla \tilde{f}_t(M_t)]$} 
      \STATE Observe the new state $x_{t+1}$ and calculate the disturbance $w_t = x_{t+1} - Ax_t - Bu_t$
    \ENDFOR
\end{algorithmic}
\end{algorithm}

Algorithm~\ref{alg:control} describes our proposed algorithm for optimizing dynamic policy regret of online non-stochastic control. We further provide its theoretical guarantee. We begin with several standard assumptions used in the literature~\citep{ICML'19:online-control,ALT'20:control-Hazan,NIPS'20:bandit-control} and next present the main result.
\begin{myAssumption}
\label{assume:bound-noise}
The system matrices are bounded, i.e., $\opnorm{A} \leq \kappa_A$ and $\opnorm{B} \leq \kappa_B$. Besides, the disturbance $\norm{w_t} \leq W$ holds for any $t \in [T]$.
\end{myAssumption}

\begin{myAssumption}
\label{assume:cost-bound}
The cost function $c_t(x,u)$ is convex. Further, when $\norm{x},\norm{u} \le D$, it holds that $|c_t(x,u)| \leq \beta D^2$ and $\norm{\nabla_x c_t(x,u)}, \norm{\nabla_{u} c_t(x,u)} \leq G_c D$.
\end{myAssumption}

\begin{myAssumption}
\label{assume:strongly-stable}
DAC controller $\pi(K,M)$ satisfies:
\begin{itemize}
\item[(1)] $K$ is $(\kappa, \gamma)$-strongly stable, whose precise definition is in Definition~\ref{def:controller-set} of Appendix~\ref{sec:appendix-notions}; 
\item[(2)] $M \in \M$ where $\M = \{M = (M^{[1]},\ldots,M^{[H]}) \mid \opnorm{M^{[i]}} \leq \kappa_B \kappa^3 (1-\gamma)^i \}$.
\end{itemize}
\end{myAssumption}

\begin{myThm}
\label{thm:main-result-control}
Under Assumptions~\ref{assume:bound-noise}--\ref{assume:strongly-stable}, we set learning rate optimally and the step size pool $\H$ as
\begin{equation}
  \label{eq:step-size-pool-control}
  \H = \left\{ \eta_i \givenn \eta_i = 2^{i-1}\cdot \sqrt{\frac{D_f^2}{(\lambda G_f + G_f^2)T}}, i \in [N] \right\},
\end{equation}
where $N = \left\lceil \frac{1}{2} \log_2(1+T) \right\rceil + 1 = \O(\log T)$ is the number of base-learners, and $\lambda = (H+2)^2L_f$. The parameters $L_f, G_f, D_f$ are defined in Lemma~\ref{lemma:support-2-f-property} and only depend on natural parameters of the linear dynamical system and truncated memory length $H$. By choosing $H = \Theta(\log T)$, our \SC~algorithm enjoys
\begin{align*} 
\sum_{t=1}^{T} c_t(x_t,u_t) - \sum_{t=1}^{T} c_t(x_t^{\pi_t},u_t^{\pi_t}) \leq \Ot \big(\sqrt{T(1+P_T)}\big),
\end{align*}
where $\pi_1,\ldots,\pi_T \in \Pi$ can be any comparator sequence in the compared DAC policy class $\Pi = \{\pi(K,M) \mid M \in \M \}$ with $\pi_t = \pi(K,M_t^*)$ for $t \in [T]$. The path length $P_T = \sum_{t=2}^{T} \Fnorm{M_{t-1}^* - M_t^*}$ measures the cumulative variation of comparators.
\end{myThm}

\begin{algorithm}[!t]
\caption{System Identification via Random Inputs~\citep{ALT'20:control-Hazan}}
\label{alg:system-identification}
\begin{algorithmic}[1]
\REQUIRE{rounds of exploration $T_0$.}
\FOR{$t=1,\ldots,T_0$}
    \STATE {Execute the control $u_t = -K x_t + \tilde{u}_t$ with $\tilde{u}_t \sim_{i.i.d.} \{\pm 1\}^{d_u}$}
    \STATE {Record the observed state $x_{t+1}$}
\ENDFOR
\STATE {Declare $N_j = \frac{1}{T_0-k}\sum_{t=0}^{T_0-k-1} x_{t+j+1} \tilde{u}_t^\T$, for all $j \in [k]$}
\STATE {Define $\hat{C}_0 = \mbr{N_0,\ldots,N_{k-1}}, \hat{C}_1 = \mbr{N_1,\ldots,N_k}$ and return estimation $\Ah,\Bh$ as}
\begin{equation*}
    \Bh = N_0,\quad \Ah_K \define \hat{C}_1 \hat{C}_0^\T \sbr{\hat{C}_0 \hat{C}_0^\T}^{-1},\quad \Ah = \Ah_K + \Bh K.
\end{equation*}
\end{algorithmic}
\end{algorithm}

Till now, we assume the knowledge of the underlying system $A$ and $B$. By further adopting the system identification via random inputs developed by~\citet{ALT'20:control-Hazan}, our result can be extended to online non-stochastic control with unknown systems. Indeed, when the system is unknown, i.e., $A$ and $B$ are not known in advance, we follow the explore-then-commit method of~\citet{ALT'20:control-Hazan} to identify the underlying dynamics and then deploy the control algorithm based on the estimated system dynamics. The algorithmic descriptions are summarized in Algorithm~\ref{alg:system-identification}. In the exploration phase, the identification algorithm~\citep[Algorithm 2]{ALT'20:control-Hazan} uses some random inputs to approximately recover the system dynamics. Specifically, given an estimation budget $T_0 < T$, in the first $T_0$ rounds, we input the control signal $u_t = -K x_t + \tilde{u}_t$ with the random inputs $\tilde{u}_t \sim \{\pm 1\}^{d_u}$ and then observe the corresponding state $x_{t+1}$. Then, by the estimation method presented in Line 6 of Algorithm~\ref{alg:system-identification}, we can show that the estimation regret overhead is $\Ot(T^{2/3})$ when choosing $T_0 = \Theta(T^{2/3})$. 

To give the formal regret analysis and ensure finite-sample convergence rate, we focus on the system with strong controllability following the work of~\citet{ALT'20:control-Hazan}.

\begin{myDef}[Strong Controllability]
  \label{def:unknown-strong-controllability}
  For a linear dynamical system and a strongly stable linear controller $K$, for $k \ge 1$, define a matrix $C_k \in \R^{d_x \times k d_u}$ as 
  \begin{equation}
    \label{eq:C_k}
      C_k = \mbr{B, \tilde{A}_KB, \ldots, \tilde{A}_K^{k-1}B},
  \end{equation}
  where $\tilde{A}_K=A-BK$. A linear dynamical system is controllable with controllability index $k$ if $C_k$ has full row-rank. In addition, such a system is also $(k,\kappa_c)$-strongly controllable if $\norm{\sbr{C_k C_k^\T}^{-1}} \le \kappa_c$.
\end{myDef}

\begin{myAssumption}[Strong Controllability]
  \label{assume:strong-controllability}
  The dynamical system $x_{t+1} = A x_{t} + B u_t + w_t$ is $(k,\kappa_c)$-strongly controllable.
\end{myAssumption}

\begin{myThm}
\label{thm:unknown-system}
Under the same assumptions of Theorem~\ref{thm:main-result-control} except that system matrices $A$ and $B$ are now unknown, and suppose the systems are strongly controllable (see Assumption~\ref{assume:strong-controllability}) and the time horizon $T$ is sufficiently large, \SC~with system identification (Algorithm~\ref{alg:system-identification}) ensures that with high probability,
\[
  \sum_{t=1}^{T} c_t(x_t,u_t) - \sum_{t=1}^{T} c_t(x_t^{\pi_t},u_t^{\pi_t}) \leq \Ot(\sqrt{T(1+P_T)} + T^{2/3}),
\] 
where $\pi_1,\ldots,\pi_T \in \Pi$ can be any comparator sequence in the compared DAC policy class $\Pi = \{\pi(K,M) \mid M \in \M \}$ with $\pi_t = \pi(K,M_t^*)$ for $t \in [T]$. The path length $P_T = \sum_{t=2}^{T} \Fnorm{M_{t-1}^* - M_t^*}$ measures the cumulative variation of comparators.
\end{myThm}

Finally, we note that our obtained dynamic policy regret bound in Theorem~\ref{thm:main-result-control} can recover the $\Ot(\sqrt{T})$ static policy regret for non-stochastic control with known systems~\citep{ICML'19:online-control}, and the result in Theorem~\ref{thm:unknown-system} implies an $\Ot(T^{2/3})$ high-probability static policy regret for non-stochastic control with unknown systems~\citep{ALT'20:control-Hazan}.
\begin{myCor}
\label{corollary:static-regret-control}
For known systems, under the same assumptions of Theorem~\ref{thm:main-result-control}, it holds that \SC~enjoys a static policy regret at most $$\sum_{t=1}^{T} c_t(x_t,u_t) - \min_{\pi \in \Pi} \sum_{t=1}^{T} c_t(x_t^{\pi},u_t^{\pi}) \leq \Ot(\sqrt{T}).$$ 

For unknown systems, under the same assumptions of Theorem~\ref{thm:unknown-system}, \SC~with system identification ensures that with high probability,
$$\sum_{t=1}^{T} c_t(x_t,u_t) - \min_{\pi \in \Pi} \sum_{t=1}^{T} c_t(x_t^{\pi},u_t^{\pi}) \leq \Ot(T^{2/3}).$$
In above, the comparator set $\Pi$ can be chosen as either the set of DAC policies or the set of strongly linear controllers.
\end{myCor}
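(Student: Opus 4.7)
The plan is to obtain Corollary~\ref{corollary:static-regret-control} essentially as a free consequence of Theorem~\ref{thm:main-result-control} and Corollary~\ref{corollary:unknown-system}, by specializing the comparator sequence and then reducing the richer benchmark class (strongly stable linear controllers) to the DAC class.

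First I would observe that static policy regret is the special case of dynamic policy regret in which the comparator sequence is constant, $\pi_1 = \ldots = \pi_T = \pi^\star$, where $\pi^\star \in \Pi$ is the minimizer of $\sum_{t=1}^T c_t(x_t^{\pi},u_t^{\pi})$. For such a sequence the path-length is $P_T = \sum_{t=2}^T \Fnorm{M_{t-1}-M_t} = 0$. Plugging this into the bound of Theorem~\ref{thm:main-result-control} gives $\sum_{t=1}^T c_t(x_t,u_t) - \sum_{t=1}^T c_t(x_t^{\pi^\star},u_t^{\pi^\star}) \leq \Ot(\sqrt{T(1+0)}) = \Ot(\sqrt{T})$, which is exactly the claimed static policy regret against $\Pi$ being the DAC class. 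The unknown-system bound follows identically from Corollary~\ref{corollary:unknown-system}, yielding $\Ot(\sqrt{T} + T^{2/3}) = \Ot(T^{2/3})$ with high probability, where the $T^{2/3}$ overhead arises from the system-identification phase.

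Next I would extend the comparator set from DAC controllers to the set of $(\kappa,\gamma)$-strongly stable linear controllers. Here I would invoke the approximation lemma from~\citet{ICML'19:online-control} (and its refinement in~\citet{ALT'20:control-Hazan}), which states that for any strongly stable linear controller $K'$, there exists a DAC policy $\pi(K,M^\star)$ with $M^\star \in \M$ such that the cumulative cost difference is at most $\Ot(1)$, provided the DAC memory length is chosen as $H = \Theta(\log T)$. Concretely, the coefficients $M^{\star,[i]}$ are taken so that the DAC policy mimics the action sequence of $K'$ up to a geometrically decaying tail of size $(1-\gamma)^H$, and the strong stability of $K$ ensures that the induced state trajectories differ by at most a geometric tail. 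With this approximation in hand, adding and subtracting $\sum_t c_t(x_t^{\pi(K,M^\star)}, u_t^{\pi(K,M^\star)})$ gives
\[
\sum_{t=1}^T c_t(x_t,u_t) - \min_{K'} \sum_{t=1}^T c_t(x_t^{K'}, u_t^{K'}) \leq \Ot(\sqrt{T}) + \Ot(1) = \Ot(\sqrt{T}),
\]
and the same argument, combined with the identification-phase overhead, yields the $\Ot(T^{2/3})$ bound in the unknown-system case.

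The main obstacle I anticipate is purely bookkeeping rather than conceptual: one must verify that the approximation error between the best strongly stable linear controller and its DAC surrogate is genuinely $\Ot(1)$ under Assumptions~\ref{assume:bound-noise}--\ref{assume:strongly-stable}, which reduces to choosing $H$ large enough that $(1-\gamma)^H \leq 1/T$ so that the geometric truncation tail contributes at most a constant over $T$ rounds. Since $H = \Theta(\log T)$ is already assumed in Theorem~\ref{thm:main-result-control}, this is automatic. All other steps, including the constants absorbed into $\Ot(\cdot)$, are inherited directly from the dynamic bound and require no new arguments.
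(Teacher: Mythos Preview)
Your proposal is correct and follows essentially the same approach as the paper: specialize the dynamic bound to a constant comparator so that $P_T=0$, and for the strongly stable linear comparator class invoke the DAC-approximation lemma of \citet{ICML'19:online-control} (stated here as Lemma~\ref{lemma:sufficiency}), whose error term $T\cdot \mathrm{poly}(\cdot)\,(1-\gamma)^{H-1}$ becomes $\Ot(1)$ once $H=\Theta(\log T)$. The only cosmetic difference is that the paper adds and subtracts $\min_{\pi\in\Pi_{\mathrm{DAC}}}\sum_t c_t$ rather than the cost of the specific surrogate $\pi(K,M^\star)$, but the two decompositions are equivalent since the minimum is bounded above by the surrogate's cost.
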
 \section{Experiment}
\label{sec:appendix-experiment}
Although our paper mainly focuses on the theoretical investigation, in this section, we further present empirical studies to support our theoretical findings. We report the results of OCO with memory in Section~\ref{sec:experiment-OCOM} and online non-stochastic control in Section~\ref{sec:experiment-control}.

\subsection{OCO with Memory}
\label{sec:experiment-OCOM}
Since OCO with memory is essentially tackled by optimizing the upper bound of the policy regret, which consists of the vanilla regret over the unary functions and the switching cost, as explained in~\eqref{eq:dynamic-regret-upper-bound} for dynamic policy regret. Thus, in the empirical studies, we directly investigate the performance of different algorithms in optimizing this upper bound, i.e., the unary regret with switching cost. More specifically, we consider the following OCO with switching cost problem: at each round, the player predicts $\w_t \in \W$ and the environments choose the loss function $f_t: \W \mapsto \R$. The player will then suffer a loss of $f_t(\w_t)$ as well as a switching cost of $\norm{\w_t - \w_{t-1}}_2$, and thus the overall loss is $f_t(\w_t) + \lambda \norm{\w_t - \w_{t-1}}_2$ with some $\lambda > 0$ as the trade-off parameter.

\paragraph{Settings.} We simulate the online learning scenario by the following setting: the player sequentially receives the feature of data item and then predicts its label. The data item of each round is denoted by $(\x_t, y_t) \in \X \times \Y$, where $\X$ is a $d$-dimensional  ball with diameter $\Gamma$ and $\Y \in \R$ is the space of real values. The time horizon is set as $T=50000$ and the dimension is set as $d=10$. To simulate the distribution changes, we generate the output according to $\y_t = \x_t^\T \w_t^* + \epsilon_t$, where $\w_t^* \in \R^d$ is the underlying model and $\epsilon_t \in [0,0.1]$ is a random noise. The underlying model $\w_t^*$ will change every $1000$ rounds, randomly sampled from a $d$-dimensional ball with diameter $D/2$, so there are in total $S = 50$ changes. We the squared loss as loss functions, defined as $f_t(\w) = \frac{1}{2} (\w^\T\x_t - y_t)^2$ and thus the gradient is $\nabla f_t(\w) = (\w^\T\x_t - y_t) \cdot \x_t$. The feasible set $\W$ is also set as $d$-dimensional ball with diameter $D/2$, and thus from all above settings, we know that $\norm{\x_t}_2 \leq \Gamma$, $\norm{\w}_2 \leq D/2$, and $\norm{\nabla f_t(\w)}_2 \leq D\Gamma^2$. We set $\Gamma=1$ and $D=2$, so the gradient norm is upper bounded by $G = D\Gamma^2 = 2$. 

\paragraph{Contenders and Measure.} We benchmark our proposed Scream algorithm with the following two algorithms: (1) OGD~\citep{ICML'03:zinkvich}, is the online gradient descent algorithm. The work of~\citet{NIPS'15:OCOmemory} proves that this simple \emph{static} regret minimization algorithm also enjoys a low switching cost when choosing the step size as $\eta = \O(1/\sqrt{T})$. (2) Ader~\citep{NIPS'18:Zhang-Ader}, is the online algorithm designed in non-stationary online convex optimization. Ader is also in a meta-base structure to optimize the dynamic regret, but the algorithm does \emph{not} consider the switching cost. Thus its switching cost might be huge (as analyzed in Section~\ref{sec:OCOmemory-challenge}). 

We examine the performance of all compared algorithms via the following three measures: (1) the overall cost $\sum_{t=1}^{T} f_t(\w_t) + \lambda \sum_{t=2}^{T} \norm{\w_t - \w_{t-1}}_2$, (2) the cumulative loss $\sum_{t=1}^{T} f_t(\w_t)$, and (3) the switching cost $\lambda \sum_{t=2}^{T} \norm{\w_t - \w_{t-1}}_2$. Here, we set the regularizer coefficient $\lambda = \alpha G$, where $G$ is the gradient norm upper bound, with the purpose of matching the magnitude of cumulative loss and the switching cost. We consider three cases with different regularizer coefficients that impose different levels of penalty on the switching cost: 
\begin{enumerate}[noitemsep,nolistsep]
    \item[(i)] small regularizer ($\alpha = 0.1$): in this case the switching cost is small so that optimizing the dynamic regret would dominate the performance;
    \item[(ii)] medium regularizer ($\alpha = 1$): in this case the algorithm needs to have a good balance of dynamic regret and switching cost in order to behave well;
    \item[(iii)] large regularizer ($\alpha = 2$): in this case dynamic regret is small so that optimizing the switching cost would dominate the performance.
\end{enumerate}
We repeat the experiments five times and report the mean and standard deviation of different algorithms with respect to three performance measures (overall loss, cumulative loss, and switching cost).

\begin{figure}[!t]
\centering
    \subfigure[{overall loss ($\alpha = 0.1$)}]{ 
    \label{fig:small-overall}
    \includegraphics[clip, trim=1.1cm 0.2cm 1.3cm 1.3cm,height=0.24\textwidth]{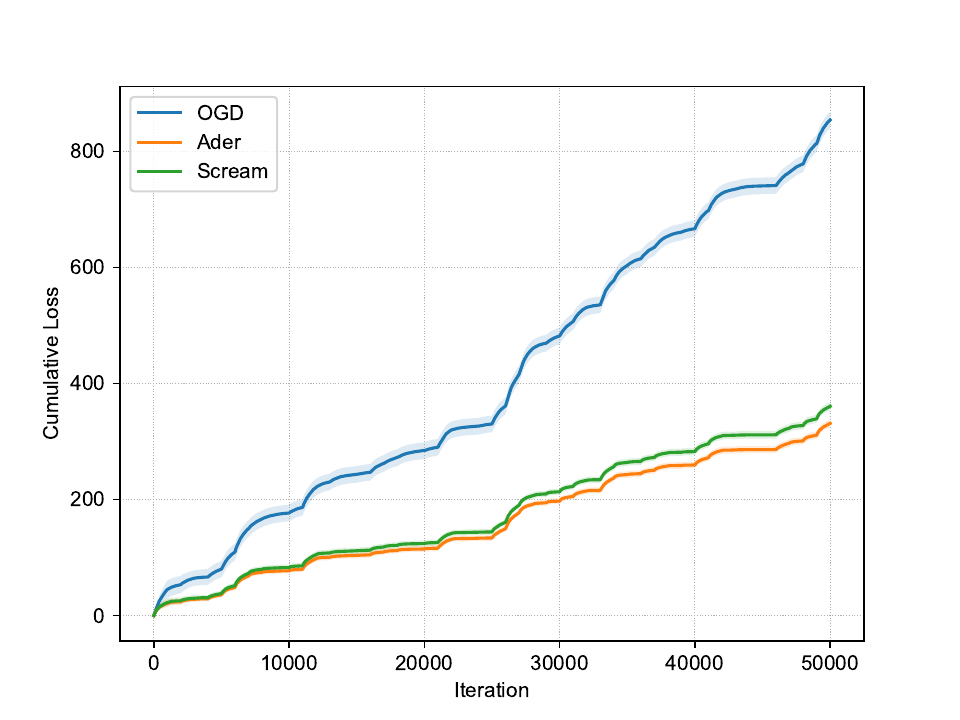}}
    \subfigure[{cumulative loss ($\alpha = 0.1$)}]{ 
    \label{fig:small-loss}
    \includegraphics[clip, trim=1.1cm 0.2cm 1.3cm 1.3cm,height=0.24\textwidth]{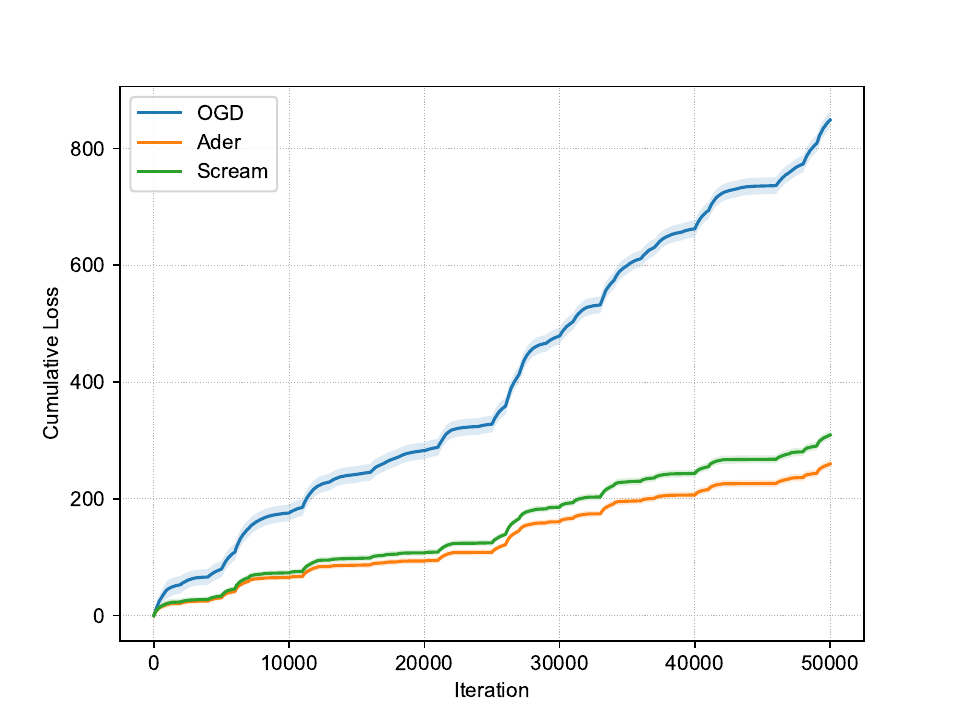}}
    \subfigure[{switching cost ($\alpha = 0.1$)}]{ 
    \label{fig:small-sc}
    \includegraphics[clip, trim=1.2cm 0.2cm 1.3cm 1.3cm,height=0.24\textwidth]{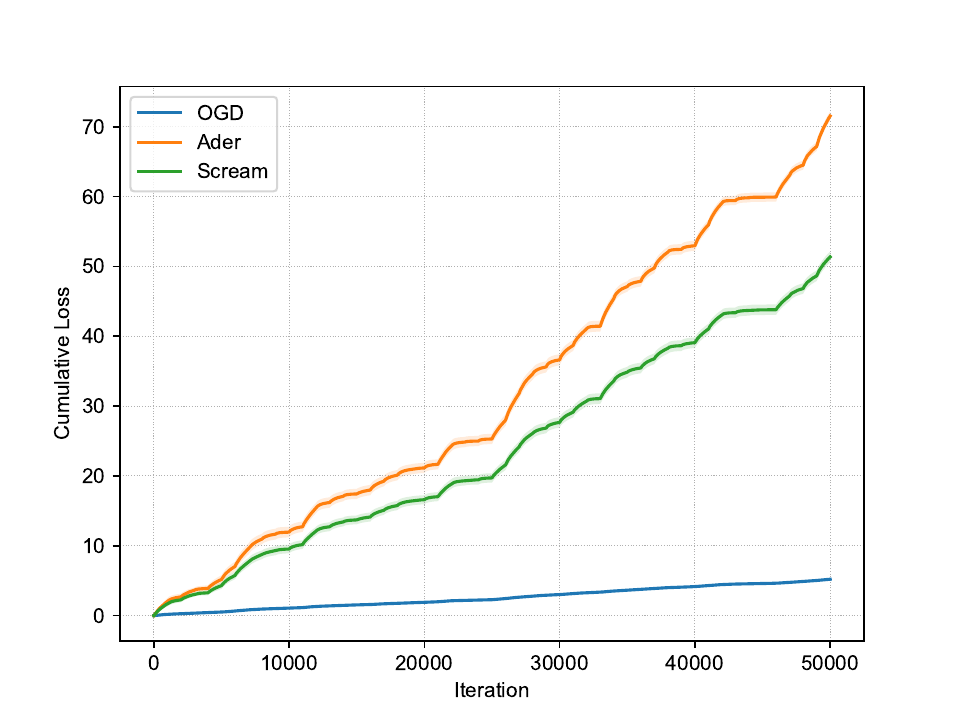}}
    \subfigure[{overall loss ($\alpha = 1$)}]{ 
    \label{fig:medium-overall}
    \includegraphics[clip, trim=1.1cm 0.2cm 1.3cm 1.3cm,height=0.24\textwidth]{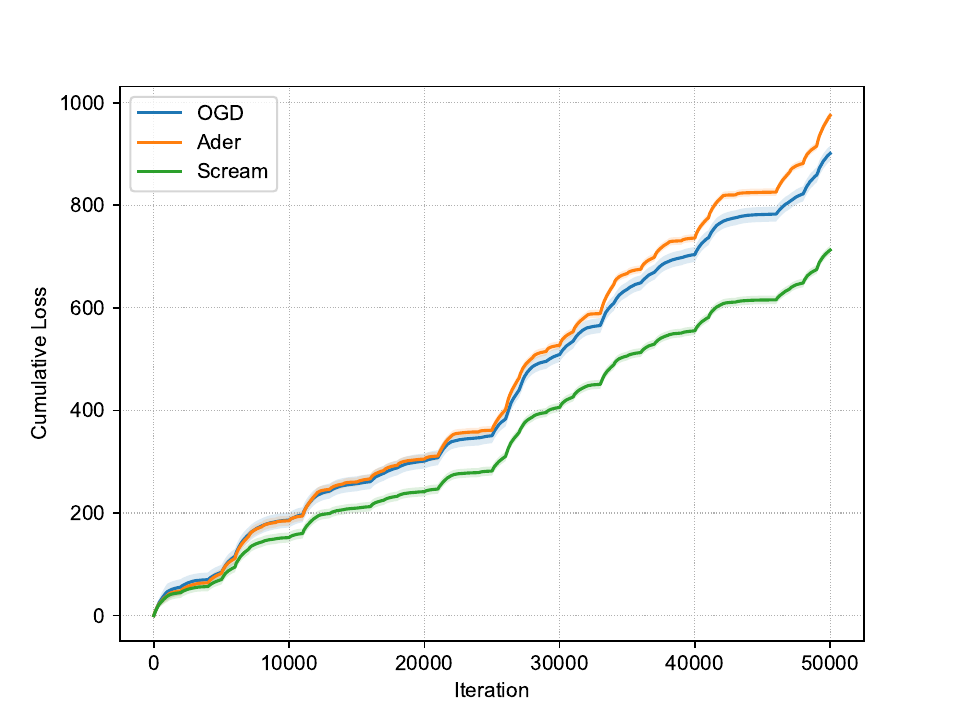}}
    \subfigure[{cumulative loss ($\alpha = 1$)}]{ 
    \label{fig:medium-loss}
    \includegraphics[clip, trim=1.1cm 0.2cm 1.3cm 1.3cm,height=0.24\textwidth]{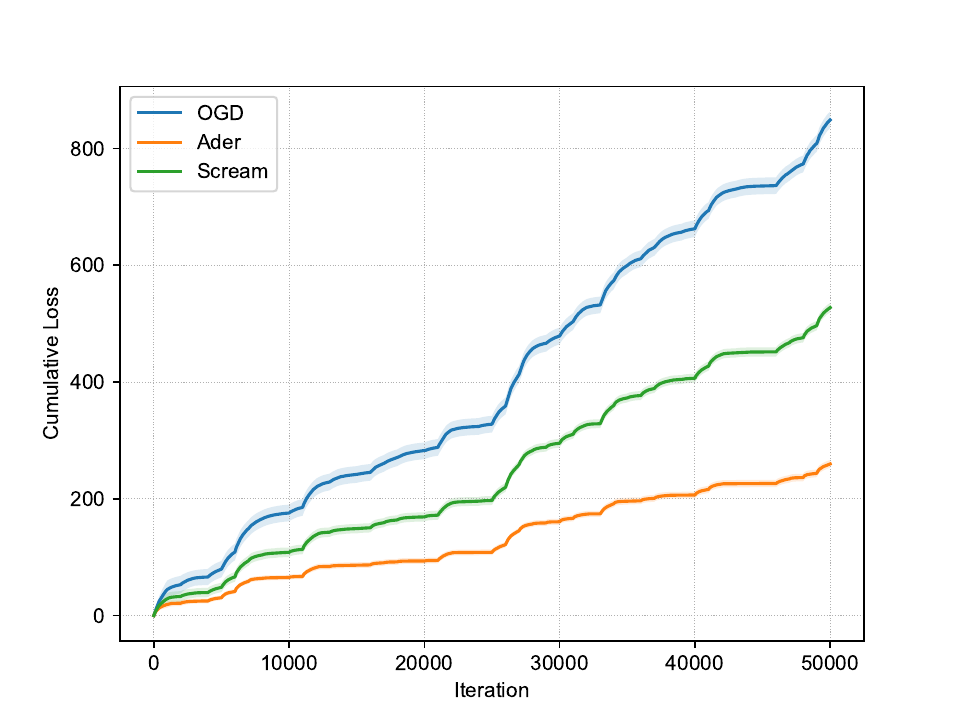}}
    \subfigure[{switching cost ($\alpha = 1$)}]{ 
    \label{fig:medium-sc}
    \includegraphics[clip, trim=1.1cm 0.2cm 1.3cm 1.3cm,height=0.24\textwidth]{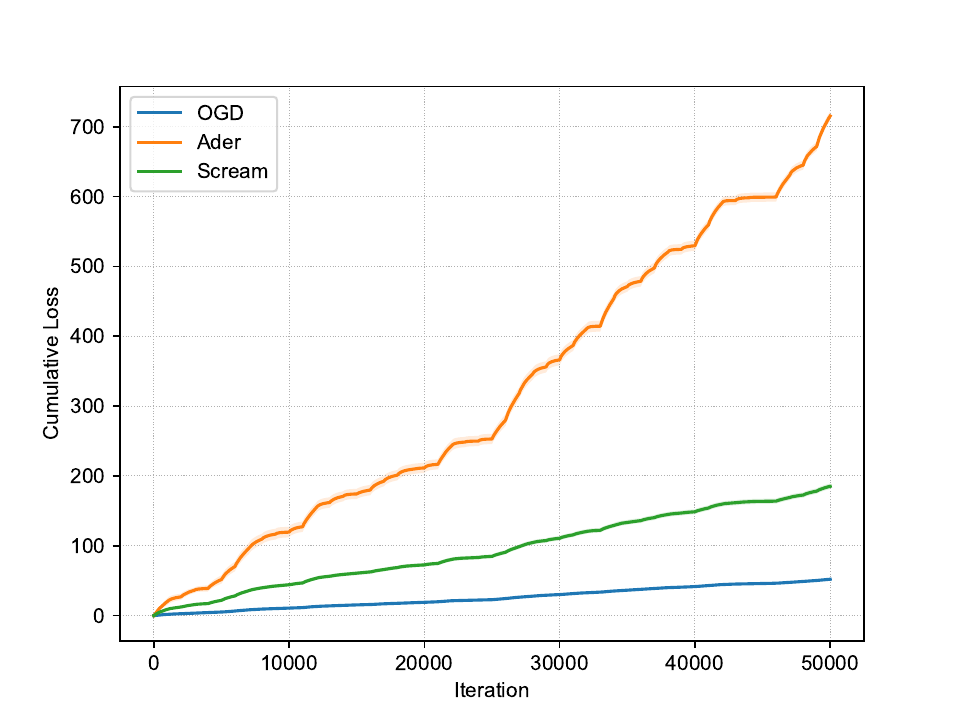}}
    \subfigure[{overall loss ($\alpha = 2$)}]{ 
    \label{fig:large-overall}
    \includegraphics[clip, trim=1cm 0.2cm 1.3cm 1.3cm,height=0.24\textwidth]{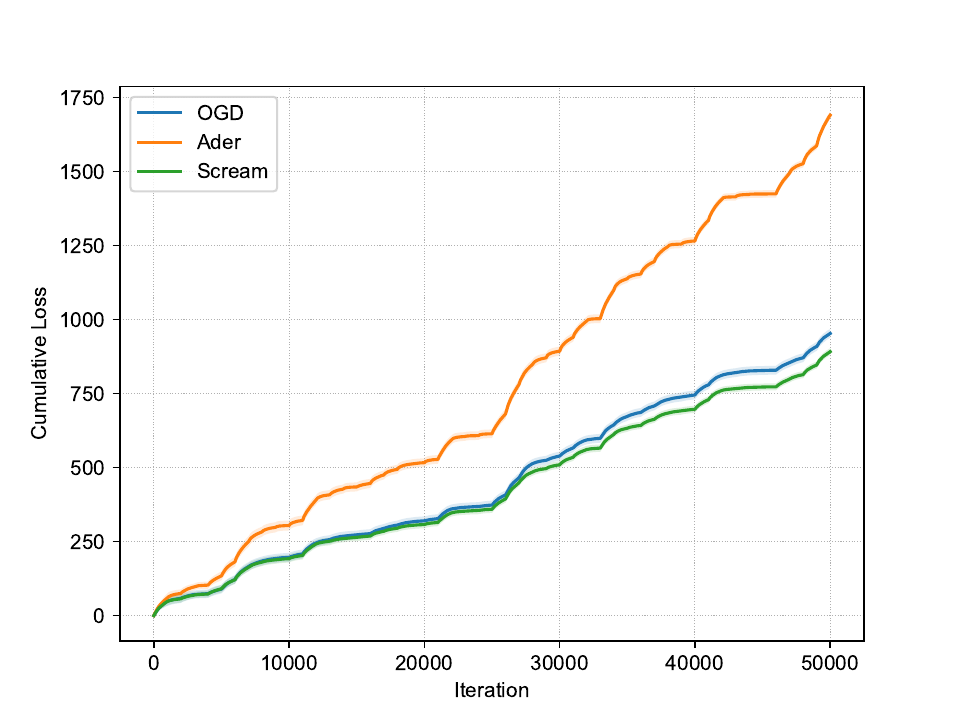}}
    \subfigure[{cumulative loss ($\alpha = 2$)}]{ 
    \label{fig:large-loss}
    \includegraphics[clip, trim=1.1cm 0.2cm 1.3cm 1.3cm,height=0.24\textwidth]{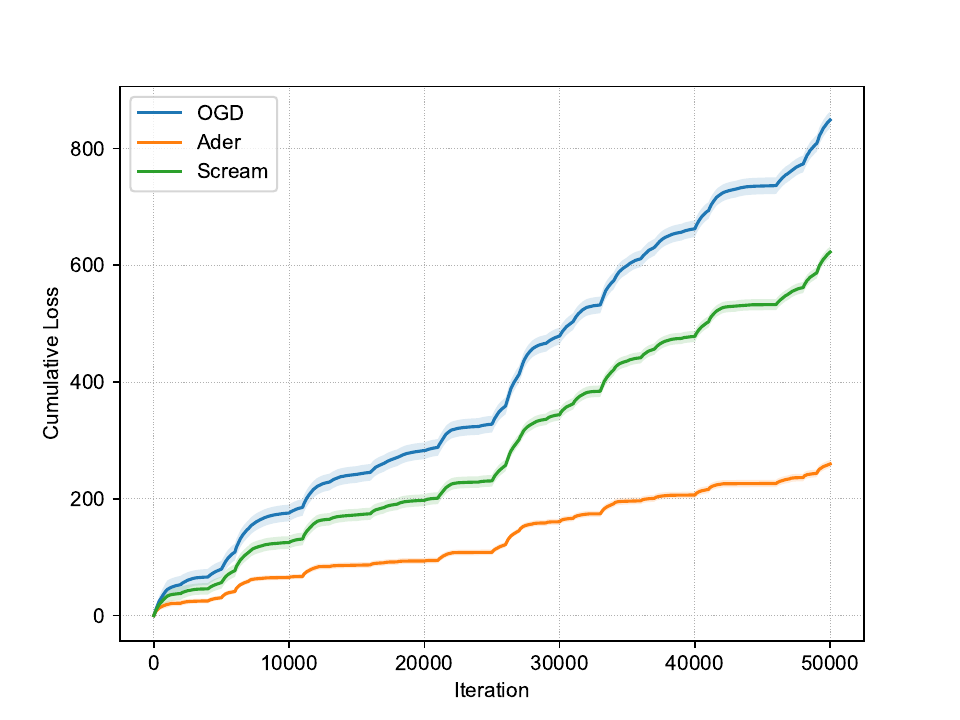}}
    \subfigure[{switching cost ($\alpha = 2$)}]{ 
    \label{fig:large-sc}
    \includegraphics[clip, trim=1cm 0.2cm 1.3cm 1.3cm,height=0.24\textwidth]{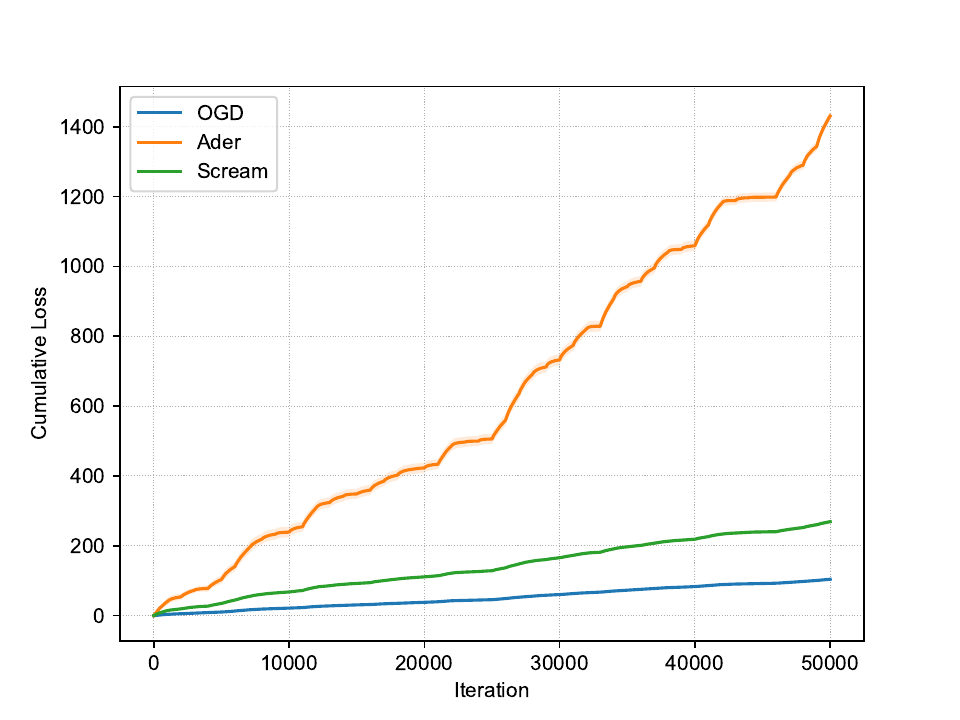}}
  \caption{Performance comparisons of OGD, Ader, Scream, under different regularizer coefficients ($\lambda = \alpha G$, $G$ is the gradient norm upper bound). The performance is evaluated by three measures: overall loss, cumulative loss, and switching cost.}
  \label{fig:comparison}
\end{figure}

\paragraph{Results.} 
Figure~\ref{fig:comparison} plots performance comparisons of three algorithms (OGD, Ader, Scream) under different regularizer coefficients. There are in total nine sub-figures, where each row presents the performance under a particular regularizer coefficient ($\alpha = 0.1, 1, 2$), and each column reports the performance in terms of a specific measure (overall loss, cumulative loss, and switching cost). For instance, Figure~\ref{fig:medium-overall} plots the overall loss under the setting of $\lambda = \alpha G$ with $\alpha = 0.1$. We first focus on the measure of overall loss. From the results of overall loss (Figures~\ref{fig:small-overall}, \ref{fig:medium-overall}, \ref{fig:large-overall}), we can see that under the case of small regularizer ($\alpha = 0.1$), Ader achieves the best, and Scream is comparable, while the performance of OGD is not good; with the medium regularizer ($\alpha = 1$), Scream evidently ranks the first, whereas Ader and OGD are not well-behaved; under the case of large regularizer ($\alpha = 2$), OGD performs surprisingly well, and Scream is comparable, whereas the performance of Ader is not desired. The results accord to our theory well, especially after a further examination of corresponding cumulative loss (Figures~\ref{fig:small-loss}, \ref{fig:medium-loss}, \ref{fig:large-loss}) and switching cost (Figures~\ref{fig:small-sc}, \ref{fig:medium-sc}, \ref{fig:large-sc}). Indeed, we can observe that Ader focuses on optimizing the dynamic regret (i.e., cumulative loss) but fails to control the switching cost; and OGD indeed yields a sequence of slow-moving decisions, but it fails to optimize the dynamic regret. Consequently, when the regularizer is small, one can optimize the overall loss by simply forgetting about the switching cost, and this is why Ader could behave well in this setting. Moreover, the switching cost plays a more important role in the overall loss with a large regularizer. Therefore, the algorithm can optimize the overall loss by simply producing a sequence of slow-moving decisions regardless of regret minimization. This is why OGD could achieve a surprisingly good performance in this setting. However, under the non-degenerate settings (for example, with medium regularizer), the two compared methods behave badly and Scream achieves the best. It is because our proposed Scream algorithm strikes a good balance between minimizing the dynamic regret and controlling the switching cost, owing to the novel  online ensemble structure via the introduced switching-cost-regularized loss. Therefore, the above empirical studies demonstrate the effectiveness of our proposed algorithm and its algorithmic components.

\subsection{Online Non-stochastic Control}
\label{sec:experiment-control}
This part further examines the performance of our proposed algorithm in online non-stochastic control.

\paragraph{Settings.} We conduct the experiments in synthetic linear dynamical system (LDS) environments and a real inverted pendulum environment. For the synthetic environment, we consider a time-varying LDS governed by $x_{t+1} = A_t x_t + B_t u_t + w_t$, where $w_t$ is the Gaussian noise, $A_t$ and $B_t$ are the time-varying system matrices to be specified later. It is generally challenging to control time-varying systems, and we here consider a special case that can be handled by the online non-stochastic control framework. Specifically, we design the system matrices as $A_t=A+\Delta_{t,A}$ and $B_t=B+\Delta_{t,B}$, where $A$ and $B$ are fixed, and $\Delta_{t,A},\Delta_{t,B}$ are time-varying zero-mean Gaussian random matrices. Notably, when applying online non-stochastic control methods, we only need to access $A$ and $B$, and the changes of system matrices can be treated as a part of disturbance. Indeed, we have $x_{t+1} = A x_t + B u_t + (w_t + \Delta_{t,A} x_t + \Delta_{t,B} u_t) = A x_t + B u_t + \tilde{w}_t$, where $\tilde{w}_t$ is the effective disturbance of this time-varying system. Moreover, we choose the quadratic loss as the online cost function, defined as $c_t(x_t, u_t) = x_t^\T Q_t x_t + u_t^\T R_t u_t$, where $Q_t = a_t I$ and $R_t = b_t I$ change over time. By setting different $a_t$ and $b_t$, we simulate the following two environments. (1) gradual change: in which $a_t = \sin (t / (10 \pi))$ and $b_t = \sin (t / (20 \pi))$; (2) abrupt change: the whole time horizon is divides into five stages, and the cost functions only change between different stages. In addition, we examine the performance in the real inverted pendulum environment, which is a commonly used benchmark consisting of a nonlinear and unstable system. The goal of this task is to balance the inverted pendulum by applying torque that will stabilize it in a vertically upright position. The state is a 2-dimensional vector denoted by $x_t = [\theta_t, \dot{\theta}_t]^\T$, where the first entry $\theta_t$ is the deviation angle normalized between $[-\pi,\pi]$ and the second entry $\dot{\theta}_t$ is the rotational velocity. The action is a 1-dimensional $u_t = \ddot{\theta}_t$ representing the torque applied on the system. The inverted pendulum environment is a non-linear dynamical system with transitions 
\begin{equation*}
    x_{t+1} = \begin{bmatrix} \theta_{t+1} \\ \dot{\theta}_{t+1} \end{bmatrix} = \begin{bmatrix} \theta_{t} + c\dot{\theta}_t \\ \dot{\theta}_{t} + a \sin(\theta_t+\pi) + b \ddot{\theta}_t \end{bmatrix}.
\end{equation*}
and the online cost function is set as $c_t(x_t, u_t) = a_t \theta_t^2 + b_t \dot{\theta}_t^2 + c_t \ddot{\theta}_t^2$, where $a_t=\sin (t / (10 \pi))$, $b_t=\sin (t / (20 \pi))$, and $c_t=\sin (t / (20 \pi))$ are slowly evolving parameters.

\begin{figure}[!t]
\centering
    \subfigure[LDS, gradual]{ \label{fig:lds-slow}
        \includegraphics[clip, trim=0.7cm 0.2cm 1.3cm 1.3cm,height=0.24\textwidth]{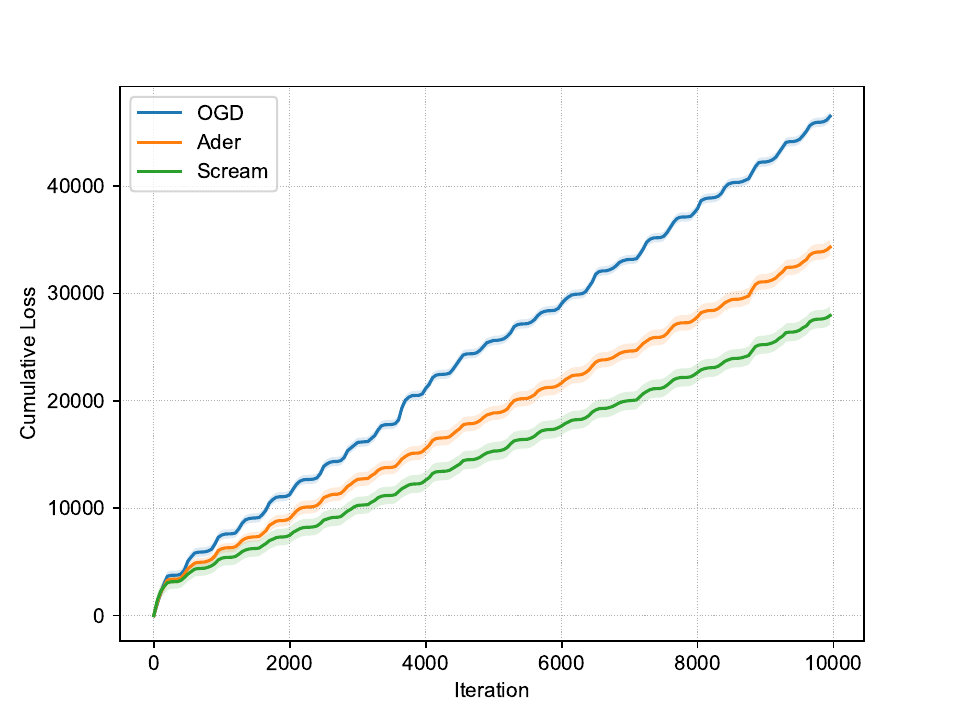}}
    \subfigure[LDS, abrupt]{ \label{fig:lds-abrupt}
        \includegraphics[clip, trim=0.7cm  0.2cm 1.3cm 1.3cm,height=0.24\textwidth]{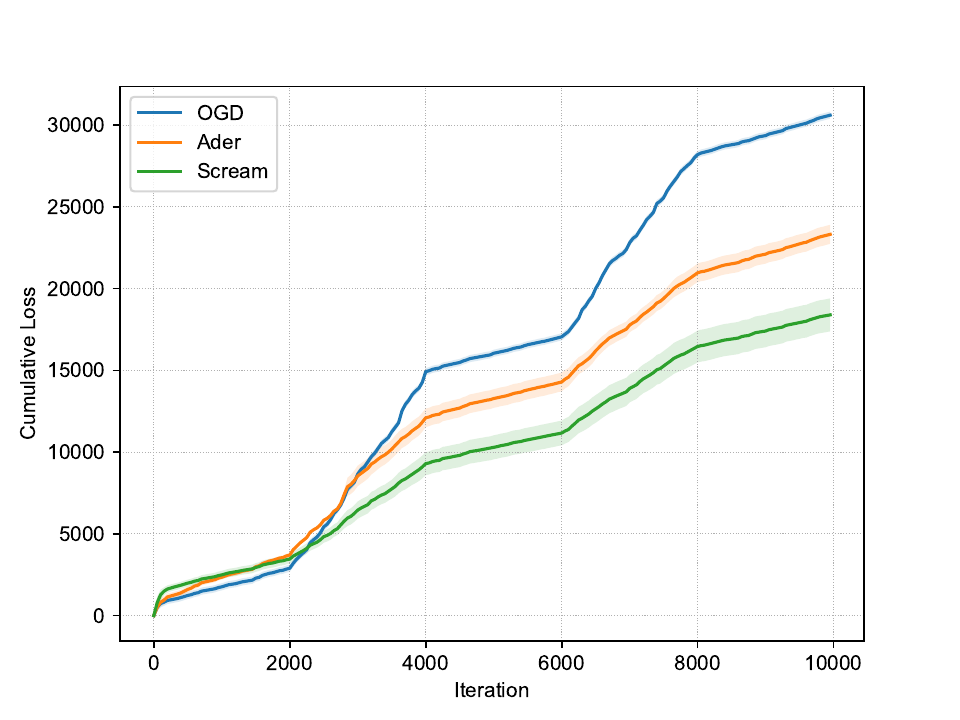}}
    \subfigure[Inverted Pendulum]{ \label{fig:pendulum}
        \includegraphics[clip, trim=1cm 0.2cm 1.3cm 1.3cm,height=0.24\textwidth]{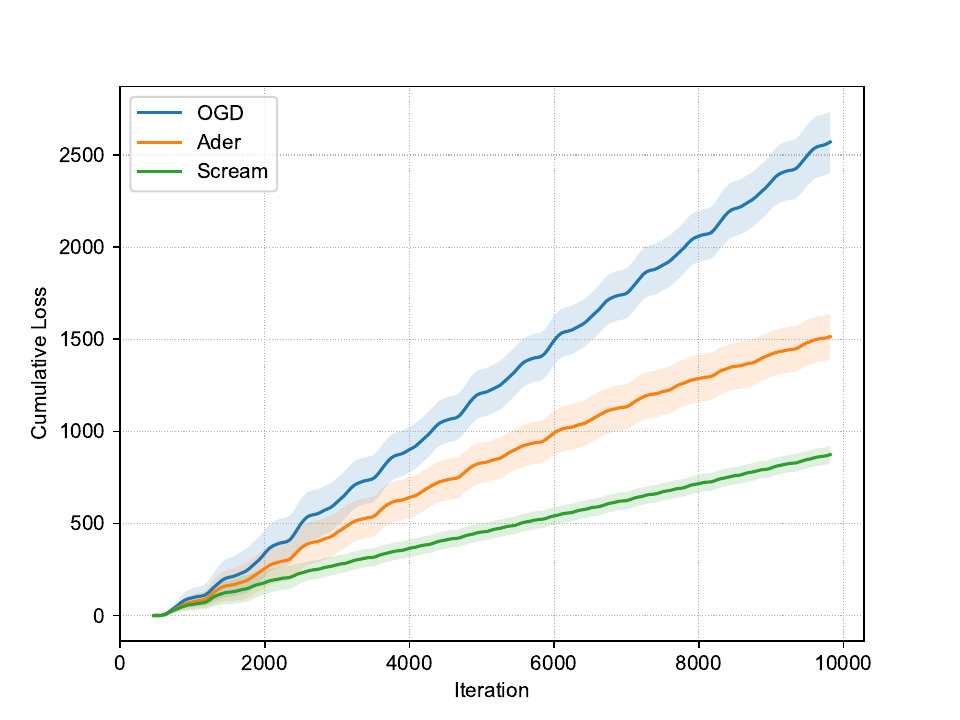}}
  \caption{Performance comparisons of different algorithms. The performance is measured by the cumulative loss, the smaller the better. From left to right: (a) synthetic time-varying LDS with gradual changes; (b) synthetic time-varying LDS with abrupt changes; (c) real pendulum environments.}
  \label{fig:comparison-control}
\end{figure}

\paragraph{Contenders and Measure.} We benchmark our proposed Scream.Control algorithm with the following two algorithms: (1) OGD.Control, which uses the OGD algorithm for the online non-stochastic control~\citep{ICML'19:online-control}; (2) Ader.Control, Ader is an OCO algorithm~\citep{NIPS'18:Zhang-Ader} that admits a two-layer structure and enjoys dynamic regret guarantee. Although it cannot deal with the OCO with memory problem (see discussions in Section~\ref{sec:OCOmemory-challenge}), we apply it for online-non-stochastic control, serving to validate the effectiveness of our proposed switching-cost-regularized surrogate loss. We denote the three control algorithms simply as ``OGD'', ``Ader'', and ``Scream'' when there is no confusion. We record the cumulative loss as the performance measure, namely, $\sum_{t=1}^T c_t(x_t, u_t)$. We repeat the experiments five times and report the mean and standard deviation.

\paragraph{Results.} Figure~\ref{fig:comparison-control} plots the performance comparison of three algorithms (OGD, Ader, Scream) in terms of the cumulative cost. The result shows that our proposed algorithm outperforms the other two contenders, which validates that the meta-base structure (compared with OGD) and the switching-cost-regularizer (compared with Ader) are necessary for online non-stochastic control problems in non-stationary environments. 
 \section{Conclusion}
\label{sec:conclusion}
This paper investigates the dynamic policy regret of online convex optimization with memory and online non-stochastic control. For OCO with memory, we propose the Scream algorithm and prove an optimal $\O(\sqrt{T(1+P_T)})$ dynamic policy regret, where $P_T$ is the path length of comparators that reflects the environmental non-stationarity. Our approach admits the meta-base online ensemble structure to handle uncertain environments and introduces a novel meta-base decomposition via switching-cost regularized loss to algorithmically address the tension between dynamic regret and switching cost. The approach is further used to design robust controllers for online non-stochastic control, where the underlying disturbance and cost functions could be chosen adversarially. We adopt the DAC parameterization and design the Scream.Control algorithm that provably achieves an $\Ot(\sqrt{T(1+P_T)})$ dynamic policy regret, where $P_T$ is the path length of compared controllers. Minimizing dynamic policy regret facilitates our controller with more robustness, since it can compete with any sequence of time-varying controllers instead of a fixed one.

In the future, we will explore the possibility of extension to \emph{bandit} feedback, where the only feedback to the controller is the loss value~\citep{NIPS'20:bandit-Koren,NIPS'20:bandit-control}. Moreover, it would be also intriguing to investigate whether dynamic policy regret can be improved when the cost functions are \emph{strongly convex} or \emph{exponentially concave}~\citep{ICML'20:log-control,COLT'21:baby-strong-convex,AISTATS'22:sc-proper}.  
\section*{Acknowledgments}
Peng Zhao, Yu-Hu Yan, and Zhi-Hua Zhou were supported by the National Science Foundation of China (61921006, 62206125) and JiangsuSF (BK20220776) and the Collaborative Innovation Center of Novel Software Technology and Industrialization. Peng Zhao was also supported by National Postdoctoral Program for Innovative Talent. Yu-Xiang Wang was supported by a startup grant from UCSB CS Department. Part of this work was conducted while Peng Zhao remotely visited UCSB in summer 2020. The authors thank Ming Yin and Dheeraj Baby for helpful discussions. We are also grateful for the anonymous reviewers for their insightful comments.

\appendix
\section{Preliminaries}
\label{sec:preliminary}
In this section, we present the preliminaries, including the dynamic regret results of memoryless online convex optimization, additional notions, and some technical lemmas. 

\subsection{Dynamic Regret of Memoryless OCO}
\label{sec:appendix-dynamic}

In this part we present the dynamic regret analysis of the online gradient descent (OGD) algorithm for memoryless online convex optimization~\citep{ICML'03:zinkvich,NIPS'18:Zhang-Ader,NIPS'20:sword}.

We first specify the problem settings and notations of memoryless online convex optimization. Specifically, the player iteratively selects a decision $\w \in \W$ from a convex set $\W \subseteq \R^d$ and then suffers a loss of $f_t(\w_t)$, in which the loss function $f_t :\W \mapsto \R$ is assumed to be convex and chosen adversarially by the environments. The performance measure we are concerned with is the \emph{dynamic regret}, defined as
\[
  \textnormal{D-Regret}_T(\v_1,\ldots,\v_T) = \sum_{t=1}^T f_t(\w_t) - \sum_{t=1}^T f_t(\v_t),
\]
where $\v_1,\ldots,\v_T \in \W$ is the comparator sequence arbitrarily chosen in the domain by the environments. The critical advantage of the above measure is that it supports to compete with a sequence of \emph{time-varying} comparators, instead of a fixed one as specified in the standard (static) regret.

In the development of dynamic regret of memoryless OCO, one of the most crucial building blocks is the well-known Online Gradient Descent (OGD) algorithm~\citep{ICML'03:zinkvich}, which starts from any $\w_1 \in \W$ and performs the following update, 
\begin{equation}
  \label{eq:appendix-OGD}
  \w_{t+1} = \Pi_{\W}[\w_t - \eta \nabla f_t(\w_t)].
\end{equation}
Here, $\eta > 0$ is the step size and $\Pi_{\W}[\cdot]$ denotes the Euclidean projection onto the nearest point in the feasible domain $\W$. The standard textbooks of online convex optimization~\citep{book'12:Shai-OCO,book'16:Hazan-OCO} show that OGD can achieves an optimal $\O(\sqrt{T})$ static regret for convex functions, providing with appropriate step size settings. Furthermore, such a simple algorithm actually also enjoys the following dynamic regret guarantee~\citep[Theorem 2]{ICML'03:zinkvich}, and we supply the proof for self-containedness.
\begin{myThm}
\label{thm:appendix-dynamic-regret-OCO}
Let $\W \in \R^d$ be a bounded convex and compact set in Euclidean space, and we denote by $D$ an upper bound of the diameter of the domain, i.e., $\norm{\w - \w'}_2 \leq D$ holds for any $\w, \w' \in \W$. Suppose the gradient norm of $f_t$ over $\W$ is bounded by $G$, i.e., $\norm{\nabla f_t(\w)}_2 \leq G$ holds for any $\w \in \W$ and $t \in [T]$. Then, OGD~\eqref{eq:appendix-OGD} enjoys the following dynamic regret,
\begin{align*}
    \textnormal{D-Regret}_T(\v_1,\ldots,\v_T) \leq \frac{\eta}{2} G^2 T + \frac{1}{2\eta}(D^2 + 2DP_T),
\end{align*}
which holds for any comparator sequence $\v_1,\ldots,\v_T \in \W$, and $P_T = \sum_{t=2}^{T} \norm{\v_{t-1} - \v_{t}}_2$ is the path length that measures the cumulative movements of the comparator sequence.
\end{myThm}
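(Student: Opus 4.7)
The plan is to follow the classical Zinkevich-style dynamic regret argument, which amounts to three ingredients: convexity to linearize the loss, the projection-contraction property of OGD to produce a one-step inequality, and a comparator-shifting trick to telescope across time-varying $\v_t$.

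First, by convexity of $f_t$, for each round $t$ I have $f_t(\w_t) - f_t(\v_t) \leq \langle \nabla f_t(\w_t), \w_t - \v_t\rangle$, so it suffices to bound $\sum_{t=1}^T \langle \nabla f_t(\w_t), \w_t - \v_t\rangle$. Next, from the OGD update $\w_{t+1} = \Pi_{\W}[\w_t - \eta \nabla f_t(\w_t)]$ and the non-expansiveness of the Euclidean projection onto the convex set $\W$ (which contains $\v_t$), I would expand $\|\w_{t+1} - \v_t\|_2^2 \leq \|\w_t - \eta \nabla f_t(\w_t) - \v_t\|_2^2$ and rearrange to obtain the standard per-step inequality
\[
\langle \nabla f_t(\w_t), \w_t - \v_t\rangle \leq \frac{1}{2\eta}\bigl(\|\w_t - \v_t\|_2^2 - \|\w_{t+1} - \v_t\|_2^2\bigr) + \frac{\eta}{2} \|\nabla f_t(\w_t)\|_2^2,
\]
and then use the gradient bound $\|\nabla f_t(\w_t)\|_2 \leq G$ to absorb the last term, which will contribute $\frac{\eta}{2} G^2 T$ to the final bound after summing.

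The nontrivial part, and the only real departure from the static regret proof, is handling $\sum_{t=1}^T (\|\w_t - \v_t\|_2^2 - \|\w_{t+1} - \v_t\|_2^2)$, because the comparator index does not align across consecutive $t$. The idea is to re-index and telescope: after writing this as $\|\w_1 - \v_1\|_2^2 - \|\w_{T+1} - \v_T\|_2^2 + \sum_{t=2}^T (\|\w_t - \v_t\|_2^2 - \|\w_t - \v_{t-1}\|_2^2)$, I would upper-bound each difference using $a^2 - b^2 = (a-b)(a+b)$ together with the reverse triangle inequality $|\|\w_t - \v_t\|_2 - \|\w_t - \v_{t-1}\|_2| \leq \|\v_t - \v_{t-1}\|_2$ and the diameter bound $\|\w_t - \v_t\|_2 + \|\w_t - \v_{t-1}\|_2 \leq 2D$. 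This produces $2D\|\v_t - \v_{t-1}\|_2$ per step, which sums to $2D P_T$; together with the leading $\|\w_1 - \v_1\|_2^2 \leq D^2$ and dropping the non-positive term $-\|\w_{T+1} - \v_T\|_2^2$, I obtain the aggregated bound $D^2 + 2D P_T$.

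I expect the main (mild) obstacle to be the comparator-shifting step, since it is the place where the time-varying nature of $\v_t$ enters; everything else is the standard OGD analysis. Combining the two pieces by dividing the telescoping-like bound by $2\eta$ and adding the aggregated gradient-norm contribution immediately yields $\textnormal{D-Regret}_T(\v_{1:T}) \leq \frac{\eta}{2} G^2 T + \frac{1}{2\eta}(D^2 + 2DP_T)$, as claimed.
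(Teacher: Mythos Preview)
Your proposal is correct and follows essentially the same approach as the paper's proof: convexity to linearize, projection non-expansiveness to get the per-step inequality, and the comparator-shifting telescope to extract $D^2 + 2DP_T$. The only cosmetic difference is in bounding $\|\w_t - \v_t\|_2^2 - \|\w_t - \v_{t-1}\|_2^2$: the paper expands it as the inner product $\langle \v_{t-1} - \v_t, 2\w_t - \v_{t-1} - \v_t\rangle$ and applies Cauchy--Schwarz with the diameter bound, whereas you use $a^2 - b^2 = (a-b)(a+b)$ together with the reverse triangle inequality; both yield the identical $2D\|\v_t - \v_{t-1}\|_2$ bound.
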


\begin{proof}
Since the online functions are convex, we have 
\begin{equation*}
  \textnormal{D-Regret}_T(\v_1,\dots,\v_T) = \sum_{t=1}^T f_t(\w_t) - \sum_{t=1}^{T} f_t(\v_t) \leq \sum_{t=1}^{T} \inner{\nabla f_t(\w_t)}{\w_t - \v_t}.
\end{equation*}

Thus, it suffices to bound the sum of $\inner{\nabla f_t(\w_t)}{\w_t - \v_t}$ over iterations. Note that from the update rule in \eqref{eq:OGD-update},
\[
  \begin{split}
    \norm{\w_{t+1} - \v_t}_2^2 &=  \left\lVert\Pi_{\mathcal{X}}[\w_t - \eta \nabla f_t(\w_t)] - \v_t \right\rVert_2^2\\
    & \leq \norm{\w_t - \eta \nabla f_t(\w_t) - \v_t}_2^2\\
    & = \eta^2 \norm{\nabla f_t(\w_t)}_2^2 - 2\eta \inner{\nabla f_t(\w_t)}{\w_t - \v_t} + \norm{\w_t - \v_t}_2^2 
  \end{split}
\]
The inequality holds due to Pythagorean theorem~\citep[Theorem 2.1]{book'16:Hazan-OCO}. After rearranging, we obtain
\[
  \inner{\nabla f_t(\w_t)}{\w_t - \v_t} \leq \frac{\eta}{2} \norm{\nabla f_t(\w_t)}_2^2 + \frac{1}{2\eta}\left(\norm{\w_t - \v_t}_2^2 - \norm{\w_{t+1} - \v_t}_2^2 \right).
\]
Summing the above inequality from $t=1$ to $T$ yields,
\begin{align*}
    \textnormal{D-Regret}_T(\v_1,\ldots,\v_T) \leq \frac{\eta}{2}\sum_{t=1}^{T}\norm{\nabla f_t(\w_t)}_2^2 + \frac{1}{2\eta}\sum_{t=1}^{T}\left(\norm{\w_t - \v_t}_2^2 - \norm{\w_{t+1} - \v_t}_2^2 \right).
\end{align*}
We further provide an upper bound for the second term on the right-hand side. Indeed,
\begin{align*}
  & \sum_{t=1}^{T}\left(\norm{\w_t - \v_t}_2^2 - \norm{\w_{t+1} - \v_t}_2^2 \right) \\
  \leq {} & \sum_{t=1}^{T} \norm{\w_t - \v_t}_2^2 - \sum_{t=2}^{T} \norm{\w_t - \v_{t-1}}_2^2\\
  \leq {} & \norm{\w_1 - \v_1}_2^2 + \sum_{t=2}^{T} \left(\norm{\w_t - \v_{t}}_2^2 - \norm{\w_t - \v_{t-1}}_2^2\right)\\
  = {} & \norm{\w_1 - \v_1}_2^2 + \sum_{t=2}^{T} \inner{\v_{t-1} - \v_{t}}{2\w_t - \v_{t-1} - \v_{t}} \leq D^2 + 2D \sum_{t=2}^{T} \norm{\v_{t-1} - \v_{t}}_2.
\end{align*}
Combining all above inequalities, we have 
\begin{align*}
    \textnormal{D-Regret}_T(\v_1,\ldots,\v_T) \leq {} & \frac{\eta}{2}\sum_{t=1}^{T}\norm{\nabla f_t(\w_t)}_2^2 +  \frac{1}{2\eta}\left(D^2 + 2D \sum_{t=2}^{T} \norm{\v_{t-1} - \v_{t}}_2\right) \\
    \leq {} & \frac{\eta}{2} G^2 T + \frac{1}{2\eta}(D^2 + 2DP_T).
\end{align*}
Hence, we complete the proof.
\end{proof}

\subsection{Additional Notions}
\label{sec:appendix-notions}

We introduce the formal definition of strongly stable linear controllers~\citep{ICML'18:Cohen,ICML'19:online-control}. Indeed, the stable condition can guarantee the convergence, but nothing can be ensured about the rate of convergence. While working on the class of strongly stable controllers, we can establish the non-asymptotic convergence rate.

\begin{myDef}
\label{def:controller-set}
A linear controller $K$ is $(\kappa, \gamma)$-strongly stable if there exist matrices $L, H$ satisfying $A - BK = HLH^{-1}$, such that the following two conditions are satisfied:
\begin{itemize}
  \item[(i)] The spectral norm of $L$ satisfies $\norm{L} \leq 1 -\gamma$.
  \item[(ii)] The controller and transforming matrices are bounded, i.e., $\norm{K}, \norm{H}, \norm{H^{-1}} \leq \kappa$. 
\end{itemize}
\end{myDef}

\subsection{Technical Lemmas}
\label{appendix:tech-lemmas}
The following lemmas are important in analyzing algorithms based on the mirror descent.

\begin{myLemma}[Lemma 3.2 of~\citet{OPT'93:Bregman}]
\label{lemma:bregman-divergence}
Let $\X$ be a convex set in a Banach space $\mathcal{B}$ and $f: \X \mapsto \R$ be a closed proper convex function on $\X$. Given a convex regularizer $\Rcal:\X \mapsto \R$ and its induced Bregman divergence $\D_\Rcal(\cdot,\cdot)$, any update of the form
\[
  \x_k = \argmin_{\x \in \X} \{ f(\x) + \D_\Rcal(\x,\x_{k-1})\}
\]
satisfies the following inequality for any $\u \in \X$,
\begin{equation*}
  f(\x_k) - f(\u) \leq \D_\Rcal(\u, \x_{k-1}) - \D_\Rcal(\u, \x_{k}) - \D_\Rcal(\x_k, \x_{k-1}).
\end{equation*}
\end{myLemma}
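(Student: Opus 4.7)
The plan is to combine the first-order optimality condition for the Bregman-proximal update with the well-known three-point identity for Bregman divergences. Throughout I will assume, as is standard whenever Bregman divergences are invoked, that $\Rcal$ is differentiable on $\X$ so that $\nabla_1 \D_\Rcal(\x, \y) = \nabla \Rcal(\x) - \nabla \Rcal(\y)$; this is implicit in the statement since $\D_\Rcal$ is referred to as the ``induced'' Bregman divergence.

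First I would write down the optimality condition. Since $\x_k$ minimizes the convex objective $\x \mapsto f(\x) + \D_\Rcal(\x, \x_{k-1})$ over $\X$, there exists a subgradient $g \in \partial f(\x_k)$ such that, for every $\u \in \X$,
\begin{equation*}
\inner{g + \nabla \Rcal(\x_k) - \nabla \Rcal(\x_{k-1})}{\u - \x_k} \geq 0,
\end{equation*}
which rearranges to $\inner{g}{\x_k - \u} \leq \inner{\nabla \Rcal(\x_k) - \nabla \Rcal(\x_{k-1})}{\u - \x_k}$. Combining this with the subgradient inequality $f(\x_k) - f(\u) \leq \inner{g}{\x_k - \u}$ (which uses closedness, properness, and convexity of $f$) yields the key intermediate bound
\begin{equation*}
f(\x_k) - f(\u) \leq \inner{\nabla \Rcal(\x_k) - \nabla \Rcal(\x_{k-1})}{\u - \x_k}.
\end{equation*}

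Next I would establish the three-point identity
\begin{equation*}
\inner{\nabla \Rcal(\x_k) - \nabla \Rcal(\x_{k-1})}{\u - \x_k} = \D_\Rcal(\u, \x_{k-1}) - \D_\Rcal(\u, \x_k) - \D_\Rcal(\x_k, \x_{k-1}),
\end{equation*}
which follows by expanding each Bregman divergence via the definition $\D_\Rcal(\a, \b) = \Rcal(\a) - \Rcal(\b) - \inner{\nabla \Rcal(\b)}{\a - \b}$ and collecting terms: the $\Rcal(\u)$, $\Rcal(\x_k)$, $\Rcal(\x_{k-1})$ contributions on the right cancel pairwise, leaving exactly the inner product on the left. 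Substituting this identity into the previous display gives the claimed inequality.

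The only subtle point is the justification of the subdifferential optimality condition when $f$ is merely closed proper convex (possibly nondifferentiable). The standard argument goes through the sum rule for subdifferentials of convex functions combined with differentiability of $\D_\Rcal(\cdot, \x_{k-1})$ at $\x_k$, so that $\partial(f + \D_\Rcal(\cdot, \x_{k-1}))(\x_k) = \partial f(\x_k) + \{\nabla \Rcal(\x_k) - \nabla \Rcal(\x_{k-1})\}$, together with the variational characterization of a minimizer over the convex set $\X$. This is the only step requiring mild regularity beyond what is stated; everything else is algebraic manipulation of the Bregman divergence. Once the optimality inequality and the three-point identity are in hand, the proof is essentially a one-line combination.
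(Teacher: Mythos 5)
Your proof is correct: the first-order optimality condition for the Bregman-proximal step, combined with the subgradient inequality and the three-point identity for $\D_\Rcal$, yields exactly the stated bound. Note that the paper does not prove this lemma at all---it is quoted verbatim as Lemma 3.2 of the cited Chen--Teboulle reference---and your argument is precisely the standard one used there, so there is no substantive divergence; the only mild caveats (differentiability of $\Rcal$ and the subdifferential sum rule for the nonsmooth $f$) are the same regularity assumptions implicit in the original source.
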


\begin{myLemma}
\label{lemma:bregman-divergence-sc}
If the regularizer $\Rcal:\X \mapsto \R$ is $\lambda$-strongly convex with respect to a norm $\| \cdot \|$, then the induced Bregman divergence is lower-bounded as $\D_\Rcal(\x,\y) \geq \frac{\lambda}{2} \norm{\x - \y}^2$.
\end{myLemma}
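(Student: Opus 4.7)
The plan is to unfold the definition of the Bregman divergence and invoke the first-order characterization of strong convexity. Writing $\D_\Rcal(\x,\y) = \Rcal(\x) - \Rcal(\y) - \inner{\nabla \Rcal(\y)}{\x - \y}$, the claim reduces to a lower bound on the excess of $\Rcal(\x)$ over its linearization at $\y$. By the standard first-order characterization, $\lambda$-strong convexity of $\Rcal$ with respect to $\norm{\cdot}$ is exactly the statement
\begin{equation*}
\Rcal(\x) \geq \Rcal(\y) + \inner{\nabla \Rcal(\y)}{\x - \y} + \frac{\lambda}{2} \norm{\x - \y}^2 \qquad \forall \x, \y \in \X,
\end{equation*}
so subtracting the affine term from both sides immediately yields $\D_\Rcal(\x,\y) \geq \frac{\lambda}{2}\norm{\x-\y}^2$. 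This matches the intended form; the displayed bound in the statement appears to have a typographical omission of the factor $\lambda$ and of the square on the norm, and the proof as sketched delivers the corrected version.

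There is essentially no obstacle here: the entire argument is a single rearrangement once the definition of $\D_\Rcal$ and the first-order strong convexity inequality are aligned. The only mild subtlety is that $\Rcal$ need not be differentiable everywhere; in that case one replaces $\nabla \Rcal(\y)$ by any subgradient $g \in \partial \Rcal(\y)$ used to define the generalized Bregman divergence, and the same inequality persists because the first-order bound from strong convexity holds for every choice of subgradient. No further machinery or auxiliary lemmas are required, and the step can be stated in a line or two following the definitions.
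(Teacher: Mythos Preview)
The paper states this lemma as a standard technical fact without proof, so there is no authors' argument to compare against. Your proof is correct and is exactly the standard one-line derivation: expand the Bregman divergence and apply the first-order characterization of $\lambda$-strong convexity. You are also right that the displayed bound in the statement is missing both the factor $\lambda$ and the square on the norm; the paper's own application of the lemma (in the proof of Lemma~\ref{lemma:OMD-switching-cost}) uses it with $\lambda=1$ and a squared norm, confirming that the intended inequality is $\D_\Rcal(\x,\y) \geq \tfrac{\lambda}{2}\norm{\x-\y}^2$.
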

\begin{proof}
 By the definition of strong convexity, we know that for any $\x,\y \in \X$, $\psi(\x) \geq \psi(\y) + \nabla \psi(\y)^\T (\x - \y) + \frac{\lambda}{2} \norm{\x - \y}^2$. Reformulating the inequality and combining the definition of Bregman divergence, we know that $D_{\psi}(\x,\y) \define \psi(\x) - \psi(\y) + \nabla \psi(\y)^\T (\x - \y) \geq \frac{\lambda}{2} \norm{\x - \y}^2$, which ends the proof.
\end{proof}

The following concentration inequality is used in analyzing dynamic policy regret for non-stochastic control with unknown systems.
\begin{myLemma}[Azuma-Hoeffding's Inequality for Vectors~{\citep[Theorem 1.8]{vector-Azuma}}]
  \label{suplem:vector-azuma}
  Suppose that $S_m = \sum_{t=1}^m X_t$ is a martingale where $X_1,\ldots,X_m$ take values in $\R^n$ and are such that $\E[X_t]=\mathbf{0}$ and $\norm{X_t}_2 \le D$ for all $t$, for $t >0$. Then for every $\epsilon >0$, 
  \begin{equation*}
      \Pr[\norm{S_m}_2 \ge \epsilon] \le 2e^2 e^{-\frac{\epsilon^2}{2mD^2}}.
  \end{equation*}
\end{myLemma}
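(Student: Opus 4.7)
The plan is to establish a dimension-free exponential moment bound for $\norm{S_m}_2$ and then invoke Markov's inequality. A naive reduction---project onto a fixed direction, apply scalar Azuma--Hoeffding, and union bound over an $\epsilon$-net of the unit sphere in $\R^n$---would pay a factor of order $(c/\epsilon)^n$ and hence carry an unwanted dimension dependence, so I would instead follow a Hayes-style argument that stays intrinsic to the Hilbert geometry throughout.

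First I would introduce the potential $\phi(x) \define \cosh(\beta \norm{x}_2)$ for a parameter $\beta > 0$ to be optimized later and aim for a one-step bound of the form
\begin{equation*}
\E\mbr{\cosh(\beta \norm{S_t}_2) \mid \F_{t-1}} \leq e^{\beta D + \beta^2 D^2/2} \cdot \cosh(\beta \norm{S_{t-1}}_2).
\end{equation*}
To obtain this I would decompose $X_t = \xi_t e_{t-1} + Y_t$, where $e_{t-1} = S_{t-1}/\norm{S_{t-1}}_2$, $\xi_t = \inner{e_{t-1}}{X_t}$, and $Y_t \perp e_{t-1}$; then by Pythagoras and the triangle inequality, $\norm{S_t}_2 \leq \abs{\norm{S_{t-1}}_2 + \xi_t} + \norm{Y_t}_2$. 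The algebraic identity $\cosh(a+b) \leq \cosh(a)\, e^{b}$ for $a,b \geq 0$ absorbs the orthogonal piece into an $e^{\beta D}$ factor (using $\norm{Y_t}_2 \leq D$), after which splitting $\cosh(\beta(\norm{S_{t-1}}_2 + \xi_t)) = \tfrac{1}{2}(e^{\beta(\cdot)} + e^{-\beta(\cdot)})$ lets me apply scalar Hoeffding to $\E[e^{\pm \beta \xi_t} \mid \F_{t-1}]$ via $\E[\xi_t \mid \F_{t-1}] = 0$ and $\abs{\xi_t} \leq D$, producing the $e^{\beta^2 D^2/2}$ factor.

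Iterating the one-step bound for $t = 1, \ldots, m$ gives $\E[\cosh(\beta \norm{S_m}_2)] \leq \exp(m\beta D + m\beta^2 D^2/2)$. Markov on $\{\norm{S_m}_2 \geq \epsilon\}$ together with $\cosh(\beta \epsilon) \geq \tfrac{1}{2} e^{\beta \epsilon}$ then yields
\begin{equation*}
\Pr\mbr{\norm{S_m}_2 \geq \epsilon} \leq \frac{\E[\cosh(\beta \norm{S_m}_2)]}{\cosh(\beta \epsilon)} \leq 2 \exp\sbr{m\beta D + \frac{m \beta^2 D^2}{2} - \beta \epsilon}.
\end{equation*}
Optimizing over $\beta > 0$ at $\beta = (\epsilon - mD)/(mD^2)$ collapses the exponent to $-(\epsilon - mD)^2/(2mD^2)$; a bit of algebraic slack---which is precisely what the $e^2$ prefactor accommodates---converts this into the clean Gaussian form $-\epsilon^2/(2mD^2)$ valid uniformly for all $\epsilon$, with the small-$\epsilon$ regime handled by the prefactor rendering the bound vacuous (i.e., larger than $1$).

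The main obstacle is the one-step inequality itself. The Hilbert-space analogue of the scalar Hoeffding lemma for $\cosh(\beta \norm{\cdot}_2)$ is not free because the curvature of this potential depends on the angle between $X_t$ and $S_{t-1}$, and the degenerate case $S_{t-1} = 0$ requires separate handling through the trivial bound $\cosh(\beta \norm{X_t}_2) \leq \cosh(\beta D)$. Cleanly isolating the parallel component (to which scalar Hoeffding applies) from the orthogonal component (which contributes only an $e^{\beta D}$ slack) is the delicate step; once that piece is in place, the remainder is a routine exponential-moment optimization.
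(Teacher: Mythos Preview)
The paper does not prove this lemma; it is simply quoted as \citep[Theorem 1.8]{vector-Azuma} and used as a black box in the moment-recovery argument. So there is no ``paper's own proof'' to compare against, and the question is whether your sketch actually establishes the stated inequality.

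It does not, and the failure is quantitative rather than cosmetic. Your one-step bound
\[
\E\mbr{\cosh(\beta \norm{S_t}_2)\mid\F_{t-1}}\le e^{\beta D+\beta^2D^2/2}\cosh(\beta\norm{S_{t-1}}_2)
\]
is correct as written, but the extra $e^{\beta D}$ factor (coming from the crude estimate $\norm{Y_t}_2\le D$ on the orthogonal part) is fatal. Iterating $m$ times produces $e^{m\beta D+m\beta^2D^2/2}$, and after the Markov step the exponent to minimize is $m\beta D+\tfrac{1}{2}m\beta^2D^2-\beta\epsilon$, whose minimizer $\beta=(\epsilon-mD)/(mD^2)$ is positive only when $\epsilon>mD$. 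But $\norm{S_m}_2\le mD$ deterministically, so the only regime in which your bound is non-trivial is one where the probability is already zero. In the relevant range $D\sqrt{cm}\lesssim\epsilon\le mD$ you have nothing beyond $\Pr\le 1$, and the claimed ``$e^2$ slack'' does not rescue you: the inequality $2e^{-(\epsilon-mD)^2/(2mD^2)}\le 2e^2 e^{-\epsilon^2/(2mD^2)}$ is equivalent to $\epsilon/D\le 2+m/2$, which is both irrelevant (you never established the left side for $\epsilon\le mD$) and false in part of the range anyway.

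The repair is exactly the ``delicate step'' you flag but do not carry out: you must get a one-step factor of the form $e^{c\beta^2D^2}$ \emph{without} the linear $e^{\beta D}$ term. Hayes does this by exploiting $\xi_t^2+\norm{Y_t}_2^2\le D^2$ (so the orthogonal piece is small precisely when the parallel piece is large) together with a second-order expansion of $x\mapsto e^{\norm{x}_2}$ rather than the crude triangle-inequality split; an alternative is the Pinelis smooth-martingale approach in Hilbert space. Either way, bounding $\norm{Y_t}_2$ by $D$ uniformly and paying $e^{\beta D}$ per step is the step that breaks the argument.
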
 \section{Omitted Details for Section~\ref{sec:OCOwMemory} (OCO with Memory)}
\label{sec:appendix-OCOwithMemory}
In this section, we present omitted details for Section~\ref{sec:OCOwMemory} OCO with memory, including proofs of Theorem~\ref{thm:dynamic-regret-OCOwMemory} (in Appendix~\ref{sec:proof-OCOwMemory-static}) and Theorem~\ref{thm:main-result-OCOmemory} (in Appendix~\ref{sec:appendix-proof-OCOmemory}). Moreover, we provide the proof of the switching cost decomposition~\eqref{eq:sc-decompose} in Appendix~\ref{sec:sc-decompose} and supply more details for the online mirror descent in Appendix~\ref{sec:appendix-proof-OMD}. The proofs of Theorem~\ref{thm:main-result-OCOmemory}, Theorem~\ref{thm:PEA-SC-lower}, Theorem~\ref{thm:optimal-memory}, Theorem~\ref{thm:lower-dynamic} are listed in the following sections. We finally discuss the memory dependence in Appendix~\ref{sec:appendix-memory-dependency}.

\subsection{Proof of Theorem~\ref{thm:dynamic-regret-OCOwMemory}}
\label{sec:proof-OCOwMemory-static}
\begin{proof}
The coordinate-Lipschitz continuity of $f_t$ (Assumption~\ref{assume:Lipschitz}) implies that
\begin{align*}
  \abs{f_t(\w_{t-m},\ldots,\w_t) - \f_t(\w_t)} \leq L \cdot \sum_{i=1}^m \norm{\w_t - \w_{t-i}}_2 \leq mL \sum_{i=1}^m \norm{\w_{t-i+1} - \w_{t-i}}_2.
\end{align*}
Therefore, we have
\begin{equation}
  \label{eq:memory-inequality}
  \sum_{t=m}^{T} f_t(\w_{t-m},\ldots,\w_t) - \sum_{t=m}^{T} \f_t(\w_t) \leq m^2 L \sum_{t=m}^{T} \norm{\w_t - \w_{t-1}}_2,
\end{equation}
and the dynamic policy regret can be thus upper bounded by
\begin{equation}
\label{eq:dynamic-regret-split}
  \begin{split}
  {} & \textnormal{D-Regret}_T(\v_1,\ldots,\v_T) = \sum_{t=1}^{T} f_t(\w_{t-m},\ldots,\w_t) - \sum_{t=1}^{T} f_t(\v_{t-m},\ldots,\v_t) \\
  \overset{\eqref{eq:memory-inequality}}{\leq} {} & \underbrace{\sum_{t=1}^{T} \f_t(\w_t) - \sum_{t=1}^{T} \f_t(\v_t)}_{\mathtt{dynamic~regret~over~unary~loss}} + \underbrace{\lambda \sum_{t=1}^{T} \norm{\w_{t} - \w_{t-1}}_2}_{\mathtt{switching~cost~of~decisions}} + \underbrace{\lambda \sum_{t=1}^{T} \norm{\v_{t} - \v_{t-1}}_2}_{\mathtt{switching~cost~of~comparators}}, 
  \end{split}
\end{equation}
where we define $\lambda \define  m^2L$ for notational convenience. Note that the first term is the dynamic regret over the unary loss, which is optimized by OGD over the unary loss. Since the sequence of unary loss $\{\f_t\}_{t=1}^T$ is convex and \emph{memoryless}, from the standard dynamic regret analysis~\citep{ICML'03:zinkvich,NIPS'18:Zhang-Ader}, as shown in Theorem~\ref{thm:appendix-dynamic-regret-OCO}, we get
\begin{equation}
  \label{eq:dynamic-regret-bound}
  \sum_{t=1}^T \f_t(\w_t) - \sum_{t=1}^T \f_t(\v_t) \leq \frac{\eta}{2} G^2 T + \frac{1}{2\eta} (D^2 + 2DP_T),
\end{equation}
where $P_T = \sum_{t=2}^{T} \norm{\v_t - \v_{t-1}}_2$ is the path length measuring the fluctuation of the comparator sequence $\v_1, \v_2,\ldots,\v_{T}$. Next, the last term of~\eqref{eq:dynamic-regret-split} is the switching cost of the comparators, which is exactly the path length $\lambda P_T$. 

So we only need to further examine the switching cost of the decisions, i.e., $\sum_{t=2}^{T} \norm{\w_{t-1} - \w_{t}}_2$, as well as the dynamic regret over the unary loss, i.e., $\sum_{t=1}^{T} \f_t(\w_t) - \sum_{t=1}^{T}\f_t(\v_t)$. By the non-expansive property of the projection operator, we can derive an upper bound for the switching cost:
\begin{equation}
  \label{eq:switching-cost-bound}
  \sum_{t=1}^T \norm{\w_{t} - \w_{t-1}}_2 = \sum_{t=1}^T \norm{\Pi_{\W}[\w_{t-1} - \eta \nabla \f_t(\w_{t-1})] - \w_{t-1}}_2 \leq \eta \sum_{t=1}^T \norm{\nabla \f_t(\w_{t-1})}_2 \leq \eta GT.
\end{equation}
Combining above two inequalities~\eqref{eq:switching-cost-bound} and~\eqref{eq:dynamic-regret-bound} yields
\[
  \sum_{t=1}^{T} f_t(\w_{t-m},\ldots,\w_t) - \sum_{t=1}^{T} f_t(\v_{t-m},\ldots,\v_t) \leq \frac{\eta}{2}(G^2 + 2\lambda G) T + \frac{1}{2\eta} (D^2 + 2DP_T) + \lambda P_T,
\]
with $\lambda = m^2 L$. We thus compete the proof.
\end{proof}

\subsection{{Proof of Switching Cost Decomposition}}
\label{sec:sc-decompose}
The following lemma restates the switching cost decomposition presented in~\eqref{eq:sc-decompose}. 

\begin{myLemma}
\label{lemma:sc-decompose}
The switching cost of meta-base outputs can be upper bounded as
\[
  \sum_{t=2}^{T} \norm{\w_{t} - \w_{t-1}}_2 \leq D \sum_{t=2}^{T} \norm{\p_t - \p_{t-1}}_1 + \sum_{t=2}^{T} \sum_{i=1}^{N} p_{t,i} \norm{\w_{t,i} - \w_{t-1,i}}_2.
\]
\end{myLemma}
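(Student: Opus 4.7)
The plan is to bound the switching cost of the aggregated iterate $\w_t = \sum_{i=1}^N p_{t,i} \w_{t,i}$ by splitting the increment $\w_t - \w_{t-1}$ into a ``weight-change'' part and an ``expert-change'' part via a standard add-and-subtract trick. Specifically, I would write
\[
\w_t - \w_{t-1} = \sum_{i=1}^N p_{t,i} \w_{t,i} - \sum_{i=1}^N p_{t-1,i} \w_{t-1,i} = \sum_{i=1}^N p_{t,i} \bigl(\w_{t,i} - \w_{t-1,i}\bigr) + \sum_{i=1}^N (p_{t,i} - p_{t-1,i}) \w_{t-1,i},
\]
obtained by inserting $\pm \sum_i p_{t,i} \w_{t-1,i}$. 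The first summand naturally captures the expert-level movement weighted by the current meta-weights, and the second summand isolates the effect of the meta-update.

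Next I would take $\|\cdot\|_2$ of both sides and apply the triangle inequality. For the first summand, pulling the norm inside gives $\sum_{i=1}^N p_{t,i} \|\w_{t,i} - \w_{t-1,i}\|_2$, which matches the second term on the right-hand side of the target inequality. For the second summand, Hölder's inequality (or just the triangle inequality) yields
\[
\Bigl\|\sum_{i=1}^N (p_{t,i} - p_{t-1,i}) \w_{t-1,i}\Bigr\|_2 \leq \sum_{i=1}^N \abs{p_{t,i} - p_{t-1,i}} \cdot \norm{\w_{t-1,i}}_2 \leq D \cdot \norm{\p_t - \p_{t-1}}_1,
\]
where the last inequality uses Assumption~\ref{assume:bounded-norm}: since $\mathbf{0} \in \W$ and the diameter is at most $D$, every $\w_{t-1,i} \in \W$ satisfies $\norm{\w_{t-1,i}}_2 \leq D$.

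Summing the resulting per-round bound over $t = 2, \ldots, T$ immediately yields the claimed decomposition. The argument is essentially a one-line triangle inequality, so there is no real obstacle; the only point worth flagging is the use of $\mathbf{0} \in \W$ to convert the domain diameter into an absolute bound on $\norm{\w_{t-1,i}}_2$. (If one preferred, one could equivalently subtract any fixed anchor $\c$ from every $\w_{t-1,i}$ inside the second summand, exploiting $\sum_i (p_{t,i} - p_{t-1,i}) = 0$, and then bound $\norm{\w_{t-1,i} - \c}_2 \leq D$ using only the diameter assumption.)
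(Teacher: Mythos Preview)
Your proposal is correct and essentially identical to the paper's proof: the paper also inserts $\pm \sum_{i} p_{t,i}\w_{t-1,i}$, applies the triangle inequality to get the expert-movement term $\sum_{i} p_{t,i}\norm{\w_{t,i}-\w_{t-1,i}}_2$, and bounds the weight-change term by $D\norm{\p_t-\p_{t-1}}_1$ using Assumption~\ref{assume:bounded-norm}. Your remark about the anchor trick (exploiting $\sum_i (p_{t,i}-p_{t-1,i})=0$) is a valid alternative to invoking $\mathbf{0}\in\W$, but is not needed here.
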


\begin{proof}
By the meta-base structure, the final decision of each round is $\w_t = \sum_{i=1}^{N} p_{t,i} \w_{t,i}$. Therefore, we can expand the switching cost of the final prediction sequence as
\begin{align}
  \norm{\w_{t} - \w_{t-1}}_2 = {}& \left\| \sum_{i=1}^{N} p_{t,i}\w_{t,i} - \sum_{i=1}^{N} p_{t-1,i}\w_{t-1,i} \right\|_2 \nonumber\\
  \leq {}& \left\| \sum_{i=1}^{N} p_{t,i}\w_{t,i} - \sum_{i=1}^{N} p_{t,i}\w_{t-1,i} \right\|_2 + \left\| \sum_{i=1}^{N} p_{t,i}\w_{t-1,i} - \sum_{i=1}^{N} p_{t-1,i}\w_{t-1,i} \right\|_2 \nonumber \\
  \leq {} & \sum_{i=1}^{N} p_{t,i} \norm{\w_{t,i} - \w_{t-1,i}}_2 + D \sum_{i=1}^{N} \abs{p_{t,i} - p_{t-1,i}}\nonumber \\
  ={} & \sum_{i=1}^{N} p_{t,i} \norm{\w_{t,i} - \w_{t-1,i}}_2 + D \norm{\p_t - \p_{t-1}}_1, \label{eq:sc-upper}
\end{align}
where the second step holds due to the triangle inequality and the third step is true due to the boundedness of the feasible domain (Assumption~\ref{assume:bounded-norm}). Hence, we complete the proof.
\end{proof}

\subsection{Additional Results for Online Mirror Descent}
\label{sec:appendix-proof-OMD}
In this section, we present additional results and descriptions for Online Mirror Descent (OMD), which enables a unified view for algorithm design of both meta-algorithm and base-algorithm. 

Consider the standard online convex optimization setting, and the sequence of online convex functions are $\{h_t\}_{t=1,\ldots,T}$ with $h_t :\W \mapsto \R$. Online mirror descent starts from any $\w_1 \in \W$, and at iteration $t$, the algorithm performs the following update:
\begin{equation}
  \label{eq:OMD}
  \w_{t+1} = \argmin_{\w \in \W} \eta \inner{\nabla h_t(\w_t)}{\w} + \D_{\Rcal}(\w,\w_t),
\end{equation}
where $\eta > 0$ is the step size. The regularizer $\Rcal: \W \mapsto \R$ is a differentiable convex function defined on $\W$ and is assumed (without loss of generality) to be $1$-strongly convex w.r.t. some norm $\| \cdot \|$ over $\W$. The induced Bregman divergence $\D_{\Rcal}$ is defined by $\D_{\Rcal}(\x,\y) = \Rcal(\x) - \Rcal(\y) - \inner{\nabla \Rcal(\y)}{\x - \y}$.

The following generic result gives an upper bound of  dynamic regret with switching cost of OMD, which can be regarded as a generalization of Theorem~\ref{thm:dynamic-regret-OCOwMemory} from gradient descent (for Euclidean norm) to  mirror descent (for general primal-dual norm).
\begin{myThm}
\label{thm:OMD}
Online Mirror Descent~\eqref{eq:OMD} satisfies that
\begin{equation}
  \label{eq:OMD-dynamic-Reg}
  \sum_{t=1}^{T} h_t(\w_t) - \sum_{t=1}^{T} h_t(\v_t) + \lambda \sum_{t=2}^{T}\norm{\w_t - \w_{t-1}} \leq \frac{1}{\eta}\left( R^2 + \gamma P_T \right) + \eta(\lambda G + G^2) T,
\end{equation}
provided that $\D_{\Rcal}(\x,\z)-\D_\Rcal(\y,\z) \leq \gamma\norm{\x-\y}$ holds for any $\x,\y,\z\in\W$. In above, $R^2 = \sup_{\x,\y\in\W} \D_\Rcal(\x,\y)$, and $G = \sup_{\w \in \W, t \in [T]} \norm{\nabla h_t(\w)}_*$. Note that the above result holds for any comparator sequence $\v_1,\ldots,\v_T \in \W$. 
\end{myThm}

\begin{myRemark}
The dynamic regret of Theorem~\ref{thm:OMD} holds against \emph{any} comparator sequence in the domain. In particular, we can set them as the best fixed decision in hindsight and thus obtain  static regret with switching cost, $\sum_{t=1}^{T} h_t(\w_t) - \sum_{t=1}^{T} h_t(\w^*) + \lambda \sum_{t=2}^{T}\norm{\w_t - \w_{t-1}} \leq R^2/\eta + \eta(\lambda G + G^2) T$, that holds for any $\w^* \in \W$. A technical caveat is that  when deriving the static regret, the Bregman divergence is not required to satisfy the Lipschitz condition.
\end{myRemark}

Theorem~\ref{thm:OMD} exhibits a general analysis for the dynamic regret and switching cost of OMD. By flexibly choosing the regularizer $\Rcal$ and comparator sequence $\v_1,\ldots,\v_T$, we have the following two implications, which correspond to base-regret (dynamic regret with switching cost of OGD) and meta-regret (static regret with switching cost of Hedge) respectively.

Before presenting the proof of Theorem~\ref{thm:OMD}, we first analyze the switching cost of the online mirror descent, as demonstrated in the following stability lemma.
\begin{myLemma}
\label{lemma:OMD-switching-cost}
For Online Mirror Descent~\eqref{eq:OMD}, the instantaneous switching cost is at most
\begin{equation}
  \label{eq:OMD-switching-cost}
  \norm{\w_t - \w_{t+1}} \leq \eta \norm {\nabla h_t(\w_t)}_*.
\end{equation}
\end{myLemma}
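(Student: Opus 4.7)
The plan is to combine first-order optimality of the mirror step with strong convexity of the regularizer. Concretely, I will set up the OMD update as the minimizer $\w_{t+1} = \argmin_{\w \in \W}\{\eta \inner{\nabla h_t(\w_t)}{\w} + \D_\Rcal(\w, \w_t)\}$ and invoke Lemma~\ref{lemma:bregman-divergence} (from the appendix) with the test point $\u = \w_t$. Since $\D_\Rcal(\w_t,\w_t) = 0$ and Bregman divergences are non-negative, this immediately yields
\[
\eta \inner{\nabla h_t(\w_t)}{\w_{t+1} - \w_t} \;\leq\; -\D_\Rcal(\w_t, \w_{t+1}) - \D_\Rcal(\w_{t+1}, \w_t).
\]

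Next I will lower-bound the right-hand side using the symmetrized Bregman identity $\D_\Rcal(\w_t,\w_{t+1}) + \D_\Rcal(\w_{t+1},\w_t) = \inner{\nabla \Rcal(\w_{t+1}) - \nabla \Rcal(\w_t)}{\w_{t+1} - \w_t}$, which is at least $\norm{\w_{t+1} - \w_t}^2$ by the monotone-gradient form of $1$-strong convexity of $\Rcal$. On the left-hand side, I will apply the dual-norm inequality $\inner{\nabla h_t(\w_t)}{\w_t - \w_{t+1}} \leq \norm{\nabla h_t(\w_t)}_* \norm{\w_t - \w_{t+1}}$. Chaining the two bounds gives
\[
\norm{\w_{t+1} - \w_t}^2 \;\leq\; \eta \norm{\nabla h_t(\w_t)}_* \norm{\w_{t+1} - \w_t},
\]
and dividing through by $\norm{\w_{t+1} - \w_t}$ (the trivial case $\w_{t+1} = \w_t$ being immediate) yields the claimed instantaneous switching-cost bound.

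I do not anticipate a serious obstacle in this proof; the main subtlety is keeping the constant sharp. If one only uses the one-sided lower bound $\D_\Rcal(\w_{t+1},\w_t) \geq \tfrac{1}{2}\norm{\w_{t+1}-\w_t}^2$ coming from $1$-strong convexity, the resulting bound picks up an extra factor of $2$. Exploiting \emph{both} $\D_\Rcal(\w_t,\w_{t+1})$ and $\D_\Rcal(\w_{t+1},\w_t)$ via the symmetrized identity above is what recovers the tight constant $\eta$ in the statement. This matches the familiar Euclidean specialization in which $\Rcal(\cdot) = \tfrac{1}{2}\norm{\cdot}_2^2$ and the bound follows directly from non-expansiveness of the projection applied to $\w_t - \eta \nabla h_t(\w_t)$ versus $\w_t$.
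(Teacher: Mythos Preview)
Your proposal is correct and follows essentially the same approach as the paper: both invoke Lemma~\ref{lemma:bregman-divergence} at the test point $\u=\w_t$, lower-bound $\D_\Rcal(\w_t,\w_{t+1})+\D_\Rcal(\w_{t+1},\w_t)$ by $\norm{\w_t-\w_{t+1}}^2$ via $1$-strong convexity, and finish with the dual-norm (H\"older) inequality. The only cosmetic difference is that the paper applies Lemma~\ref{lemma:bregman-divergence-sc} to each Bregman term separately (getting $\tfrac12\norm{\cdot}^2$ twice) whereas you go through the symmetrized Bregman identity and the monotone-gradient form of strong convexity; these are equivalent and yield the same tight constant.
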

\begin{proof}
From the update procedure of OMD~\eqref{eq:OMD} and Lemma~\ref{lemma:bregman-divergence}, we know that 
\[
  \inner{\w_{t+1} - \w_t}{\eta \nabla h_t(\w_t)} \leq \DR{\w_t}{\w_{t}} - \DR{\w_t}{\w_{t+1}} - \DR{\w_{t+1}}{\w_t},
\]
which implies
\[
  \DR{\w_t}{\w_{t+1}} + \DR{\w_{t+1}}{\w_t} \leq \inner{\w_{t} - \w_{t+1}}{\eta \nabla h_t(\w_t)}.
\]
Since the regularizer $\Rcal$ is chosen as a $1$-strongly convex function with respect to the norm $\| \cdot \|$, by Lemma~\ref{lemma:bregman-divergence-sc} we have
\[
  \DR{\w_t}{\w_{t+1}} + \DR{\w_{t+1}}{\w_t} \geq \norm{\w_t - \w_{t+1}}^2.
\]
Combining above two inequalities and further applying the Hölder's inequality, we obtain
\[
  \norm{\w_t - \w_{t+1}}^2 \leq \inner{\w_t - \w_{t+1}}{\eta \nabla h_t(\w_t)} \leq \norm{\w_t - \w_{t+1}} \norm{\eta \nabla h_t(\w_t)}_*.
\]
Therefore, we conclude that $\norm{\w_t - \w_{t+1}} \leq \eta \norm{\nabla h_t(\w_t)}_*$ and finish the proof.
\end{proof}

Based on the above stability lemma, we can now prove  Theorem~\ref{thm:OMD} regarding dynamic regret with switching cost for \textsc{OMD}.

\begin{proof}[{of Theorem~\ref{thm:OMD}}]
Notice that the dynamic regret can be decomposed as follows:
\begin{align*}
  \sum_{t=1}^{T} h_t(\w_t) - \sum_{t=1}^{T} h_t(\v_t) \leq {} & \sum_{t=1}^{T} \inner{\nabla h_t(\w_t)}{\w_t - \v_t} \\
  = {} & \underbrace{\sum_{t=1}^{T} \inner{\nabla h_t(\w_t)}{\w_{t} - \w_{t+1}}}_{\term{a}} + \underbrace{\sum_{t=1}^{T} \inner{\nabla h_t(\w_t)}{\w_{t+1} - \v_t}}_{\term{b}}.
\end{align*}
From Lemma~\ref{lemma:OMD-switching-cost} and Hölder's inequality, we have
\begin{equation}
  \label{eq:OMD-term-a}
  \term{a} \leq \sum_{t=1}^{T} \norm{\nabla h_t(\w_t)}_* \norm{\w_t - \w_{t+1}} \leq \eta \sum_{t=1}^{T} \norm{\nabla h_t(\w_t)}_*^2.
\end{equation}
Next, we investigate the term (b):
\begin{align}
  \term{b} \leq {} & \frac{1}{\eta} \sum_{t=1}^{T} \left( \DR{\v_t}{\w_t} - \DR{\v_t}{\w_{t+1}} - \DR{\w_{t+1}}{\w_t}\right) \nonumber\\
  \leq {} & \frac{1}{\eta} \sum_{t=2}^{T} \left( \DR{\v_t}{\w_t} - \DR{\v_{t-1}}{\w_t}\right) + \DR{\v_1}{\w_1} \nonumber\\
  \leq {} & \frac{\gamma}{\eta} \sum_{t=2}^{T} \norm{\v_t - \v_{t-1}} + \frac{1}{\eta} R^2, \label{eq:OMD-term-b}
\end{align}
where the first inequality holds due to Lemma~\ref{lemma:bregman-divergence}, and the second inequality makes uses of the non-negativity of the Bregman divergence. The last inequality holds due to the assumption of Lipschitz property that $\D_{\Rcal}(\x,\z)-\D_\Rcal(\y,\z) \leq \gamma\norm{\x-\y}$ holds for any $\x,\y,\z\in\W$. Furthermore, the switching cost can be bounded by Lemma~\ref{lemma:OMD-switching-cost},
\begin{equation}
  \label{eq:OMD-term-switching-cost}
  \sum_{t=2}^{T} \norm{\w_t - \w_{t-1}} \leq \eta \sum_{t=2}^{T} \norm{\nabla h_{t-1}(\w_{t-1})}_*.
\end{equation}
Combining~\eqref{eq:OMD-term-a},~\eqref{eq:OMD-term-b}, and~\eqref{eq:OMD-term-switching-cost}, we can attain that
\begin{align*}
    {} & \lambda \sum_{t=2}^{T} \norm{\w_t - \w_{t-1}} + \sum_{t=1}^{T} h_t(\w_t) - \sum_{t=1}^{T} h_t(\v_t) \\
  \leq {} & \frac{1}{\eta}(R^2 + \gamma P_T) + \eta \sum_{t=1}^{T} (\lambda \norm{\nabla h_{t}(\w_{t})}_* + \norm{\nabla h_{t-1}(\w_{t-1})}_*^2) \\
  \leq {} &  \frac{1}{\eta}(R^2 + \gamma P_T) + \eta (\lambda G + G^2) T,
\end{align*}
which finishes the proof.
\end{proof}

As we mentioned earlier, Theorem~\ref{thm:dynamic-regret-OCOwMemory} can be regarded as a corollary of Theorem~\ref{thm:OMD}, by specifying the Euclidean norm and $\psi(\w) = \frac{1}{2} \norm{\w}_2^2$. We give a formal statement in the following corollary.
\begin{myCor}
\label{corollary:OGD}
Setting the $\ell_2$ regularizer $\psi(\w) = \frac{1}{2} \norm{\w}_2^2$ and step size $\eta > 0$ for OMD, suppose $\norm{\nabla \f_t(\w)}_2 \leq G$ and $\norm{\w - \w'}_2 \leq D$ hold for all $\w \in \W$ and $t \in [T]$, then we have
\begin{equation}
  \label{eq:corollary-OGD}
  \lambda \sum_{t=2}^{T}\norm{\w_t - \w_{t-1}}_2 + \sum_{t=1}^{T} \f_t(\w_t) - \sum_{t=1}^{T} \f_t(\v_t) \leq (G^2 + \lambda G) \eta T + \frac{1}{2\eta}(D^2 + 2DP_T),
\end{equation}
which holds for any comparator sequence $\v_1,\ldots,\v_T \in \W$, and $P_T = \sum_{t=2}^{T} \norm{\v_{t-1} - \v_{t}}_2$ is the path length that measures the cumulative movements of the comparator sequence.
\end{myCor}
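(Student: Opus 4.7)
The plan is to derive Corollary~\ref{corollary:OGD} as a direct specialization of Theorem~\ref{thm:OMD} to the Euclidean setting, where the regularizer is $\psi(\w) = \frac{1}{2}\norm{\w}_2^2$ and both the primal and dual norms coincide with $\norm{\cdot}_2$. In this case the induced Bregman divergence is $\DR{\x}{\y} = \frac{1}{2}\norm{\x-\y}_2^2$, and the OMD update~\eqref{eq:OMD} reduces to the projected gradient step $\w_{t+1} = \Pi_\W[\w_t - \eta \nabla \f_t(\w_t)]$. The function $\psi$ is $1$-strongly convex with respect to $\norm{\cdot}_2$, which matches the normalization assumed in Theorem~\ref{thm:OMD}.

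Next I would identify the two structural constants appearing in that theorem. The diameter term is immediate: $R^2 = \sup_{\x,\y\in\W}\DR{\x}{\y} \leq \tfrac{1}{2}D^2$ by the diameter bound. For the Lipschitz-like property of the Bregman divergence required in the hypothesis, a direct expansion yields
\begin{equation*}
\DR{\x}{\z} - \DR{\y}{\z} = \left\langle \x - \y,\; \frac{\x+\y}{2} - \z \right\rangle \leq D\,\norm{\x-\y}_2,
\end{equation*}
using Cauchy--Schwarz together with the fact that $\tfrac{\x+\y}{2}$ and $\z$ both lie in $\W$ by convexity and are thus within Euclidean distance at most $D$. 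Hence $\gamma = D$ is an admissible constant. Moreover, self-duality of $\norm{\cdot}_2$ gives $\sup_{t,\w}\norm{\nabla \f_t(\w)}_* = \sup_{t,\w}\norm{\nabla \f_t(\w)}_2 \leq G$, so the constant $G$ in Theorem~\ref{thm:OMD} agrees with the assumed gradient bound.

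Substituting $R^2 = D^2/2$ and $\gamma = D$ into the bound~\eqref{eq:OMD-dynamic-regret} produces
\begin{equation*}
\frac{1}{\eta}\left(\frac{D^2}{2} + D P_T\right) + \eta(\lambda G + G^2) T \;=\; \frac{1}{2\eta}(D^2 + 2D P_T) + (G^2 + \lambda G)\eta T,
\end{equation*}
which is exactly the claimed~\eqref{eq:corollary-OGD}; the ``for any comparator sequence $\v_{1:T}\in\W$'' clause is inherited verbatim from Theorem~\ref{thm:OMD}. I do not anticipate any substantial obstacle here: the only step that is not entirely mechanical is the identification of the Bregman-divergence Lipschitz constant $\gamma = D$, which is settled by the short calculation displayed above, after which the corollary is purely a matter of rearranging constants.
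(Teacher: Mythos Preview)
Your proposal is correct and matches the paper's intended approach: the paper presents Corollary~\ref{corollary:OGD} as an immediate specialization of Theorem~\ref{thm:OMD} to the Euclidean regularizer $\psi(\w)=\tfrac{1}{2}\norm{\w}_2^2$, without spelling out the constants. Your identification of $R^2 = D^2/2$ and $\gamma = D$ (via the inner-product expansion of $\DR{\x}{\z}-\DR{\y}{\z}$ and convexity of $\W$) is exactly the computation the paper leaves implicit, and the substitution into~\eqref{eq:OMD-dynamic-regret} then yields~\eqref{eq:corollary-OGD} verbatim.
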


Further, we present a corollary regarding the static regret with switching cost for the meta-algorithm, which is essentially a specialization of \textsc{OMD} algorithm by setting the negative-entropy regularizer.
\begin{myCor}
\label{corollary:Hedge}
Setting the negative-entropy regularizer $\Rcal(\p) = \sum_{i=1}^{N} p_i \log p_i$ and learning rate $\epsilon> 0$ for OMD, suppose $\norm{\ellb_t}_\infty \leq G$ holds for any $t \in [T]$ and the algorithm starts from the initial weight $p_1 \in \Delta_N$, then we have
\begin{equation}
  \label{eq:corollary-non-uniform-Hedge}
  \lambda \sum_{t=2}^{T}\norm{\p_t - \p_{t-1}}_1 + \sum_{t=1}^{T} \inner{\p_t}{\ellb_t} - \sum_{t=1}^{T} \ell_{t,i} \leq \frac{\ln(1/p_{1,i})}{\epsilon} + \epsilon(\lambda G + G^2) T.
\end{equation}
\end{myCor}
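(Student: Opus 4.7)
The plan is to derive this corollary as a direct specialization of Theorem~\ref{thm:OMD} to the simplex geometry with a fixed (point-mass) comparator, so that the dynamic path-length term vanishes and we recover the claimed static regret with switching cost bound.

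First, I would check that the ingredients needed to invoke Theorem~\ref{thm:OMD} are in place. The decision domain is the simplex $\Delta_N$, the online loss is the linear function $h_t(\p) = \inner{\p}{\ellb_t}$, and the regularizer $\Rcal(\p) = \sum_{i=1}^{N} p_i \log p_i$ is $1$-strongly convex with respect to the $\ell_1$ norm on $\Delta_N$ by Pinsker's inequality. The corresponding dual norm is $\ell_\infty$, so the gradient bound required by Theorem~\ref{thm:OMD} specializes to $\norm{\nabla h_t(\p_t)}_\infty = \norm{\ellb_t}_\infty \leq G$, which matches the assumption of the corollary. Also, the OMD update~\eqref{eq:OMD} with this regularizer yields the usual exponential-weights step $p_{t+1,i} \propto p_{t,i} \exp(-\epsilon \ell_{t,i})$, matching the meta-algorithm of~\textsc{Scream}.

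Next, I would choose the comparator sequence to be the constant point mass $\v_t = \e_i$ for all $t$, where $\e_i$ is the $i$-th standard basis vector. This makes the comparator path-length zero, so the $\gamma P_T$ contribution in Theorem~\ref{thm:OMD} disappears and we do not need to verify the Bregman-Lipschitz condition. Indeed, inspecting term~(b) of the proof of Theorem~\ref{thm:OMD}, when $\v_t = \v$ is constant, the telescoping sum $\sum_{t=2}^{T}(\D_\Rcal(\v_t, \w_t) - \D_\Rcal(\v_{t-1}, \w_t))$ vanishes and only $\D_\Rcal(\v, \w_1)$ survives, so the generic diameter $R^2$ in Theorem~\ref{thm:OMD} can be replaced by the single quantity $\D_\Rcal(\e_i, \p_1)$ in this specialization.

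Finally, I would compute this Bregman divergence explicitly. Since the Bregman divergence of the negative entropy is the KL divergence, $\D_\Rcal(\e_i, \p_1) = \sum_{j=1}^{N} (\e_i)_j \log\!\big((\e_i)_j / p_{1,j}\big) = \ln(1/p_{1,i})$ under the convention $0 \log 0 = 0$. Substituting into the specialized Theorem~\ref{thm:OMD} bound with step size $\epsilon$ gives exactly $\ln(1/p_{1,i})/\epsilon + \epsilon(\lambda G + G^2) T$, as claimed. I expect the only subtlety, rather than a genuine obstacle, to be the bookkeeping needed to justify replacing the generic diameter $R^2 = \sup_{\x,\y \in \Delta_N} \D_\Rcal(\x,\y)$ by the finite quantity $\D_\Rcal(\e_i, \p_1)$; this matters because the KL diameter blows up near the boundary of $\Delta_N$, so without this refinement the bound would be vacuous. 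The telescoping observation above removes this issue cleanly.
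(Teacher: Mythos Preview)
Your proposal is correct and follows essentially the same approach as the paper: specialize the proof of Theorem~\ref{thm:OMD} to the simplex with the negative-entropy regularizer, take the constant comparator $\v_t = \e_i$ so that the path-length term vanishes and only $\D_\Rcal(\e_i,\p_1)$ survives from the telescoping, and then evaluate this KL divergence as $\ln(1/p_{1,i})$. Your explicit remark that one must bypass the generic diameter $R^2$ (which is infinite for KL on the simplex) and go back to the telescoping in the proof is a useful clarification that the paper leaves implicit.
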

\begin{proof}
From the proof of Theorem~\ref{thm:OMD}, we can easily obtain that
\[
  \lambda \sum_{t=2}^{T}\norm{\p_t - \p_{t-1}}_1 + \sum_{t=1}^{T} \inner{\p_t}{\ellb_t} - \sum_{t=1}^{T} \ell_{t,i}  \leq \frac{\DR{\mathbf{e}_i}{\p_1}}{\epsilon} + \epsilon (\lambda G + G^2) T.
\] 
When choosing the negative-entropy regularizer, the induced Bregman divergence becomes Kullback-Leibler divergence, i.e., $\DR{\boldsymbol{q}}{\p} = \mbox{KL}(\boldsymbol{q}, \p) = \sum_{i=1}^{N} q_i \ln (q_i/p_i)$. Therefore, $\DR{\boldsymbol{e}_i}{\p_1} = \ln (1/p_{1,i})$, which implies the desired result.
\end{proof}

\subsection{Proof of Theorem~\ref{thm:main-result-OCOmemory}}
\label{sec:appendix-proof-OCOmemory}
\begin{proof}
As indicated in~\eqref{eq:dynamic-regret-split}, the dynamic policy regret can be upper bounded by three terms, including dynamic regret over the unary regret, switching cost of decisions, and switching cost of comparators. The third term is essentially the path length of the comparators, and we focus on the first two terms.
\begin{align*}
  {} & \sum_{t=1}^T \f_t(\w_t) -  \sum_{t=1}^T \f_t(\v_t) + \lambda \sum_{t=2}^T \norm{\w_{t} - \w_{t-1}}_2\\
  \overset{\eqref{eq:sc-decompose}}{\leq} {} & \sum_{t=1}^T \inner{\nabla \f_t(\w_t)}{\w_t - \v_t} + \lambda D \sum_{t=2}^{T} \norm{\p_t - \p_{t-1}}_1 + \lambda \sum_{t=2}^{T} \sum_{i=1}^{N} p_{t,i} \norm{\w_{t,i} - \w_{t-1,i}}_2\\
  = {} & \sum_{t=1}^T \sum_{i=1}^{N} p_{t,i} \Big(\inner{\nabla \f_t(\w_t)}{\w_{t,i}} + \lambda \norm{\w_{t,i} - \w_{t-1,i}}_2\Big) - \sum_{t=1}^T \Big(\inner{\nabla \f_t(\w_t)}{\w_{t,i}} + \lambda \norm{\w_{t,i} - \w_{t-1,i}}_2\Big) \\
  {} & \qquad + \lambda D \sum_{t=2}^{T} \norm{\p_t - \p_{t-1}}_1 + \sum_{t=1}^T \Big(\inner{\nabla \f_t(\w_t)}{\w_{t,i}} - \inner{\nabla \f_t(\w_t)}{\v_t}\Big) + \lambda \sum_{t=2}^T \norm{\w_{t,i} - \w_{t-1,i}}_2\\
  = {} & \underbrace{\sum_{t=1}^T \big( \inner{\p_t}{\ellb_t}  - \ell_{t,i} \big) + \lambda D \sum_{t=2}^{T} \norm{\p_t - \p_{t-1}}_1}_{\meta} + \underbrace{\sum_{t=1}^T  \big( g_t(\w_{t,i}) - g_t(\v_t) \big) + \lambda \sum_{t=2}^T \norm{\w_{t,i} - \w_{t-1,i}}_2}_{\base},
\end{align*}
where the last step uses the convexity of $\f_t$ and the definition of linearized loss $g_t(\w) = \inner{\nabla \f_t(\w_t)}{\w}$. We will formally prove that our proposed algorithm optimizes the right-hand side of above inequality.

\paragraph{Bounding Meta-regret.} Denote by $\mathbf{e}_i$ the $i$-th standard basis of $\R^N$-space and by $\lambda' = \lambda D$ for simplicity. Denote by $G_{\text{meta}} = \max_{t \in [T]} \norm{\ellb_t}_{\infty}$ the maximum scale of the loss of meta-algorithm. Since the meta-algorithm actually performs Hedge over the switching-cost-regularized loss $\ellb_t \in \R^N$, Corollary~\ref{corollary:Hedge} implies that for any $i \in [N]$, 
\begin{equation} 
\label{eq:meta-regret-analysis}
\begin{split}  
  \sum_{t=1}^{T}\inner{\p_t}{\ellb_t} - \sum_{t=1}^{T} \ell_{t,i} + \lambda' \sum_{t=2}^{T}\norm{\p_t - \p_{t-1}}_1 \leq {} & \epsilon (\lambda' G_{\text{meta}} + G_{\text{meta}}^2) T + \frac{\DR{\mathbf{e}_i}{\p_1}}{\epsilon}\\
  = {} & \epsilon (\lambda D +G_{\text{meta}})G_{\text{meta}} T + \frac{\ln(1/p_{1,i})}{\epsilon}\\
  \leq {} & \epsilon (\lambda D +G_{\text{meta}})G_{\text{meta}} T + \frac{2 \ln(i+1)}{\epsilon},
\end{split}
\end{equation}
where the last step holds because we adopt a non-uniform weight initialization with the initial weight $\p_1 \in \Delta_N$ set as $p_{1,i} = \frac{1}{i(i+1)} \cdot \frac{N+1}{N}$ for any $i \in [N]$. By choosing the learning rate as $\epsilon = \epsilon^* = \sqrt{\frac{2}{\Gmeta (\lambda D + \Gmeta)T}}$, we can obtain the following upper bound for the meta-regret, 
\begin{equation}
  \label{eq:meta-upper-final}
  \sum_{t=1}^{T}\inner{\p_t}{\ellb_t} - \sum_{t=1}^{T} \ell_{t,i} + \lambda' \sum_{t=2}^{T}\norm{\p_t - \p_{t-1}}_1 \leq \sqrt{2 \Gmeta (\lambda D + \Gmeta)T}\left( 1 + \ln(i+1) \right).
\end{equation}
Note that the dependence of learning rate tuning on $T$ can be removed by either a time-varying tuning or doubling trick. We now present an upper bound for $G_{\text{meta}}$, indeed,
\begin{align}
  \ell_{t,i} = {} & \inner{\nabla \f_t(\w_t)}{\w_{t,i}} + \lambda \norm{\w_{t,i} - \w_{t-1,i}}_2 \leq \inner{\nabla \f_t(\w_t)}{\w_{t,i}} + \lambda \eta_i \norm{\nabla \f_t(\w_t)}_2 \notag\\
  \leq {} & GD + \lambda \eta_i G \leq GD + \lambda \eta_N G \leq GD \left(1 + 2\lambda \sqrt{\frac{1}{\lambda G + G^2}}\right) = \O(\sqrt{\lambda}) \label{eq:Gmeta}.
\end{align}

\paragraph{Bounding Base-regret.} As specified by our algorithm, there are multiple base-learners, each performing OGD over the linearized loss with a particular step size $\eta_i \in \H$ for base-learner $\B_i$:
\begin{equation*}
  \w_{t+1,i} = \Pi_{\W}[\w_{t,i} - \eta_i \nabla g_t(\w_{t,i})] = \Pi_{\W}[\w_{t,i} - \eta_i \nabla \f_t(\w_t)].
\end{equation*}
As a result, Theorem~\ref{thm:OMD} implies that the base-regret satisfies
\begin{equation}
  \label{eq:expert-upper-final}
  \sum_{t=1}^T g_t(\w_{t,i}) - \sum_{t=1}^T g_t(\v_t) + \lambda \sum_{t=2}^T \norm{\w_{t,i} - \w_{t-1,i}}_2 \leq (G^2 + \lambda G) \eta_i T + \frac{1}{2\eta_i} (D^2 + 2DP_T),
\end{equation}
which holds for any comparator sequence $\v_1,\ldots,\v_T \in \W$ as well as any base-learner $i \in [N]$.

\paragraph{Bounding Overall Dynamic Regret.} Due to the boundedness of the path length, we know that the optimal step size $\eta_*$ provably lies in the range of $[\eta_1, \eta_N]$. Furthermore, by the construction of the pool of candidate step sizes, we can confirm that there exists an index $i^* \in [N]$ ensuring $\eta_{i^*} \leq \eta_* \leq \eta_{i^*+1} = 2\eta_{i^*}$. Therefore, we have  
\begin{equation}
  \label{eq:optimal-index}
  i^* \leq \Big \lceil \frac{1}{2} \log_2 \sbr{1 + \frac{2P_T}{D}} \Big \rceil + 1.
\end{equation}
Notice that the meta-base decomposition at the beginning of the proof holds for any index of base-learners $i \in [N]$. Thus, in particular, we can choose the index $i^*$ and achieve the following result by using the upper bounds of meta-regret~\eqref{eq:meta-upper-final} and base-regret~\eqref{eq:expert-upper-final}.
\begin{align}
  {} & \sum_{t=1}^T \f_t(\w_t) -  \sum_{t=1}^T \f_t(\v_t) + \lambda \sum_{t=2}^T \norm{\w_{t} - \w_{t-1}}_2 \notag\\
  \leq {} & \underbrace{\sum_{t=1}^T \big( \inner{\p_t}{\ellb_t}  - \ell_{t,i^*} \big) + \lambda D \sum_{t=2}^{T} \norm{\p_t - \p_{t-1}}_1}_{\meta} + \underbrace{\sum_{t=1}^T  \big( g_t(\w_{t,i^*}) - g_t(\v_t) \big) + \lambda \sum_{t=2}^T \norm{\w_{t,i^*} - \w_{t-1,i^*}}_2}_{\base} \notag\\
  \leq {} & \sqrt{2 \Gmeta (\lambda D + \Gmeta)T}\left( 1 + \ln(i^*+1) \right) + (G^2 + \lambda G) \eta_{i^*} T + \frac{1}{2\eta_{i^*}} (D^2 + 2DP_T) \notag\\
  \leq {} & \sqrt{2 \Gmeta (\lambda D + \Gmeta)T}\left( 1 + \ln(i^*+1) \right) + (G^2 + \lambda G) \eta_{*} T + \frac{1}{\eta_{*}} (D^2 + 2DP_T) \notag\\
  \lesssim {} & \sqrt{2 (GD+\sqrt{\lambda}) (\lambda D + GD+\sqrt{\lambda})T}\left( 1 + \ln(i^*+1) \right) + \sqrt{(G^2 + \lambda G)(D^2 + 2DP_T)T} \label{eq:Scream}\\
  \leq {} & \O\left(\lambda^{\frac{3}{4}} \sqrt{T} (1+ \log \log P_T)\right) + \O\left(\sqrt{\lambda T(1+P_T)}\right)\notag,
\end{align}
where in \eqref{eq:Scream}, we use $a \lesssim b$ to represent $a = \O(b)$. Therefore, we have 
\begin{align*}
\textnormal{D-Regret}_T(\v_{1:T}) \leq {}& \sum_{t=1}^T \f_t(\w_t) -  \sum_{t=1}^T \f_t(\v_t) + \lambda \sum_{t=2}^T \norm{\w_{t} - \w_{t-1}}_2 + \lambda \sum_{t=2}^T \norm{\v_{t} - \v_{t-1}}_2\\
\leq {}& \O\left(\lambda^{\frac{3}{4}} \sqrt{T} (1+ \log \log P_T) + \sqrt{\lambda T(1+P_T)} + \lambda P_T\right) \leq \O(\sqrt{T(1+P_T)}).
\end{align*}
The last step omits the dependence on $\lambda$. Moreover, the inequality holds due to the following observation:
\begin{align*}
  \textnormal{D-Regret}_T(\v_{1:T}) \leq {}&\O(\sqrt{T(1+P_T)}) + \O(P_T) \\
  \leq {} & \O(\sqrt{T(1+P_T) + P_T^2}) \tag{$\sqrt{a} + \sqrt{b} \leq \sqrt{2(a+b)}$}\\
  = {} & \O(\sqrt{T + (T + P_T) P_T}) \\
  \leq {} &\O(\sqrt{T(1+P_T)}),
\end{align*}
where the last step holds as $P_T = \sum_{t=2}^T \norm{\v_t - \v_{t-1}}_2 \leq DT$ due to the boundedness of the domain. We hence complete the proof of Theorem~\ref{thm:main-result-OCOmemory}.
\end{proof}

\subsection{Proof of Theorem~\ref{thm:PEA-SC-lower}}
\label{sec:appendix-proof-PEA-SC-lower}
\begin{proof}
  First, we show that for any $\lambda > 0$ and $C > 0$, the original online learning problem can be reduced to optimize the following one through a shifting operation, 
  \begin{equation}
    \label{eq:shifting-PEA}
    \sum_{t=1}^T \langle \ellb^\prime_t, \p^\prime_t\rangle - \sum_{t=1}^T \ell^\prime_{t,i^*}+ \lambda \sum_{t=2}^T \| \p^\prime_t - \p^\prime_{t-1} \|_1,
  \end{equation}
  where $\ell^\prime_{t,i} \define \ell_{t,i} + C, \p_t^\prime \define \p_t$ for all $t \in [T], i\in [N]$, and evidently $\ell^\prime_{t,i} \in [0, 2C]$. The above equivalence can be simply proven by plugging the definition of $\ell^\prime_{t,i}$ and $\p_t^\prime$ into \eqref{eq:shifting-PEA}. Formally,
  \begin{align*}
    & \sum_{t=1}^T \langle \ellb^\prime_t, \p^\prime_t\rangle - \sum_{t=1}^T \ell^\prime_{t,i^*}+ \lambda \sum_{t=2}^T \| \p^\prime_t - \p^\prime_{t-1} \|_1\\
    = {} & \sum_{t=1}^T \langle \ellb_t + [C,\ldots,C]^\top, \p_t\rangle - \sum_{t=1}^T (\ell_{t,i^*} + C) + \lambda \sum_{t=2}^T \| \p_t - \p_{t-1} \|_1\\
    = {} & \sum_{t=1}^T \langle \ellb_t, \p_t\rangle - \sum_{t=1}^T \ell_{t,i^*}+ \lambda \sum_{t=2}^T \| \p_t - \p_{t-1} \|_1.
  \end{align*}
  Next, we prove that there exists a sequence of loss functions $\ellb^\prime_1,\ldots,\ellb^\prime_T$ satisfying $\ellb^\prime_t \in [0,2C]^N$ for all $t \in [T]$ such that any feasible expert algorithm (whose output is $\p^\prime_1,\ldots,\p^\prime_T \in \Delta_N$) incurs the following regret
  \begin{equation*}
    \sum_{t=1}^T \langle \ellb^\prime_t, \p^\prime_t\rangle - \sum_{t=1}^T \ell^\prime_{t,i^*}+ \lambda \sum_{t=2}^T \| \p^\prime_t - \p^\prime_{t-1} \|_1 \ge \Omega(\sqrt{\lambda CT}).
  \end{equation*}
  The remaining proof borrows the intuition from Theorem 13 of~\citet{COLT'18:switch-cost}. First we give a hard constraint on the switching cost, e.g., $\sum_{t=2}^T \| \p^\prime_t - \p^\prime_{t-1} \|_1 = S$. Then we divide the time horizon $T$ into $B = 4S^2 / (a^2\log N)$ blocks, each of uniform length $T/B$, where $a$ is some constant to be specified later. For each block $b \in [B]$, assign to each expert $i \in [N]$ a loss sampled from $2C \cdot \mbox{Ber}(1/2)$, i.e., $2C$ with probability $1/2$ and otherwise $0$, for each iteration in that block. Clearly this adversary is oblivious.
  
  Note that the cumulative loss of the $i$-th expert, namely, $\sum_{t=1}^T \ell^\prime_t(i)$, is equal in distribution to $T/B$ times a $2C \cdot \mbox{Bin}(B, 1/2)$ random variable. In the following, we first consider the expected cumulative loss of the best expert. Suppose there are $N$ variables drawn i.i.d. from $\operatorname{Bin}(B, 1/2)$, then the minimum one has the following upper bound.
    \begin{myLemma}
      \label{lem:bin}
      There exists a universal constant $c>0$ such that for all $B, N \in \mathbb{N}_{+}$,
      \begin{equation*}
        \mathbb{E}\left[\min _{i \in[N]} Z_{i}\right] \leq \frac{B}{2}-c \sqrt{B \log N}
      \end{equation*}
      where $\left\{Z_{i}\right\}_{i \in[N]}$ are i.i.d. from $\operatorname{Bin}(B, 1/2)$.
    \end{myLemma}
    The adversary chooses $a$ to be the constant that makes Lemma~\ref{lem:bin} holds. Thus the loss of the best expert satisfies that
    \begin{equation}
      \label{eq:lower-best}
      \begin{aligned}
        & \mathbb{E}\left[\sum_{t=1}^{T} \ell^\prime_{t}(i^*)\right] \leq 2C \cdot \frac{T}{B}\left(\frac{B}{2}-a \sqrt{B \log N}\right)\\
        =  {} & 2C \cdot \sbr{\frac{T}{2} - aT \sqrt{\frac{\log N}{B}}}= 2C \cdot \sbr{\frac{T}{2}- \frac{a^2 T \log N}{2S}}.
      \end{aligned}
    \end{equation}
    Now let us compute the expected loss of any algorithm $\A$ whose switching cost is at most $S$. It is simple to
    see that the following strategy is optimal: in the first round of each block, randomly assign the weights since there is no information about the losses of the experts; then convert the weight on the bad experts (with loss $2C$) to the good experts (with loss $0$) if the current switching cost is still less than $S$. Let the random variable $W$ denote the total weights that the algorithm assigns to the bad experts in the blocks' first iteration. Clearly $\E[W] = B/2$. Then the random variable $\min\{W, S/2\}$ is equal to the weights that algorithm $\A$ can convert from bad experts to good expert ($S/2$ dues to that converting weight of $S/2$ will suffers $S$ switching cost). Thus, we have
    \begin{align}
      & \mathbb{E}[\mbox{cumulative loss of } \A] = 2C \cdot \mathbb{E}[\A\mbox{'s weights on bad experts}] \notag\\
      = {} & 2C \cdot \mathbb{E}\left[\min\left\{W, \frac{S}{2}\right\} + \frac{T}{B} \cdot \sbr{W-\min\left\{W, \frac{S}{2}\right\}}\right] \notag\\
      \ge {} & 2C \cdot \frac{T}{B} \cdot \mathbb{E} \mbr{W-\frac{S}{2}} \ge 2C \cdot \frac{T}{B} \left(\frac{B}{2} - 2S\right)\notag\\
      = {} & 2C \cdot \sbr{\frac{T}{2}-\frac{2ST}{B}} = 2C \cdot \sbr{\frac{T}{2}-\frac{a^2 T \log N}{2S}} \label{eq:lower-alg}.
    \end{align}
    Combining \eqref{eq:lower-best} and \eqref{eq:lower-alg}, we conclude that any algorithm for $\lambda$-switching cost and $S$-switching cost budget suffers an expected regret at least $a^2 C T \log N / S = \Omega(CT/S)$. As a result, the regret of \eqref{eq:shifting-PEA} is at least $\Omega(CT/S + \lambda S) = \Omega(\sqrt{\lambda C T})$, which finishes the proof.
\end{proof}

\subsection{Proof of Theorem~\ref{thm:optimal-memory}}
\label{sec:appendix-proof-optimal-memory}
\begin{proof} 
We begin the proof by decomposing the dynamic regret of OCO with switching cost, and will then prove the theorem by exploiting the property of Scream algorithm.

\paragraph{Regret Decomposition.}
  We divide the time horizon $T$ into $K$ epochs of equal length $\Delta$, where the $k$-th epoch is denoted by $\I_k \define \{t_{k,1},\ldots, t_{k,\Delta}\}$ ($\Delta,K$ to be specified later). Without loss of generality, we assume $T=K \cdot \Delta$. Since in Algorithm~\ref{alg:lazy-scream}, the meta-learner and base-learners do \emph{not} update within each epoch, we denote by $\wcirc_1,\ldots,\wcirc_K$ the decisions of $K$ epochs. Thus the dynamic regret of OCO with switching cost can be decomposed as
   \begin{align*}
      & \sum_{t=1}^T \f_t(\w_t) - \sum_{t=1}^T \f_t(\v_t) + \lambda \sum_{t=2}^T \norm{\w_{t} - \w_{t-1}}_2 \le \sum_{t=1}^T \inner{\nabla_t}{\w_t - \v_t} + \lambda \sum_{t=2}^T \norm{\w_t - \w_{t-1}}_2\\
       = {} & \sum_{k=1}^K \sum_{t \in \I_k} \inner{\nabla_t}{\wcirc_k - \v_t} + \lambda \sum_{k=2}^K \norm{\wcirc_k - \wcirc_{k-1}}_2\\
       = {} & \underbrace{\sum_{k=1}^K \left\langle \sum_{t \in \I_k} \nabla_t, \wcirc_k - \vcirc_k \right\rangle + \lambda \sum_{k=2}^K \norm{\wcirc_k - \wcirc_{k-1}}_2}_{\term{A}} + \underbrace{\sum_{k=1}^K \sum_{t \in \I_k} \inner{\nabla_t}{\vcirc_k - \v_t}}_{\term{B}},
   \end{align*}
   where $\nabla_t \define \nabla \f_t(\w_t)$ and the $k$-th comparator $\vcirc_k \define \v_{t_{k,1}}$ is chosen as the first one in the $k$-th epoch. Define $\g_k \define \sum_{t \in \I_k} \nabla_t$ the loss of the $k$-epoch. Intuitively, term~(A) is the dynamic regret of OCO with switching cost in $K$ rounds with the loss sequence $\g_{1:K}$ and comparator sequence $\vcirc_{1:K}$. Since the new comparator sequence is artificially constructed, we need to measure its difference from the original sequence $\v_{1:T}$, i.e., term~(B). Term~(B) can be simply bounded using the sub-additivity property of vector norms, formally,
    \begin{align*}
        \term{B} \le {} & G \sum_{k=1}^K \sum_{t \in \I_k} \norm{\vcirc_k - \v_t}_2 \le G \sum_{k=1}^K |\I_k| \sum_{t \in \I_k} \norm{\v_{t} - \v_{t-1}}_2 \le G\Delta P_T.
    \end{align*}
    
    \paragraph{Black-box Use of Scream.}
    Term~(A) is actually the dynamic regret of OCO with switching cost in $K$ rounds. Plugging in the regret bound of Scream~\eqref{eq:Scream}, it holds that
    \begin{align*}
        & \term{A} = \sum_{k=1}^K \inner{\g_k}{\wcirc_k - \vcirc_k} + \lambda \sum_{k=2}^K \norm{\wcirc_k - \wcirc_{k-1}}_2\\ 
        \lesssim {} & \sqrt{2 (G^\prime D+\sqrt{\lambda}) (\lambda D + G^\prime D+\sqrt{\lambda})K}\left( 1 + \ln(i^*_K + 1) \right) + \sqrt{\sbr{{G^\prime}^2 + \lambda G^\prime}(D^2+2DP_K)K}\\
        \lesssim {} & \sqrt{2 (\Delta G D+\sqrt{\lambda}) (\lambda D +\Delta G D)\frac{T}{\Delta}}\left( 1 + \ln(i^*_K +1) \right) + \sqrt{\sbr{\Delta^2 G^2 + \lambda \Delta G}(D^2+2DP_K)\frac{T}{\Delta}}\\
        = {} & \sqrt{2D (\Delta G D+\sqrt{\lambda}) \sbr{\frac{\lambda}{\Delta}+G} T} \left( 1 + \ln(i^*_K+1) \right) + \sqrt{\sbr{\Delta G^2 + \lambda G}(D^2+2DP_K)T}\\
        \le {} & \O(\sqrt{\lambda T(1+P_T)}),
    \end{align*}
    where the path length in $K$ epochs $P_K \define \sum_{k=2}^K \norm{\vcirc_k - \vcirc_{k-1}}_2 \le P_T$, the gradient upper bound $G^\prime = \max_{k \in [K]} \norm{\g_k}_2 \le \Delta G$ and $a \lesssim b$ means $a = \O(b)$. The last step is due to the property of the best base learner, that is,
    \begin{equation*}
      i^*_K \overset{\eqref{eq:optimal-index}}{\le} \Big \lceil \frac{1}{2} \log_2 \big(1 + \frac{2P_K}{D}\big) \Big \rceil + 1 \le \Big \lceil \frac{1}{2} \log_2 \big(1 + \frac{2P_T}{D}\big) \Big \rceil + 1,
    \end{equation*}
    and by choosing $\Delta = \sqrt{\lambda}$. Combining the above inequality with the upper bound of $\term{B} \le G\sqrt{\lambda} P_T = \O(\sqrt{\lambda T (1+P_T)})$ finishes the proof.
\end{proof}

\subsection{Proof of Theorem~\ref{thm:lower-dynamic}}
\label{sec:appendix-proof-lower-bound}
\begin{proof}
  Overall the proof consists of two parts. First, we propose a lower bound for static regret of OCO with switching cost. Second, building upon the static regret lower bound, we give a lower bound for dynamic regret of OCO with switching cost to complete the proof.
  \paragraph{Static Regret Lower Bound.} 
  To give a static regret lower bound, we first consider a $T$-round prediction with expert advice problem with $\lambda$-switching cost. Theorem~\ref{thm:PEA-SC-lower} shows that given $\lambda >0$ and $C > 0$, there exists a sequence of loss functions $\ellb_1,\ldots,\ellb_T$ satisfying $\ellb_t \in [-C,C]^N$ for all $t \in [T]$ such that any feasible expert algorithm (whose output is $\p_1,\ldots,\p_T \in \Delta_N$) incurs the following regret 
  \begin{equation}
    \label{eq:lower-static-PEA}
    \sum_{t=1}^T \langle \ellb_t, \p_t\rangle - \min_{i \in [N]} \sum_{t=1}^T \ell_{t,i}+ \lambda \sum_{t=2}^T \| \p_t - \p_{t-1} \|_1 \ge \Omega(\sqrt{\lambda CT}).
  \end{equation}
  Consequently, given a parameter $\lambda > 0$, we choose the feasible domain as $\W = C_1 \Delta_N$, where $C_1 = \min\{1, D/\sqrt{2}\}$. It is easy to observe that $\W$ satisfies Assumption~\ref{assume:bounded-norm}, because for any $\p_1, \p_2 \in \Delta_N$, $\norm{C_1\p_1 - C_1\p_2}_2 \le C_1 \cdot \sqrt{2} \le D$ holds. Choose $C_2 = G/\sqrt{N}$ and loss functions as $h_t(\w) = \inner{\ellb_t}{\w}$, where $\ellb_{1:T}$ is the loss sequence that makes \eqref{eq:lower-static-PEA} holds given $C_2$ and $\lambda$. Since for any $\w \in \W$, $\norm{\nabla h_t(\w)}_2 = \norm{\ellb_t}_2 \le C_2\sqrt{N} \leq G$, the loss functions $h_1,\ldots,h_T$ satisfy Assumption~\ref{assume:bounded-gradient}. Thus any online algorithm returning $\w^\prime_1 \define C_1\w_1,\ldots,\w^\prime_T \define C_1\w_T \in \W$ satisfies 
  \begin{align}
    & \sum_{t=1}^T h_t(\w^\prime_t) -  \min_{\v \in \W} \sum_{t=1}^T h_t(\v) + \lambda \sum_{t=2}^T \norm{\w^\prime_{t} - \w^\prime_{t-1}}_2 \notag\\
    = {} & C_1 \sbr{\sum_{t=1}^T \inner{\ellb_t}{\w_t} - \min_{\v \in \Delta_N} \sum_{t=1}^T \inner{\ellb_t}{\v} + \lambda \sum_{t=2}^T \| \w_t - \w_{t-1} \|_2} \notag\\
    \ge {} & C_1 \sbr{\sum_{t=1}^T \langle \ellb_t, \w_t\rangle - \min_{i \in [N]} \sum_{t=1}^T \ell_{t,i} + \frac{\lambda}{\sqrt{d}} \sum_{t=2}^T \| \w_t - \w_{t-1} \|_1} \notag\\
    \overset{\eqref{eq:lower-static-PEA}}{\ge} {} & C_1 \Omega(\sqrt{\lambda C_2 T}) = \Omega(\sqrt{\lambda T})\label{eq:lower-static},
  \end{align}
  where the first step is by plugging in the definition of $h_1,\ldots,h_T$ and $\w^\prime_1,\ldots,\w^\prime_T$, the second step is because the optimizer in a simplex is on one of its vertices and the relationship between $\ell_1$-norm and $\ell_2$-norm, formally, $\norm{\x-\y}_1 \le \sqrt{d} \cdot \norm{\x-\y}_2$, for any $\x,\y \in \R^d$, where $d$ denotes the dimension.

  \paragraph{Dynamic Regret Lower Bound.}
  We consider two cases according to the value of $\tau$. When $\tau \le D$, we can always find a comparator sequence $\v_1,\ldots,\v_T \in \W$ such that
  \begin{align*}
    & \sum_{t=1}^T h_t(\w_t) - \sum_{t=1}^T h_t(\v_t) + \lambda \sum_{t=2}^T \norm{\w_{t} - \w_{t-1}}_2\\
    \ge {} & \sum_{t=1}^T h_t(\w_t) - \min_{\v \in \W} \sum_{t=1}^T h_t(\v) + \lambda \sum_{t=2}^T \norm{\w_{t} - \w_{t-1}}_2 \overset{\eqref{eq:lower-static}}{\ge} \Omega(\sqrt{\lambda T}) = \Omega(\sqrt{\lambda \tau T}),
  \end{align*} 
  where the last step holds since $\tau \le D$ can be seen as a constant and thus will not affect the order. Next, we consider the case $\tau \in (D, DT]$. Without loss of generality, we assume $\ceil{\tau}$ divides $T$ and let $K = T / \ceil{\tau}$. To proceed, we construct the following piecewise-stationary comparator sequence $\v_1,\ldots,\v_T$: for any $i \in [\ceil{\tau}]$, denote by $\I_i = [(i-1)K+1, iK]$ the $i$-th interval, the comparators within the interval are set as
  \begin{equation*}
    \v_{(i-1)K+1} = \v_{(i-1)K+2} = \dots = \v_{iK} \in \argmin_{\v \in \W} \sum_{t \in \I_i} h_t(\v).
  \end{equation*}
  Note that the path length of this comparator sequence does not exceeds $\tau D$. Thus, the dynamic regret competing with the comparator sequence $\v_1,\ldots,\v_T$ can be evaluated as,
  \begin{align*}
    & \sum_{t=1}^T h_t(\w_t) - \sum_{t=1}^T h_t(\v_t) + \lambda \sum_{t=2}^T \norm{\w_{t} - \w_{t-1}}_2\\
    \ge {} & \sum_{i=1}^{\ceil{\tau}} \sbr{\sum_{t \in \I_i} h_t(\w_t) - \min_{\v \in \W} \sum_{t \in \I_i} h_t(\v) + \lambda \sum_{t = (i-1)K+2}^{iK} \norm{\w_{t} - \w_{t-1}}_2}\\
    \overset{\eqref{eq:lower-static}}{\gtrsim} {} & \sum_{i=1}^{\ceil{\tau}} \sqrt{\lambda |\I_i|} = \ceil{\tau} \sqrt{\lambda \cdot \frac{T}{\ceil{\tau}}} \ge \sqrt{\lambda \tau T},
  \end{align*}
  where the first inequality is true by ignoring the switching cost between two consecutive pieces. In addition, $a \gtrsim b$ means $a = \Omega(b)$. Hence, we complete the proof.
\end{proof}

\subsection{Discussion on Memory Dependence}
\label{sec:appendix-memory-dependency}
In this part, we examine a subtle issue: the memory dependence of our static policy regret bound (an implication of the dynamic policy regret bound in Theorem~\ref{thm:main-result-OCOmemory}) and that of existing work~\citep{NIPS'15:OCOmemory}.

First, we state our attained static policy regret for OCO with memory via performing OGD over the unary loss with an optimal step size tuning (which is feasible as there is no dependence on the path length $P_T$).
\begin{myThm}
\label{thm:static-regret}
Under Assumptions~\ref{assume:Lipschitz}--\ref{assume:bounded-norm}, running OGD over the unary loss achieves $$\sum_{t=1}^T f_t(\w_{t-m:t}) - \min_{\v \in \W} \sum_{t=1}^T \f_t(\v) \leq (G^2 + m^2LG) \eta T + \frac{2D^2}{\eta}.$$ Setting the step size optimally as $\eta = \eta^* = \sqrt{\frac{2D^2}{(G^2 + m^2LG) T}}$, we attain an $\O(m\sqrt{T})$ static policy regret. 
\end{myThm}

\citet{NIPS'15:OCOmemory} present an $\O(m^{3/4} \sqrt{T})$ static policy regret for OCO with memory, which seems better than ours at the first glance. However, we point it out that this is due to the different assumptions imposing over the Lipschitz continuity. Their assumption is presented as follows.
\begin{myAssumption}[{Lipschitzness of~\citet{NIPS'15:OCOmemory}}]
\label{assume:Lipschitz-2}
The function $f_t: \W^{m+1} \mapsto \R$ is $\bar{L}$-Lipschitz, i.e.,
\[
  \abs{f_t(\x_0,\ldots,\x_m) - f_t(\y_0,\ldots,\y_m)} \leq \bar{L} \norm{(\x_0,\ldots,\x_m) - (\y_0,\ldots,\y_m)}_2 = \bar{L} \sqrt{\sum_{i=0}^{m} \norm{\x_i - \y_i}^2_2}.
\]
\end{myAssumption}
We compare this definition of Lipschitzness with the version used in our paper, namely, the coordinate-wise Lipschitzness defined in Assumption~\ref{assume:Lipschitz}. Indeed, their definition imposes a \emph{stronger} requirement on the function than ours. Clearly, when the online function $f_t$ satisfies $\bar{L}$-Lipschitz assumption as specified in Assumption~\ref{assume:Lipschitz-2}, it is also $\bar{L}$-coordinate-wise Lipschitz due to the simple fact that $\sqrt{\sum_{i=0}^{m} \norm{\x_i - \y_i}^2_2} \leq \sum_{i=0}^{m} \norm{\x_i - \y_i}_2$. On the other hand, when the online function $f_t$ is $L$-coordinate-wise Lipschitz as required by Assumption~\ref{assume:Lipschitz}, we thus conclude that it is Lipschitz in the sense of Assumption~\ref{assume:Lipschitz-2} with the Lipschitz coefficient $\bar{L} = \sqrt{m} L$, due to the following inequality (by Cauchy-Schwarz inequality) $L \sum_{i=0}^{m} \norm{\x_i - \y_i}_2 \leq L \sqrt{m} \sqrt{\sum_{i=0}^{m} \norm{\x_i - \y_i}_2}$.

In the following, we restate the static regret bound of~\citet{NIPS'15:OCOmemory} under Assumption~\ref{assume:Lipschitz-2}. We adapt their results to our notations to ease the understanding.

\begin{myThm}[{Theorem 3.1 of~\citet{NIPS'15:OCOmemory}}]
Under Assumptions~\ref{assume:bounded-gradient},~\ref{assume:bounded-norm}, and the assumption that the online functions are $\bar{L}$-Lipschitz (Assumption~\ref{assume:Lipschitz-2}), running OGD over the unary loss achieves 
\begin{equation}
  \label{eq:new-static-bound}
  \sum_{t=1}^T f_t(\w_{t-m:t}) - \min_{\v \in \W} \sum_{t=1}^T \f_t(\v) \leq 2\eta G^2 T + \frac{2D^2}{\eta} + 2\bar{L} m^{\frac{3}{2}} \eta G T.
\end{equation}
Setting the step size optimally yields an $\O(\bar{L}^{1/2} m^{3/4}\sqrt{T})$ static policy regret. 
\end{myThm}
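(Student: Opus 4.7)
The plan is to follow the classical decomposition from~\citet{NIPS'15:OCOmemory}, being careful to track how the norm-based Lipschitz constant $\bar{L}$ of Assumption~\ref{assume:Lipschitz-2} interacts with the switching cost through Euclidean (rather than coordinate-wise) aggregation. The first step is to split the static policy regret into a \emph{memory gap} and a \emph{memoryless regret}:
\begin{align*}
\sum_{t=1}^{T} f_t(\w_{t-m:t}) - \min_{\v \in \W} \sum_{t=1}^{T} \f_t(\v)
\leq \sum_{t=1}^{T} \big( f_t(\w_{t-m:t}) - \f_t(\w_t) \big) + \bigg(\sum_{t=1}^{T} \f_t(\w_t) - \min_{\v \in \W}\sum_{t=1}^{T} \f_t(\v)\bigg).
\end{align*}
The second term is the standard memoryless static regret of OGD over the unary loss, which under Assumptions~\ref{assume:bounded-gradient}--\ref{assume:bounded-norm} is at most $2\eta G^2 T + 2 D^2/\eta$ (up to the constant convention matching the statement).

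To control the memory gap, I would apply Assumption~\ref{assume:Lipschitz-2} directly with $\x_i = \w_{t-m+i}$ and $\y_i = \w_t$ for $i = 0, \ldots, m$, giving
\begin{align*}
\big| f_t(\w_{t-m:t}) - \f_t(\w_t) \big| \leq \bar{L}\sqrt{\sum_{i=0}^{m} \norm{\w_{t-m+i} - \w_t}_2^2}.
\end{align*}
The key stability ingredient is that OGD produces slow-moving iterates: by the non-expansiveness of the Euclidean projection together with Assumption~\ref{assume:bounded-gradient}, each single-step displacement satisfies $\norm{\w_{s+1} - \w_s}_2 \leq \eta \norm{\nabla \f_s(\w_s)}_2 \leq \eta G$. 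Chaining this along $j$ intermediate steps via the triangle inequality yields $\norm{\w_{t-j} - \w_t}_2 \leq j \eta G$, so
\begin{align*}
\sum_{i=0}^{m} \norm{\w_{t-m+i} - \w_t}_2^2 \leq \eta^2 G^2 \sum_{j=0}^{m} j^2 \leq \eta^2 G^2 m^3.
\end{align*}
Taking the square root gives a per-round gap of at most $\bar{L}\, m^{3/2} \eta G$, and summing over $t$ produces the memory-gap bound $\bar{L}\, m^{3/2} \eta G T$. Combining this with the memoryless regret yields exactly the inequality displayed in~\eqref{eq:new-static-bound}.

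The final step is to optimize $\eta$. Since in the interesting regime the term $\bar{L} m^{3/2} \eta G T$ dominates $\eta G^2 T$, balancing it against $D^2/\eta$ gives $\eta^* = \Theta\big(\sqrt{D^2/(\bar{L} m^{3/2} G T)}\big)$, and substituting back produces a static policy regret of order $\O(\bar{L}^{1/2} m^{3/4} \sqrt{T})$. The only subtlety---and the point that justifies stating the bound separately from Theorem~\ref{thm:static-regret}---is in the memory-gap estimate: one must keep the \emph{squared} per-step displacements inside the sum before invoking stability, rather than collapsing $\norm{\w_{t-j} - \w_t}_2$ via the triangle inequality before squaring. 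Doing the latter would waste a factor of $\sqrt{m}$ by Cauchy--Schwarz and recover only the $\O(m)$ memory dependence obtainable under the weaker coordinate-wise Assumption~\ref{assume:Lipschitz}. Exploiting the Euclidean aggregation afforded by Assumption~\ref{assume:Lipschitz-2} is precisely what trims the memory dependence from $m$ down to $m^{3/4}$.
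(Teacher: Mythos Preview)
Your proof is correct and follows the standard approach of~\citet{NIPS'15:OCOmemory}; the paper itself does not supply a proof for this restated result, so there is nothing further to compare against. One cosmetic remark: your memory-gap estimate actually gives $\bar{L}\, m^{3/2}\eta G T$ (without the factor~$2$), which is \emph{tighter} than the displayed bound~\eqref{eq:new-static-bound}, so ``yields exactly'' should read ``implies''; the argument is otherwise clean, and your closing observation about the $\ell_2$-versus-$\ell_1$ aggregation correctly isolates why Assumption~\ref{assume:Lipschitz-2} buys the $m^{3/4}$ dependence.
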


Therefore, when the online functions are only $L$-coordinate-wise Lipschitz as considered in this paper, applying above theorem immediately obtains an $\O(\bar{L}^{1/2} m^{3/4}\sqrt{T}) = \O((\sqrt{m} L)^{1/2} m^{3/4}\sqrt{T}) = \O( L^{1/2} m\sqrt{T})$, which exhibiting a linear memory dependence. 

\section{Omitted Details for Section~\ref{sec:online-control} (Non-stochastic Control)}
\label{sec:appendix-control}
In this section, we present omitted details for Section~\ref{sec:online-control} online non-stochastic control, including the proofs of Proposition~\ref{proposition:DAC-state}, Theorem~\ref{thm:main-result-control}, Theorem~\ref{thm:unknown-system}, and Corollary~\ref{corollary:static-regret-control}.

\subsection{Proof of Proposition~\ref{proposition:DAC-state}}
We will prove the following statement that gives the state recurrence for any $h \leq t$, which is essentially a strengthened result of Proposition~\ref{proposition:DAC-state}.
\begin{myProp}
\label{proposition:DAC-state-strengthen}
Suppose one chooses the DAC controller $\pi(M_t,K)$ at iteration $t$, the reaching state is 
\begin{equation}
  \label{eq:state-DAC}
  x_{t+1} = \tilde{A}_K^{h+1} x_{t-h} + \sum_{i=0}^{H + h} \Psi_{t,i}^{K,h}(M_{t-h:t})  w_{t-i},
\end{equation}
where $\tilde{A}_K = A - BK$, and $\Psi_{t,i}^{K,h}(M_{t-h:t})$ is the transfer matrix defined as 
\begin{equation}
  \label{eq:transfer-matrix}
  \Psi_{t,i}^{K,h}(M_{t-h:t}) = \tilde{A}_K^i \ind{i \leq h} + \sum_{j=0}^{h} \tilde{A}_K^j B M_{t-j}^{[i-j]} \ind{1 \leq i-j \leq H}.
\end{equation}
The evolving equation holds for any $h \in \{0,\dots,t\}$.
\end{myProp}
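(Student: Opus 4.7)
The plan is to prove the formula by induction on the horizon parameter $h \in \{0, 1, \ldots, t\}$, since the expression naturally recurses by unrolling the dynamics one step at a time. Because $\Psi^{K,h}_{t,i}$ depends on the entire block of parameters $M_{t-h:t}$ and involves two indicator conditions, the core of the work is verifying that the transfer matrix at level $h$ is related to that at level $h-1$ by exactly the contribution that appears when we expand $x_{t-h+1}$ via the LDS dynamics.

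For the base case $h=0$, I would plug the DAC action $u_t = -Kx_t + \sum_{i=1}^{H} M_t^{[i-1]} w_{t-i}$ into the dynamics $x_{t+1} = A x_t + B u_t + w_t$ and use $\tilde{A}_K = A - BK$ to obtain $x_{t+1} = \tilde{A}_K x_t + w_t + B \sum_{i=1}^{H} M_t^{[i-1]} w_{t-i}$. A direct inspection of $\Psi^{K,0}_{t,i}(M_t) = \tilde{A}_K^i \mathbf{1}_{i \leq 0} + B M_t^{[i-1]} \mathbf{1}_{1 \leq i \leq H}$ shows that the coefficient of $w_t$ is $I$ (from $i=0$) and the coefficient of $w_{t-i}$ for $1 \leq i \leq H$ is $B M_t^{[i-1]}$, matching the expansion exactly.

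For the inductive step, assume the formula at level $h-1$ holds, and apply the base case at time $s = t-h$ to substitute $x_{t-h+1} = \tilde{A}_K x_{t-h} + w_{t-h} + B \sum_{i=1}^{H} M_{t-h}^{[i-1]} w_{t-h-i}$ into $x_{t+1} = \tilde{A}_K^h x_{t-h+1} + \sum_{i=0}^{H+h-1} \Psi^{K,h-1}_{t,i}(M_{t-h+1:t}) w_{t-i}$. The leading term becomes $\tilde{A}_K^{h+1} x_{t-h}$, and reindexing the disturbance sum via $i' = i+h$ produces an added contribution $\tilde{A}_K^h w_{t-h} + \tilde{A}_K^h B \sum_{i=h+1}^{H+h} M_{t-h}^{[i-h-1]} w_{t-i}$. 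The main obstacle is then the bookkeeping step of verifying that
\[
\Psi^{K,h}_{t,i}(M_{t-h:t}) - \Psi^{K,h-1}_{t,i}(M_{t-h+1:t})
\]
equals $\tilde{A}_K^h$ when $i=h$, equals $\tilde{A}_K^h B M_{t-h}^{[i-h-1]}$ when $h+1 \leq i \leq H+h$, and vanishes otherwise. This follows by splitting the definition of $\Psi^{K,h}_{t,i}$ into the indicator term $\tilde{A}_K^i \mathbf{1}_{i \leq h}$ (where the $i=h$ case is precisely what $\Psi^{K,h-1}$ misses) and the $j=h$ summand $\tilde{A}_K^h B M_{t-h}^{[i-h-1]} \mathbf{1}_{1 \leq i-h \leq H}$ (where the active range is exactly $i \in [h+1, H+h]$); the convention $w_i = 0$ for $i < 0$ and the constraint $h \leq t$ handle the boundary. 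Combining these pieces recovers $\sum_{i=0}^{H+h} \Psi^{K,h}_{t,i}(M_{t-h:t}) w_{t-i}$ as required, closing the induction.
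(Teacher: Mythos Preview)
Your proof is correct and complete; the induction on $h$ works exactly as you describe, and the key identity $\Psi^{K,h}_{t,i} - \Psi^{K,h-1}_{t,i} = \tilde{A}_K^h \mathbf{1}_{i=h} + \tilde{A}_K^h B M_{t-h}^{[i-h-1]} \mathbf{1}_{h+1 \le i \le H+h}$ follows immediately from the definitions, since passing from $h-1$ to $h$ changes only the indicator $\mathbf{1}_{i\le h}$ at $i=h$ and adds the $j=h$ summand. One small point worth making explicit is that the $\Psi^{K,h-1}$ sum formally stops at $i=H+h-1$, but $\Psi^{K,h-1}_{t,H+h}$ vanishes by the indicator constraints, so extending it to $i=H+h$ is harmless.

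The paper takes a different route: rather than inducting, it unrolls the recursion $x_{t+1} = \tilde{A}_K x_t + w_t + B\sum_{i} M_t^{[i-1]} w_{t-i}$ all $h{+}1$ steps at once to get $x_{t+1} = \tilde{A}_K^{h+1} x_{t-h} + \sum_{j=0}^{h} \tilde{A}_K^j\big(w_{t-j} + \sum_{i=1}^{H} B M_{t-j}^{[i-1]} w_{t-j-i}\big)$, and then performs a sequence of index substitutions and summation exchanges on the double sum to coerce it into the $\Psi$ form. Your induction is arguably cleaner and more transparent---it isolates exactly which piece of $\Psi^{K,h}$ appears at each unrolling step---whereas the paper's direct computation is a one-shot argument but requires tracking four or five changes of variable. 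Both are equally valid; yours has the minor advantage that the structure of $\Psi$ is motivated rather than verified post hoc.
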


\begin{proof}
First, by substituting the DAC policy into the dynamics equation, we have 
\begin{align*}
  x_{t+1} = {} & A x_t + B u_t + w_t = (A - BK) x_t + \sum_{i=1}^{H} B M_t^{[i]} w_{t-i} + w_t \\
  = {} & \tilde{A}_K^{h+1} x_{t-h} + \sum_{j=0}^{h} \tilde{A}_K^j \left(\sum_{i=1}^{H} B M_{t-j}^{[i]} w_{t-j-i} + w_{t-j}\right) \\
  = {} & \tilde{A}_K^{h+1} x_{t-h} + \sum_{j=0}^{h} \sum_{i=1}^{H} \tilde{A}_K^j B M_{t-j}^{[i]} w_{t-j-i} + \sum_{j=0}^{h} \tilde{A}_K^j w_{t-j}.
\end{align*}
Exchanging the summation index yields,
\begin{align}
\sum_{j=0}^{h} \sum_{i=1}^{H} \tilde{A}_K^j B M_{t-j}^{[i]} w_{t-j-i} = & {} \sum_{i=1}^{H} \sum_{k=i}^{i+h} \tilde{A}_K^{k-i} B M_{t-k+i}^{[i]} w_{t-k} \label{eq:transfer-matrix-1}\\
= {} & \sum_{k=1}^{H+h} \sum_{i=k-h}^{k} \tilde{A}_K^{k-i} B M_{t-k+i}^{[i]}w_{t-k} \ind{1\leq i \leq H} \label{eq:transfer-matrix-2}\\
= {} & \sum_{k=1}^{H+h} \sum_{l=0}^{h} \tilde{A}_K^{h-l} B M_{t+l-h}^{[l+k-h]}w_{t-k} \ind{1\leq l+(k-h) \leq H}\label{eq:transfer-matrix-3}\\
= {} & \sum_{k=1}^{H+h} \sum_{m=0}^{h} \tilde{A}_K^{m} B M_{t-m}^{[k-m]} w_{t-k} \ind{1 \leq k-m \leq H} \label{eq:transfer-matrix-4}\\
= {} & \sum_{i=1}^{H+h} \sum_{j=0}^{h} \tilde{A}_K^{j} B M_{t-j}^{[i-j]} w_{t-i} \ind{1 \leq i-j \leq H} \label{eq:transfer-matrix-5},
\end{align}
where \eqref{eq:transfer-matrix-1} holds by defining a third variable $k=j+i$, and \eqref{eq:transfer-matrix-2} is obtained by exchanging the summation index $i$ and $k$ and the new range of $i$ is from inequality $i \le k \le i+h$. Moreover,~\eqref{eq:transfer-matrix-3} is obtained by another change of variable $l=i-k+h$,~\eqref{eq:transfer-matrix-4} is obtained by replacing $l$ by $h-m$, and~\eqref{eq:transfer-matrix-5} is true by setting $i=k,j=m$. Therefore, we obtain that
\begin{align*}
  x_{t+1} = {} & \tilde{A}_K^{h+1} x_{t-h} + \sum_{j=0}^{h} \sum_{i=1}^{H} \tilde{A}_K^j B M_{t-j}^{[i]} w_{t-j-i} + \sum_{j=0}^{h} \tilde{A}_K^j w_{t-j}\\
  = {} & \tilde{A}_K^{h+1} x_{t-h} + \sum_{i=0}^{H+h} \sum_{j=0}^{h} \tilde{A}_K^{j} B M_{t-j}^{[i-j]} w_{t-i} \ind{1 \leq i-j \leq H} + \sum_{i=0}^{h} \tilde{A}_K^i w_{t-i}\\
  = {} & \tilde{A}_K^{h+1} x_{t-h} + \sum_{i=0}^{H+h} \left(\tilde{A}_K^i \ind{i \leq h} + \sum_{j=0}^{h} \tilde{A}_K^{j} B M_{t-j}^{[i-j]} \ind{1 \leq i-j \leq H} \right)w_{t-i}  
\end{align*}
and hence complete the proof.
\end{proof}

\subsection{Proof of Theorem~\ref{thm:main-result-control}}

To prove the dynamic policy regret of online non-stochastic control (Theorem~\ref{thm:main-result-control}), we will first present  theoretical analysis of the reduction to OCO with memory in Appendix~\ref{sec:appendix-proof-reduction}, then give the dynamic regret analysis over the $\M$-space in Appendix~\ref{sec:appendix-proof-Reg}, and finally present the overall proof of Theorem~\ref{thm:main-result-control} in Appendix~\ref{sec:appendix-proof-final}.

\subsubsection{Approximation Error}
\label{sec:appendix-proof-reduction}
In Section~\ref{sec:control-reduction} of the main paper, we have presented how to reduce from online non-stochastic control to OCO with memory, by employing the DAC parameterization and introducing the truncated loss functions. In this part, we introduce the following theorem that discloses that the truncation loss $f_t$ approximates the original cost function $c_t$ well.

\begin{myThm}[{Theorem 5.3 of~\citet{ICML'19:online-control}}]
\label{thm:approximation}
Suppose the disturbance are bounded by $W$. For any $(\kappa,\gamma)$-strongly stable linear controller $K$, and any $\tau > 0$ such that the sequence of $M_1,\ldots,M_T$ satisfies $\opnorm{M_t^{[i]}} \leq \tau (1-\gamma)^i, \forall i \in [H]$, the approximation error between original loss and truncated loss is at most
\begin{equation}
  \label{eq:approximation}
  \left\vert \sum_{t=1}^{T} c_t(x_t^K(M_{0:t-1}), u_t^K(M_{0:t})) - \sum_{t=1}^{T} f_t(M_{t-1-H:t}) \right\vert \leq  2 T G_c D^2 \kappa^3 (1-\gamma)^{H+1},
\end{equation}
where 
\begin{equation}
  \label{eq:def-D}
  D \define  \frac{W\kappa^3(1+ H \kappa_B \tau)}{\gamma (1-\kappa^2(1-\gamma)^{H+1})} + \frac{W\tau}{\gamma}.
\end{equation}
\end{myThm}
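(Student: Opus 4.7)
The plan is to prove the approximation theorem by exploiting the exact algebraic relationship between the true state $x_t^K$ and the truncated state $y_t^K$, and then passing through the Lipschitzness of the cost functions. The starting observation is that Proposition~\ref{proposition:DAC-state-strengthen} with $h = H$ gives the identity
\[
    x_{t+1}^K(M_{0:t}) = \tilde{A}_K^{H+1} x_{t-H}^K(M_{0:t-H-1}) + \sum_{i=0}^{2H} \Psi_{t,i}^{K,H}(M_{t-H:t}) w_{t-i},
\]
so by Definition~\ref{def:truncated-loss} the truncation error is exactly $x_{t+1}^K - y_{t+1}^K = \tilde{A}_K^{H+1} x_{t-H}^K$. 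Similarly, since the two controls $u_t^K$ and $v_t^K$ share identical disturbance-action parts, $u_t^K - v_t^K = -K(x_t^K - y_t^K)$.

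The second step is to establish uniform bounds $\|x_t^K\|, \|y_t^K\|, \|u_t^K\|, \|v_t^K\| \le D$ with $D$ as in \eqref{eq:def-D}. Applying the triangle inequality to the identity above and taking $C := \sup_t \|x_t^K\|$, I would obtain the self-referential bound
\[
    C \le \kappa^2(1-\gamma)^{H+1} C + W \sum_{i=0}^{2H} \opnorm{\Psi_{t,i}^{K,H}(M_{t-H:t})}.
\]
The transfer matrices are bounded by combining the strong stability bound $\opnorm{\tilde{A}_K^i} \le \kappa^2(1-\gamma)^i$ with the DAC parameter bound $\opnorm{M_t^{[j]}} \le \tau(1-\gamma)^j$ and summing the geometric tails, which yields a bound of the form $\kappa^2(1+H\kappa_B\tau)/\gamma$ times a constant. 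Solving for $C$ (which requires $\kappa^2(1-\gamma)^{H+1} < 1$) gives the first term of $D$, while the control $\|u_t^K\| \le \kappa\|x_t^K\| + W\tau/\gamma$ introduces the additive $W\tau/\gamma$ term and recovers the same $D$. The same chain of inequalities, restricted to the truncated recursion, bounds $\|y_t^K\|$ and $\|v_t^K\|$ by the same constant.

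The third step converts state/control error into cost error. Since $\|x_t^K\|, \|y_t^K\|, \|u_t^K\|, \|v_t^K\| \le D$, Assumption~\ref{assume:cost-bound} guarantees $\|\nabla_x c_t\|, \|\nabla_u c_t\| \le G_c D$ on this set, whence
\[
    \bigl| c_t(x_t^K, u_t^K) - c_t(y_t^K, v_t^K) \bigr| \le G_c D \bigl( \|x_t^K - y_t^K\| + \|u_t^K - v_t^K\| \bigr) \le G_c D (1+\kappa) \kappa^2 (1-\gamma)^{H+1} D.
\]
Summing from $t=1$ to $T$ and absorbing $(1+\kappa)\kappa^2 \le 2\kappa^3$ yields the claimed $2 T G_c D^2 \kappa^3 (1-\gamma)^{H+1}$.

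The main obstacle is the bookkeeping for the uniform bound $D$: one must carefully verify that the geometric sums over the transfer matrices $\Psi_{t,i}^{K,H}$, weighted by both the stability decay $(1-\gamma)^j$ from $\tilde{A}_K^j$ and the DAC decay $(1-\gamma)^{i-j-1}$ from $M^{[i-j-1]}$, collapse into the closed form \eqref{eq:def-D}, and that the self-referential inequality does indeed admit the stated solution. Once this norm bound is secured, the geometric decay of the truncation error and the Lipschitz passage to the cost are essentially immediate.
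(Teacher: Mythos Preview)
Your proposal is correct and follows essentially the same approach as the paper: the paper's proof invokes a supporting lemma (Lemma~\ref{lemma:support-1-boundedness}) that establishes exactly the uniform bounds $\|x_t^K\|, \|y_t^K\|, \|u_t^K\|, \|v_t^K\| \le D$ via the same self-referential inequality you describe, together with the identities $x_t^K - y_t^K = \tilde{A}_K^{H+1} x_{t-H-1}^K$ and $u_t^K - v_t^K = -K(x_t^K - y_t^K)$, and then applies the Lipschitz bound from Assumption~\ref{assume:cost-bound} and the absorption $\kappa^2 + \kappa^3 \le 2\kappa^3$ exactly as you do. The only organizational difference is that you derive the boundedness inline whereas the paper factors it into a separate lemma.
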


\begin{proof}
By Lipschitzness and definition of the truncated loss, we get that
\begin{align*}
    {} & c_t(x_t^K(M_{0:t-1}), u_t^K(M_{0:t}))-f_t(M_{t-H-1:t})\\
  = {} & c_t(x_t^K(M_{0:t-1}), u_t^K(M_{0:t}))-c_t(y_t^K(M_{t-H-1:t-1}),v_t^K(M_{t-H-1:t}))\\
  \leq {} & G_c D\left(\norm{x_t^K(M_{0:t-1}) - y_t^K(M_{t-H-1:t-1})} + \norm{u_t^K(M_{0:t}) - v_t^K(M_{t-H-1:t})} \right) \\
  \leq {} & G_c D (\kappa^2 (1-\gamma)^{H+1} D + \kappa^3 (1-\gamma)^{H+1} D) \le 2G_cD^2 \kappa^3 (1-\gamma)^{H+1},
\end{align*}
where the last two inequalities use the Lipschitzness and the boundedness presented in Lemma~\ref{lemma:support-1-boundedness}. We complete the proof by summing over the iterations from $t=1,\ldots,T$.
\end{proof}

\subsubsection{Dynamic Regret Analysis over $\M$-space}
\label{sec:appendix-proof-Reg}
In previous sections, we have analyzed the dynamic regret of our method over the $\R^d$-space. However, after reducing online non-stochastic control to OCO with memory, we need to apply their results to the $\M$-space and thus require to generalize the arguments of previous sections from Euclidean norm for $\R^d$-space to Frobenius norm for $\M$-space. For completeness, we present the proof here.

At the first place, we analyze the dynamic regret of the online gradient descent (OGD)  algorithm over the $\R^d$-space. OGD begins with any $M_1 \in \mathcal{M}$ and performs the following update procedure,
\begin{equation}
  \label{eq:OGD-update}
  M_{t+1} = \Pi_{\mathcal{M}} [M_t - \eta \nabla_M \tilde{f}_t(M_t)]
\end{equation}
where $\eta>0$ is the step size and $\Pi_{\mathcal{M}}[\cdot]$ denotes the projection onto the nearest point in the feasible set $\M$. We have the following dynamic regret regarding its dynamic regret.

\begin{myThm}
\label{thm:OGD-dynamic-regret-Control}
Suppose the function $\f: \M \mapsto \R$ is convex, the gradient norm satisfies $\max_{M \in \M} \max_{t \in [T]}\Fnorm{\nabla_M \f_t(M)} \leq G_f$ and the Euclidean diameter of $\M$ is at most $D_f$, i.e., $\sup_{M,M' \in \M} \Fnorm{M - M'} \leq D_f$. Then, OGD with a step size $\eta >0$ as shown in~\eqref{eq:OGD-update} satisfies that
\begin{equation}
  \label{eq:OGD-dynamic-regret-Control}
  \lambda \sum_{t=2}^{T} \Fnorm{M_{t-1} - M_t} + \sum_{t=1}^T \tilde{f}_t(M_t) - \sum_{t=1}^{T} \tilde{f}_t(M^*_t) \leq \frac{\eta}{2} (G_f^2 + 2\lambda G_f) T + \frac{1}{2\eta}(D_f^2 + 2D_f P_T),
\end{equation}
which holds for any comparator sequence $M_1^*,\ldots,M_T^* \in \M$. Besides, the path length $P_T = \sum_{t=2}^{T} \Fnorm{M^*_{t-1} - M^*_t}$ measures the non-stationarity of the comparator sequence.
\end{myThm}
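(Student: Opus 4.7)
The plan is to recognize that Theorem~\ref{thm:OGD-dynamic-regret-Control} is exactly the matrix-space analogue of Corollary~\ref{corollary:OGD} (equivalently, Theorem~\ref{thm:dynamic-regret-OCOwMemory}/Theorem~\ref{thm:appendix-dynamic-regret-OCO} with an extra switching-cost term). Since the Frobenius norm is an inner-product norm on the matrix space, it enjoys all the properties used in the Euclidean proofs, namely the Pythagorean inequality for Euclidean projection onto the convex set $\M$, Cauchy--Schwarz, and the triangle inequality. Thus the strategy is to port the vector proof of Theorem~\ref{thm:appendix-dynamic-regret-OCO} verbatim to the matrix setting, and then bolt on the switching-cost bound obtained from the OGD stability inequality.

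First I would split the left-hand side of~\eqref{eq:OGD-dynamic-regret-Control} into the dynamic regret $\sum_{t=1}^{T} \tilde{f}_t(M_t) - \sum_{t=1}^{T} \tilde{f}_t(M_t^*)$ and the switching cost $\lambda \sum_{t=2}^{T} \Fnorm{M_{t-1} - M_t}$. For the dynamic regret, I would invoke convexity of $\tilde{f}_t$ to linearize, obtaining the upper bound $\sum_{t=1}^{T}\langle \nabla \tilde{f}_t(M_t), M_t - M_t^*\rangle_{\mathrm{F}}$, and then use the OGD update~\eqref{eq:OGD-update} together with the non-expansiveness of $\Pi_{\M}$ in Frobenius norm to derive
\[
\Fnorm{M_{t+1} - M_t^*}^2 \leq \Fnorm{M_t - M_t^*}^2 - 2\eta \langle \nabla \tilde{f}_t(M_t), M_t - M_t^*\rangle_{\mathrm{F}} + \eta^2 \Fnorm{\nabla \tilde{f}_t(M_t)}^2.
\]
Rearranging yields a per-round inequality whose gradient-norm contribution sums to at most $\tfrac{\eta}{2} G_f^2 T$.

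Next I would sum the telescoping Frobenius distances. The nontrivial step, which mirrors the vector argument in the proof of Theorem~\ref{thm:appendix-dynamic-regret-OCO}, is to handle that the comparator shifts with $t$. The same algebraic manipulation applies: shift the index in the negative term, write
\[
\sum_{t=1}^{T}\!\left(\Fnorm{M_t - M_t^*}^2 - \Fnorm{M_{t+1} - M_t^*}^2\right) \leq \Fnorm{M_1 - M_1^*}^2 + \sum_{t=2}^{T}\langle M_{t-1}^* - M_t^*,\, 2M_t - M_{t-1}^* - M_t^*\rangle_{\mathrm{F}},
\]
and apply Cauchy--Schwarz with diameter $D_f$ to get the bound $D_f^2 + 2D_f P_T$, divided by $2\eta$.

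Finally, for the switching cost I would use the standard OGD stability bound: by non-expansiveness of Euclidean projection in Frobenius norm,
\[
\Fnorm{M_{t+1} - M_t} = \Fnorm{\Pi_{\M}[M_t - \eta \nabla \tilde{f}_t(M_t)] - \Pi_{\M}[M_t]} \leq \eta \Fnorm{\nabla \tilde{f}_t(M_t)} \leq \eta G_f,
\]
so $\lambda \sum_{t=2}^T \Fnorm{M_{t-1} - M_t} \leq \lambda \eta G_f T$. Adding this to the dynamic regret bound reproduces the claimed $\tfrac{\eta}{2}(G_f^2 + 2\lambda G_f) T + \tfrac{1}{2\eta}(D_f^2 + 2D_f P_T)$. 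There is no real obstacle here; the analysis is a direct matrix-space rewriting of Theorem~\ref{thm:appendix-dynamic-regret-OCO} plus Lemma~\ref{lemma:OMD-switching-cost} specialized to the Euclidean/Frobenius geometry, and the only care needed is to consistently interpret inner products and norms in $\M$ via Frobenius.
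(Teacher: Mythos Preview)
Your proposal is correct and follows essentially the same approach as the paper's own proof: linearize via convexity, use the OGD/projection non-expansiveness identity in Frobenius norm to obtain the per-round inequality, telescope while shifting the comparator index to produce the $D_f^2 + 2D_f P_T$ term, and separately bound the switching cost by $\eta G_f$ per step. The only cosmetic difference is that the paper bounds $\Fnorm{M_{t}-M_{t-1}}$ by writing $\Fnorm{\Pi_{\M}[M_{t-1}-\eta G_{t-1}]-M_{t-1}}\le \eta\Fnorm{G_{t-1}}$ directly, whereas you phrase it via $\Pi_{\M}[M_t]=M_t$; both are the same non-expansiveness argument.
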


\begin{proof}
Denote the gradient by $G_t = \nabla_M \tilde{f}_t(M_t)$. The convexity of online surrogate loss functions implies that 
\begin{equation*}
  \sum_{t=1}^T \tilde{f}_t(M_t) - \sum_{t=1}^{T} \tilde{f}_t(M^*_t) \leq \sum_{t=1}^{T} \inner{G_t}{M_t - M^*_t}.
\end{equation*}

Thus, it suffices to bound the sum of $\inner{G_t}{M_t - M^*_t}$. From the OGD update rule and the non-expensive property, we have 
\begin{align*}
    \Fnorm{M_{t+1} - M^*_t}^2 = {} & \left\lVert\Pi_{\M}[M_t - \eta G_t] - M^*_t\right\rVert_{\mathrm{F}}^2 \leq \Fnorm{M_t - \eta G_t - M^*_t}^2\\
    = {} & \eta^2 \Fnorm{G_t}^2 - 2\eta \inner{G_t}{M_t - M^*_t} + \Fnorm{M_t - M^*_t}^2 
\end{align*}
After rearranging, we obtain
\[
  \inner{G_t}{M_t - M^*_t} \leq \frac{\eta}{2}\Fnorm{G_t}^2 + \frac{1}{2\eta}\left(\Fnorm{M_t - M^*_t}^2 - \Fnorm{M_{t+1} - M^*_t}^2 \right).
\]

Next, we turn to analyze the second term on the right-hand side. Indeed,
\begin{align*}
  & \sum_{t=1}^{T}\left(\Fnorm{M_t - M^*_t}^2 - \Fnorm{M_{t+1} - M^*_t}^2 \right) \leq \sum_{t=1}^{T} \Fnorm{M_t - M^*_t}^2 - \sum_{t=2}^{T} \Fnorm{M_t - M^*_{t-1}}^2\\
  \leq {} & \Fnorm{M_1 - M^*_1}^2 + \sum_{t=2}^{T} \left(\Fnorm{M_t - M^*_{t}}^2 - \Fnorm{M_t - M^*_{t-1}}^2\right)\\
  = {} & \Fnorm{M_1 - M^*_1}^2 + \sum_{t=2}^{T} \inner{M^*_{t-1} - M^*_{t}}{2M_t - M^*_{t-1} - M^*_{t}} \leq D_f^2 + 2D_f \sum_{t=2}^{T} \Fnorm{M^*_{t-1} - M^*_{t}}.
\end{align*}
Hence, combining all above inequalities, we have 
\begin{align*}
    \sum_{t=1}^T \f_t(M_t) - \sum_{t=1}^{T} \f_t(M^*_t) \leq {} & \frac{\eta}{2}\sum_{t=1}^{T}\Fnorm{G_t}^2 +  \frac{1}{2\eta}\left(D_f^2 + 2D_f \sum_{t=2}^{T} \Fnorm{M^*_{t-1} - M^*_{t}}\right) \\
    \leq {} & \frac{\eta}{2} G_f^2 T + \frac{1}{2\eta}(D_f^2 + 2D_f P_T).
\end{align*}
On the other hand, the switching cost can be bounded by 
\begin{align*}
    \Fnorm{M_{t} - M_{t-1}} = \left\lVert\Pi_{\M}[M_{t-1} - \eta G_{t-1}] - M_{t-1}\right\rVert_{\mathrm{F}}^2 \leq \Fnorm{M_{t-1} - \eta G_{t-1} - M_{t-1}} \leq \eta G_f,
\end{align*}
which together with the previous dynamic regret bound yields the desired result.
\end{proof}

\subsubsection{Proof of Theorem~\ref{thm:main-result-control}}
\label{sec:appendix-proof-final}

\begin{proof}
We begin with the following dynamic policy regret decomposition,
\begin{align}
  & \sum_{t=1}^{T} c_t(x_t,u_t) - \sum_{t=1}^{T} c_t(x_t^{\pi_t},u_t^{\pi_t}) \notag \\
  = {} & \sum_{t=1}^{T} c_t(x_t^K(M_{0:t-1}), u_t^K(M_{0:t})) - \sum_{t=1}^{T} c_t(x_t^K(M^*_{0:t-1}), u_t^K(M^*_{0:t})) \notag\\
  = {} & \underbrace{\sum_{t=1}^{T} c_t(x_t^K(M_{0:t-1}), u_t^K(M_{0:t})) - \sum_{t=1}^{T} f_t(M_{t-1-H:t})}_{\define A_T} + \underbrace{\sum_{t=1}^{T} f_t(M_{t-1-H:t}) - \sum_{t=1}^{T} f_t(M^*_{t-1-H:t})}_{\define B_T} \notag\\
  {} & \qquad + \underbrace{\sum_{t=1}^{T} f_t(M^*_{t-1-H:t}) - \sum_{t=1}^{T} c_t(x_t^K(M^*_{0:t-1}), u_t^K(M^*_{0:t}))}_{\define C_T}. \label{eq:control-regret-decompose}
\end{align}
Notice that both $A_T$ and $C_T$ essentially represent the approximation error introduced by the truncated loss, so we can apply Theorem~\ref{thm:approximation} and obtain
\begin{align}
  \label{eq:control-regret-A-C}
  A_T + C_T \leq 4 T G_c D^2 \kappa^3 (1-\gamma)^{H+1}.
\end{align}
We now focus on the quantity $B_T$, which is the dynamic policy regret over the truncated loss functions $\{f_t\}_{t=1,\ldots,T}$. Indeed,
\begin{align}
  B_T = {} & \sum_{t=1}^{T} f_t(M_{t-1-H:t}) - \sum_{t=1}^{T} f_t(M^*_{t-1-H:t}) \nonumber \\
  \leq {} & \sum_{t=1}^{T} \f_t(M_{t}) - \sum_{t=1}^{T} \f_t(M^*_{t}) + \lambda \sum_{t=2}^{T} \Fnorm{M_{t-1} - M_t} + \lambda \sum_{t=2}^{T} \Fnorm{M^*_{t-1} - M^*_t}\nonumber \\
  \leq {} & \sum_{t=1}^{T} \inner{\nabla_M \f_t(M_{t})}{M_t - M^*_{t}} + \lambda \sum_{t=2}^{T} \Fnorm{M_{t-1} - M_t} + \lambda \sum_{t=2}^{T} \Fnorm{M^*_{t-1} - M^*_t} \nonumber \\
  = {} & \sum_{t=1}^{T} g_t(M_{t}) - \sum_{t=1}^{T} g_t(M^*_{t}) + \lambda \sum_{t=2}^{T} \Fnorm{M_{t-1} - M_t} + \lambda \sum_{t=2}^{T} \Fnorm{M^*_{t-1} - M^*_t}, \label{eq:control-regret-B}
\end{align}
where $\lambda = (H+2)^2L_f$ and $g_t(M) = \inner{\nabla_M \f_t(M_{t})}{M}$ is the surrogate linearized loss. As a consequence, we are reduced to proving an dynamic regret over the sequence of functions $\{g_t\}_{t=1,\ldots,T}$ with switching cost, namely, the first three terms on the right-hand side. We thus make use of the techniques developed in Appendix~\ref{sec:appendix-proof-OCOmemory} (dynamic policy regret minimization for OCO with memory) to decompose the terms into meta-regret and base-regret:
\begin{align*}
    {} & \sum_{t=1}^{T} g_t(M_{t}) - \sum_{t=1}^{T} g_t(M^*_{t}) + \lambda \sum_{t=2}^{T} \Fnorm{M_{t-1} - M_t}\\
  = {} & \underbrace{\left(\lambda \sum_{t=2}^{T} \Fnorm{M_{t-1} - M_t} + \sum_{t=1}^{T} g_t(M_t)\right) - \left(\lambda \sum_{t=2}^{T} \Fnorm{M_{t-1,i} - M_{t,i}} + \sum_{t=1}^{T} g_t(M_{t,i})\right)}_{\meta} \\
  {} & \qquad \qquad \qquad \qquad + \underbrace{\left(\lambda \sum_{t=2}^{T} \Fnorm{M_{t-1,i} - M_{t,i}} + \sum_{t=1}^{T} g_t(M_{t,i}) - \sum_{t=1}^{T} g_t(M^*_{t})\right)}_{\base}.
\end{align*}
We remark that the regret decomposition holds for any base-learner index $i \in [N]$. We now provide the upper bounds for the meta-regret and base-regret, respectively. First, Theorem~\ref{thm:OGD-dynamic-regret-Control} ensures the base-regret satisfies that 
\begin{align*}
\base \leq \frac{\eta_i}{2} (G_f^2 + 2\lambda G_f) T + \frac{1}{2\eta_i}(D_f^2 + 2D_f P_T),
\end{align*}
where $P_T = \sum_{t=2}^{T} \Fnorm{M^*_{t-1} - M^*_t}$ is the path length of the comparator sequence. On the other hand, similar to Lemma~\ref{lemma:sc-decompose} of Section~\ref{sec:sc-decompose}, we can show that the meta-regret satisfies
\begin{align*}
\meta \leq \lambda' \sum_{t=2}^{T} \norm{\p_{t-1} - \p_t}_1 + \sum_{t=1}^{T} \inner{\p_t}{\ellb_t} - \sum_{t=1}^T \ell_{t,i},
\end{align*}
where the surrogate loss vector $\ellb_t \in \Delta_N$ of the meta-algorithm is defined as 
\[
  \ell_{t,i} = \lambda \Fnorm{M_{t-1,i} - M_{t,i}} + g_t(M_{t,i}), \mbox{ for } i \in [N].
\]

Then, we can use the static regret with switching cost of online mirror descent for the prediction with expert advice setting (c.f. Corollary~\ref{corollary:Hedge} in Appendix~\ref{sec:appendix-proof-OMD}) and obtain that
\begin{align*}
  \meta \leq {} & \epsilon (2\lambda + G_f)(\lambda_f + G_f)D_f^2 T + \frac{\ln(1/p_{1,i})}{\epsilon}\\
  = {} &  D_f \sqrt{2(2\lambda + G_f)(\lambda + G_f) T}\big( 1 + \ln(1+i) \big),
\end{align*}
where the equation can be obtained by an appropriate setting of the learning rate $\epsilon$.

Since the above decomposition and the upper bounds of meta-regret and base-regret all hold for any base-learner index $i\in[N]$, we will choose the best index denoted by $i^*$ to make the regret bound tightest possible.  Specifically, from the construction of the step size pool, we can ensure that there exists a step size $\eta_{i^*}$ such that the optimal step size provably satisfies $\eta_{i^*} \leq \eta_* \leq 2\eta_{i^*}$. As a result, we have 
\begin{align*}
  {} & \sum_{t=1}^{T} g_t(M_{t}) - \sum_{t=1}^{T} g_t(M^*_{t}) + \lambda \sum_{t=2}^{T} \Fnorm{M_{t-1} - M_t}\\
  \leq {} & \frac{\eta_{i^*}}{2} (G_f^2 + 2\lambda G_f) T + \frac{1}{2\eta_{i^*}}(D_f^2 + 2D_f P_T) + D_f \sqrt{2(2\lambda + G_f)(\lambda + G_f) T}\big( 1 + \ln(1+i) \big)\\
  \leq {} & \frac{\eta_*}{2} (G_f^2 + 2\lambda G_f) T + \frac{1}{\eta_*}(D_f^2 + 2D_f P_T) + D_f \sqrt{2(2\lambda + G_f)(\lambda + G_f) T}\big( 1 + \ln(1+i) \big)\\
  \leq {} & \frac{3}{2} \sqrt{(G_f^2 + 2\lambda G_f)(D_f^2 + 2D_f P_T) T}\\
  & \qquad + D_f \sqrt{2(2\lambda + G_f)(\lambda + G_f) T} \left(1 + \ln( \lceil \log_2 (1 + 2P_T/D) \rceil +2) \right).
\end{align*}
Combining this result with the regret decomposition~\eqref{eq:control-regret-decompose} and the upper bounds~\eqref{eq:control-regret-A-C},~\eqref{eq:control-regret-B}, we have 
\begin{align*}
  & \sum_{t=1}^{T} c_t(x_t,u_t) - \sum_{t=1}^{T} c_t(x_t^{\pi_t},u_t^{\pi_t}) \\
  \leq {}&  4 T G_c D^2 \kappa^3 (1-\gamma)^{H+1} + \frac{3}{2} \sqrt{(G_f^2 + 2\lambda G_f)(D_f^2 + 2D_f P_T) T}\\
  \qquad {} & \qquad \qquad + D_f \sqrt{2(2\lambda + G_f)(\lambda + G_f) T} \left(1 + \ln( \lceil \log_2 (1 + 2P_T/D) \rceil +2) \right) + \lambda P_T.
\end{align*}
The specific values of $D, L_f, G_f, D_f$ can be found in Lemma~\ref{lemma:support-2-f-property}. By setting $H = \O(\log T)$, we obtain an $\Ot(\sqrt{T(1+P_T)})$ dynamic policy regret and hence complete the proof.
\end{proof}

\subsection{{Proof of Theorem~\ref{thm:unknown-system}}}
\label{sec:unknown-system}
In this part, we present the proof of Theorem~\ref{thm:unknown-system}. Specifically, we provide the main proof of Theorem~\ref{thm:unknown-system} in Appendix~\ref{subsub:unknown-system} and the proofs of some key lemmas in Appendix~\ref{subsub:lemma-known}.

\paragraph{Notations.} We define some notations for convenience. Define $\epsilon_w$ an upper bound for the gap between the true disturbance $w_t$ and the estimated one $\wh_t$, i.e., $\norm{w_t-\wh_t}_2 \le \epsilon_w$, and define a universal upper bound $W_0$ for $\epsilon_w$ and disturbance bound $W$ (cf. Assumption~\ref{assume:bound-noise}) as $W,\epsilon_w \le W_0$. We also define $d_{\min}=\min\{d_x,d_u\}, \tilde{A}_K=A-BK, \hat{A}_K = \Ah - \Bh K$ for notational convenience.

\subsubsection{Proof of Theorem~\ref{thm:unknown-system}}

\begin{proof}
  \label{subsub:unknown-system}
The overall dynamic regret is at most
\begin{equation*}
  \sum_{t=1}^{T} c_t(x_t,u_t) - \sum_{t=1}^{T} c_t(x_t^{\pi_t},u_t^{\pi_t}) \le \underbrace{\sum_{t=1}^{T_0} c_t(x_t,u_t)}_{\term{A}} + \underbrace{\sum_{t=T_0+1}^{T} c_t(x_t,u_t) - \sum_{t=T_0+1}^{T} c_t(x_t^{\pi_t},u_t^{\pi_t})}_{\term{B}},
\end{equation*}
where term~(A) is the cumulative cost during the system identification procedure and term~(B) is the dynamic regret caused by \SC~algorithm over the rest rounds. Note that term~(A) enjoys a trivial upper bound of $\O(T_0)$, and term~(B) can be decomposed into two parts:
\begin{align*}
  \term{B} = {} & \underbrace{\sum_{t=T_0+1}^{T} c_t(x_t,u_t) - \sum_{t=T_0+1}^{T} c_t(x_t^{\pi_t}(\Sh),u_t^{\pi_t}(\Sh))}_{\term{b\mbox{-}1}}\\
  & \quad + \underbrace{\sum_{t=T_0+1}^{T} c_t(x_t^{\pi_t}(\Sh),u_t^{\pi_t}(\Sh)) - \sum_{t=T_0+1}^{T} c_t\sbr{x_t^{\pi_t}(S),u_t^{\pi_t}(S)}}_{\term{b\mbox{-}2}}.
\end{align*}
Here, $(x_t^{\pi_t}(S),u_t^{\pi_t}(S))$ is the state-action pair produced by the policy $\pi_t$ on the true system $S=(A,B,\{w\})$, whereas $(x_t^{\pi_t}(\Sh),u_t^{\pi_t}(\Sh))$ is the state-action pair produced by the policy $\pi_t$ on the estimated system $\Sh=(\Ah,\Bh,\{\wh\})$. Summarizing, term~(b-1) is the dynamic regret on the estimated system and term~(b-2) is the gap between the cumulative cost of the true system and that of the estimated system. From Theorem~\ref{thm:main-result-control}, it holds that $\term{b\mbox{-}1} \leq \tilde{\O}(\sqrt{T(1+P_T)})$. From Lemma~\ref{lem:cumulative-cost-difference}, we can bound term~(b-2) as $\term{b\mbox{-}2} \le \O(\epsilon_{A,B}T)$. Overall, with probability at least $1-\delta$, the total dynamic regret is at most 
\begin{align*}
  \textnormal{D-Regret}_T \le {} & \O(T_0) + \tilde{\O}(\sqrt{T(1+P_T)}) + \O(\epsilon_{A,B}T)\\
  = {} & \O(\epsilon_{A,B}^{-2}+\epsilon_{A,B}T) + \tilde{\O}(\sqrt{T(1+P_T)})\\
  \le {} & \O(T^{2/3}) + \tilde{\O}(\sqrt{T(1+P_T)}).
\end{align*}
The second step makes use of the relationship between the system identification rounds $T_0$ and the estimation error $\opnorm{\Ah-A}, \opnorm{\Bh-B} \le \epsilon_{A,B}$, as demonstrated in Lemma~\ref{lem:system-recovery}. The last step holds by setting the rounds of exploration to ensure $\epsilon_{A,B} = \min\{10^{-3} \kappa^{-10} \gamma^2,T^{-1/3}\}$, which is realized when total time horizon is large enough, i.e., $T \ge 10^9 \kappa^{30} \gamma^{-6}$. 
\end{proof}

\subsubsection{Key Lemmas in Unknown Systems}
\label{subsub:lemma-known}
The proof of Theorem~\ref{thm:unknown-system} relies on the two key lemmas (Lemma~\ref{lem:system-recovery} and Lemma~\ref{lem:cumulative-cost-difference}). In the following, we provide the formal statements and corresponding proofs.

Lemma~\ref{lem:system-recovery} establishes the relationship between the estimation accuracy $\epsilon_{A,B}$ and the number of estimation rounds $T_0$. This lemma is firstly due to~\citet{ALT'20:control-Hazan} and is restated here for self-containedness.

\begin{myLemma}[Theorem 19 of \citet{ALT'20:control-Hazan}]
  \label{lem:system-recovery}
  Under Assumptions~\ref{assume:bound-noise}, \ref{assume:strongly-stable}, \ref{assume:strong-controllability}, when Algorithm~\ref{alg:system-identification} runs for $T_0$ rounds, if the output pair $(\Ah,\Bh)$ satisfies, with probability at least $1-\delta$, that $\opnorm{\Ah-A},\opnorm{\Bh-B} \le \epsilon_{A,B}$, then it holds that $T_0 = \O(\epsilon_{A,B}^{-2})$.
\end{myLemma}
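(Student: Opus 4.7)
The plan is to prove that estimation accuracy $\epsilon_{A,B}$ is achievable after $T_0=\O(\epsilon_{A,B}^{-2})$ rounds via the classical Monte‑Carlo rate for the random‑input estimator. First, I would analyze the closed‑loop dynamics under the exploration controller $u_t=-Kx_t+\tilde u_t$, which give $x_{t+1}=\tilde A_K x_t + B\tilde u_t + w_t$ and, upon unrolling $j$ steps,
\[
    x_{t+j+1} \;=\; \tilde A_K^{j+1}x_t \;+\; \sum_{i=0}^{j}\tilde A_K^{j-i}\bigl(B\tilde u_{t+i}+w_{t+i}\bigr).
\]
Since $\tilde u_t\sim\{\pm 1\}^{d_u}$ is zero‑mean with identity covariance and is independent of $x_t$, of the oblivious disturbance sequence $\{w_\tau\}$, and of every other Rademacher input, taking expectation yields $\E[x_{t+j+1}\tilde u_t^\T]=\tilde A_K^j B$. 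Hence $\E[N_j]=\tilde A_K^j B$, so that $\E[\hat C_0]=C_k$ and $\E[\hat C_1]=\tilde A_K C_k$. Strong controllability (Assumption~\ref{assume:strong-controllability}) guarantees $C_k$ is full row‑rank with $\opnorm{(C_kC_k^\T)^{-1}}\le\kappa_c$, so the noise‑free estimator recovers $(\tilde A_K,B)$ exactly.

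Next, I would establish concentration of each $N_j$ around its mean. Strong stability of $K$ combined with Assumption~\ref{assume:bound-noise} gives a uniform state bound $\norm{x_t}_2\le R$ with $R$ polynomial in $(\kappa,\gamma^{-1},W,\kappa_B,d_u)$, so every summand $x_{t+j+1}\tilde u_t^\T-\tilde A_K^j B$ has operator norm at most $\poly(R,\kappa,\gamma^{-1})$. The summands are not independent because consecutive $x$'s share past Rademacher inputs, so I would apply a block decomposition (grouping indices spaced at least $j+1$ apart so that $\tilde u_t$ is independent of the block's history) and invoke the vector Azuma–Hoeffding inequality (Lemma~\ref{suplem:vector-azuma}) on the vectorization of $N_j$. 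A union bound over $j\in\{0,\ldots,k\}$ then yields, with probability at least $1-\delta$,
\[
    \opnorm{N_j - \tilde A_K^j B} \;\le\; \Ot\!\bigl(\sqrt{1/T_0}\bigr) \qquad \forall\, j\in\{0,\ldots,k\},
\]
hiding polynomial dependence on system constants and $\log(1/\delta)$. Consequently $\opnorm{\hat C_0-C_k}$ and $\opnorm{\hat C_1-\tilde A_K C_k}$ obey the same bound.

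The remaining and main step is propagating these bounds through the pseudo‑inverse to obtain $\opnorm{\hat B-B},\opnorm{\hat A-A}\le\epsilon_{A,B}$. The $\hat B=N_0$ bound is immediate. For $\hat A_K$, Weyl's inequality implies that once $\opnorm{\hat C_0-C_k}$ is a constant fraction of $\sigma_{\min}(C_k)\ge\kappa_c^{-1/2}$, $\hat C_0$ is also full row‑rank with $\opnorm{(\hat C_0\hat C_0^\T)^{-1}}\le 2\kappa_c$; a standard perturbation bound for the right pseudo‑inverse then gives
\[
    \opnorm{\hat A_K-\tilde A_K} \;\le\; \O\!\bigl(\kappa_c^{3/2}\opnorm{\tilde A_K}\opnorm{\hat C_0-C_k} \;+\; \kappa_c^{1/2}\opnorm{\hat C_1-\tilde A_K C_k}\bigr),
\]
and the identity $\hat A-A=(\hat A_K-\tilde A_K)+(\hat B-B)K$ yields $\opnorm{\hat A-A}\le\opnorm{\hat A_K-\tilde A_K}+\kappa\opnorm{\hat B-B}$. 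Combining with the $1/\sqrt{T_0}$ rate, requiring the final error to be at most $\epsilon_{A,B}$ forces $T_0\ge c\cdot\epsilon_{A,B}^{-2}$, where $c$ depends polynomially on $(\kappa,\kappa_c,\gamma^{-1},W,\kappa_B,\log(1/\delta))$. The hard part is precisely this pseudo‑inverse perturbation step: it is nonlinear in the noise and is controlled only through the strong‑controllability constant $\kappa_c$, which is why Assumption~\ref{assume:strong-controllability} is indispensable for the non‑asymptotic rate.
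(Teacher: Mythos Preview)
Your overall strategy matches the paper's: show $\E[N_j]=\tilde A_K^j B$, concentrate each $N_j$ around its mean at rate $\O(1/\sqrt{T_0})$ via Azuma--Hoeffding, then propagate the error through a perturbation bound with $\sigma_{\min}(C_k)\ge 1/\sqrt{\kappa_c}$ supplied by strong controllability. The conclusion $T_0=\O(\epsilon_{A,B}^{-2})$ then follows exactly as you indicate.

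There is one gap in your concentration step. The block decomposition you describe does not deliver independence: spacing indices $j{+}1$ apart does not decouple the summands, because $x_{t+j+1}$ depends on \emph{all} past inputs $\tilde u_{0:t+j}$, not just the last $j{+}1$. The paper sidesteps this entirely by observing that no blocking is needed: the centered summands $\tilde N_{j,t}\define x_{t+j+1}\tilde u_t^\T-\tilde A_K^j B$ already form a martingale difference sequence with respect to $\{\tilde u_t\}$, i.e.\ $\E[\tilde N_{j,t}\mid \tilde u_{0:t-1}]=\mathbf{0}$, since in the unrolled expression for $x_{t+j+1}\tilde u_t^\T$ every term either carries a fresh $\tilde u_{t+i}$ with $i\ge 1$ (which averages to zero) or is multiplied by the zero-mean $\tilde u_t^\T$ given the past. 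Lemma~\ref{suplem:vector-azuma} then applies directly to all $T_0-k$ terms (this is the content of Lemma~\ref{lem:moment-recovery}).

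For the perturbation step, the paper uses a slightly different but equivalent tool: rather than Weyl plus a pseudo-inverse bound, it applies the linear-system perturbation Lemma~\ref{suplem:perturbation-analysis} to $\tilde A_K C_k=\tilde A_K C_k$ versus $\hat A_K\hat C_0=\hat C_1$, obtaining
\[
\opnorm{\tilde A_K-\hat A_K}\;\le\;\frac{\opnorm{\tilde A_K C_k-\hat C_1}+\opnorm{\tilde A_K}\,\opnorm{C_k-\hat C_0}}{\sigma_{\min}(C_k)-\opnorm{C_k-\hat C_0}},
\]
after which $\opnorm{\hat A-A}\le\opnorm{\hat A_K-\tilde A_K}+\kappa\,\opnorm{\hat B-B}$ exactly as you write. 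Your pseudo-inverse route would yield the same order, so this is only a stylistic difference.
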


\begin{proof}[of Lemma~\ref{lem:system-recovery}]
  Based on the observation, we have the following two equations:
  \begin{equation*}
      \tilde{A}_K C_k = (\tilde{A}_K C_k), \quad \Ah_K \hat{C}_0 = \hat{C}_1.
  \end{equation*}
  Using Lemma~\ref{suplem:perturbation-analysis}, it holds that
  \begin{equation}
      \label{eq:tilde-A-gap}
      \opnorm{\tilde{A}_K - \Ah_K} \le \frac{\opnorm{\tilde{A}_K C_k - \hat{C}_1} + \opnorm{C_k - \hat{C}_0}\opnorm{\tilde{A}_K}}{\sigma_{\min}(C_k) - \opnorm{C_k - \hat{C}_0}}.
  \end{equation}
  Lemma~\ref{lem:moment-recovery} tells that with probability at least $1-\delta$, $\Fnorm{N_j - \tilde{A}_K^{j} B} \leq \epsilon$, where
  \begin{equation}
    \label{eq:concentration-acc}
       \epsilon \define 3 \kappa_B\kappa^2 d_u W \gamma^{-1} \sqrt{\frac{2d_{\min}\log\sbr{2e^2k\delta^{-1}}}{T_0-k}}.
  \end{equation}
  Owing to the benign high-probability guarantee, we only need to focus on the successful event, that is, under the case when $\Fnorm{N_j - \tilde{A}_K^{j} B} \leq \epsilon$ is true. We then try to bound $\opnorm{C_k - C_0}, \opnorm{\tilde{A}_K C_k - C_1}$,
  \begin{align}
      & 
      \begin{aligned}
          \label{eq:Ck-C0}
          \opnorm{C_k - \hat{C}_0}
          \le {} & \Fnorm{C_k - \hat{C}_0}
          = \left\| \mbr{N_0-B,\ldots,N_{k-1}-\tilde{A}_K^{k-1}B} \right\|_{\mathrm{F}}\\
          = {} & \sqrt{\sum_{i=0}^{k-1} \Fnorm{N_i-\tilde{A}_K^iB}^2} \le \sqrt{k \epsilon^2} = \epsilon \sqrt{k},
      \end{aligned}\\
      &
      \begin{aligned}
          \label{eq:ACk-C1}
          \opnorm{\tilde{A}_K C_k - \hat{C}_1}
          \le {} & \Fnorm{\tilde{A}_K C_k - \hat{C}_1}
          = \left\| \mbr{N_1-\tilde{A}_KB,\ldots,N_k-\tilde{A}_K^kB} \right\|_{\mathrm{F}}\\
          = {} & \sqrt{\sum_{i=1}^k \Fnorm{N_i-\tilde{A}_K^iB}^2} \le \sqrt{k \epsilon^2} = \epsilon \sqrt{k}.
      \end{aligned}
  \end{align}
  Using Lemma~\ref{suplem:strong-controllability} to upper-bound $\sigma_{\min}(C_k)$, and  plugging \eqref{eq:Ck-C0} and \eqref{eq:ACk-C1} into \eqref{eq:tilde-A-gap}, we have
  \begin{equation*}
      \opnorm{\tilde{A}_K - \Ah_K} \le \frac{\epsilon \sqrt{k} + \epsilon \sqrt{k} \cdot \kappa^2 (1-\gamma)}{1/\sqrt{\kappa_c}-\epsilon \sqrt{k}}.
  \end{equation*}
  The gap between $A$ and $\Ah$ can be bounded as 
  \begin{align*}
      \opnorm{A - \Ah} = {} & \opnorm{\tilde{A}_K + BK - \Ah_K - \Bh K}\\
      \le {} & \opnorm{\tilde{A}_K - \Ah_K} + \opnorm{K}\opnorm{B-\Bh}\\
      \le {} & \frac{\epsilon \sqrt{k} + \epsilon \sqrt{k} \cdot \kappa^2 (1-\gamma)}{1/\sqrt{\kappa_c}-\epsilon \sqrt{k}} + \kappa \epsilon \le \frac{3\epsilon \kappa^{5/2}}{\sqrt{1/\kappa_c}-\epsilon \sqrt{\kappa}}.
  \end{align*}
  If we want $\Fnorm{\Ah-A},\Fnorm{\Bh-B} \le \epsilon_{A,B}$, the following equations should hold:
  \begin{equation}
      \label{eq:eps-epsAB}
      \begin{aligned}
      \Fnorm{\Ah-A} \le {} & \sqrt{d_x} \opnorm{\Ah-A} \le \sqrt{d_x} \sbr{\frac{3\epsilon \kappa^{5/2}}{\sqrt{1/\kappa_c}-\epsilon \sqrt{\kappa}}} \define \epsilon_A \le \epsilon_{A,B},\\
      \Fnorm{\Bh-B} \le {} & \sqrt{d_{\min}} \opnorm{\Bh-B} \le \sqrt{d_{\min}} \epsilon \define \epsilon_B \le \epsilon_{A,B}.
      \end{aligned}
  \end{equation}
  Besides, it is easy to see that $\epsilon_B = \sqrt{d_{\min}} \epsilon \le \sqrt{d_x} \epsilon \le \epsilon_A$, thus conditions in \eqref{eq:eps-epsAB} can be simplified as $\epsilon_A \le \epsilon_{A,B}$. Finally, combining the above inequality with the value of $\epsilon$ (c.f.~\eqref{eq:concentration-acc}), we can obtain that $T_0 = \O({\epsilon_{A,B}^{-2}})$.
  \end{proof}

Lemma~\ref{lem:cumulative-cost-difference} measures the difference of the cumulative costs of a policy between the true system and the estimated one. This result holds for both strongly stable linear controllers and non-stationary DAC policy and here we only give a proof of the latter, for the former result, we refer readers to {\citet[Lemma 16]{ALT'20:control-Hazan}}.
\begin{myLemma}[Identification Accuracy]
\label{lem:cumulative-cost-difference}
Under Assumptions~\ref{assume:bound-noise}-\ref{assume:strongly-stable}, suppose $\opnorm{\Ah-A},\opnorm{\Bh-B} \le \epsilon_{A,B} \le 0.25 \kappa^{-3} \gamma$ and let $K$ be any $(\kappa,\gamma)$-strongly stable linear controller with respect to $(A,B)$. Then for any non-stationary DAC policy $\pi_{1:T}$ parameterized via $M_{1:T}$,
  \begin{equation*}
      \left|\sum_{t=T_0+1}^{T} c_t\sbr{x_t^{\pi_t}(\Sh),u_t^{\pi_t}(\Sh)}- \sum_{t=T_0+1}^{T} c_t\sbr{x_t^{\pi_t}(S),u_t^{\pi_t}(S)}\right| \le \O\sbr{\epsilon_{A,B} T+\epsilon_{A,B}^2 T},
  \end{equation*}
  where $(x_t^{\pi_t}(S),u_t^{\pi_t}(S))$ is the state-action pair produced by policy $\pi_t$ on the true system $S=(A,B,\{w\})$ and $(x_t^{\pi_t}(\Sh),u_t^{\pi_t}(\Sh))$ is produced on the estimated system $\Sh=(\Ah,\Bh,\{\wh\})$.
\end{myLemma}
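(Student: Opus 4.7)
The plan is to reduce the cumulative cost gap to per-round deviations in state and control, and then further reduce those to perturbations of the closed-loop transition matrix and the recovered disturbances. By the Lipschitz property in Assumption~\ref{assume:cost-bound}, the summand at time $t$ is at most $G_c D \bigl(\|x_t^{\pi_t}(\Sh) - x_t^{\pi_t}(S)\|_2 + \|u_t^{\pi_t}(\Sh) - u_t^{\pi_t}(S)\|_2\bigr)$, provided both executions stay in a ball of radius $D$; this boundedness follows as in Lemma~\ref{lemma:support-1-boundedness}, using the DAC magnitude constraint in Assumption~\ref{assume:strongly-stable} together with the smallness of $\epsilon_{A,B}$. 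Since the executed control differs from $-K x_t^{\pi_t}$ only by a deterministic linear combination of past disturbances, the control gap is at most $\kappa$ times the state gap plus $\sum_{i=1}^{H} \opnorm{M_t^{[i-1]}} \|\wh_{t-i} - w_{t-i}\|_2$, so it suffices to control the state gap and the disturbance gap.

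The disturbance gap is immediate: from $\wh_t - w_t = (A - \Ah) x_t + (B - \Bh) u_t$ and the uniform boundedness of $(x_t,u_t)$, we get $\|\wh_t - w_t\|_2 \le \O(\epsilon_{A,B})$. For the state gap, I would invoke Proposition~\ref{proposition:DAC-state} to express both states in transfer-matrix form and split
\[
x_t^{\pi_t}(\Sh) - x_t^{\pi_t}(S) = \sum_{i=0}^{H+t-2} \bigl(\Psih_{t-1,i} - \Psi_{t-1,i}\bigr)\, w_{t-1-i} + \sum_{i=0}^{H+t-2} \Psih_{t-1,i}\, \bigl(\wh_{t-1-i} - w_{t-1-i}\bigr),
\]
where $\Psi$ is defined in terms of $\tilde{A}_K = A - BK$ and $\Psih$ in terms of $\Ah_K = \Ah - \Bh K$. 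The second sum is absolutely summable via the geometric bound $\opnorm{\Psih_{t-1,i}} \le \O((1-\gamma')^i)$ times the $\O(\epsilon_{A,B})$ disturbance error, generating an $\O(\epsilon_{A,B})$ per-round contribution plus an $\O(\epsilon_{A,B}^2)$ cross term when combined with the perturbation from the first sum.

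The main technical obstacle is a perturbation bound on $\Ah_K^j - \tilde{A}_K^j$ that retains geometric decay. Because $\Ah_K - \tilde{A}_K = (\Ah - A) - (\Bh - B) K$ has operator norm $\O(\epsilon_{A,B})$, and the hypothesis $\epsilon_{A,B} \le 0.25\kappa^{-3}\gamma$ is tight enough to ensure $\Ah_K$ remains strongly stable with slightly modified parameters $(\kappa', \gamma')$, the telescoping identity $\Ah_K^j - \tilde{A}_K^j = \sum_{\ell=0}^{j-1} \Ah_K^{\ell}(\Ah_K - \tilde{A}_K)\tilde{A}_K^{j-1-\ell}$ yields $\opnorm{\Ah_K^j - \tilde{A}_K^j} \le \O\bigl(j\,\epsilon_{A,B}\,(1-\gamma'')^{j-1}\bigr)$ for a slightly smaller $\gamma'' > 0$. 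Plugging this into the transfer-matrix definition~\eqref{eq:transfer-matrix} and summing the geometric series against the magnitude bound $\opnorm{M^{[i]}} \le \kappa_B\kappa^3(1-\gamma)^i$ produces $\opnorm{\Psih_{t-1,i} - \Psi_{t-1,i}} \le \O(\epsilon_{A,B}\,(1-\gamma'')^i)$, whose sum over $i$ is finite and $t$-independent. Combining both pieces gives a per-round state gap of order $\O(\epsilon_{A,B} + \epsilon_{A,B}^2)$; summing over the $T - T_0 \le T$ commitment rounds and multiplying by the Lipschitz factor delivers the claimed $\O(\epsilon_{A,B} T + \epsilon_{A,B}^2 T)$ bound. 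Once the strong-stability margin for $\Ah_K$ is secured, the remaining work is careful bookkeeping of the telescoping expansion.
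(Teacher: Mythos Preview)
Your proposal is correct and follows essentially the same route as the paper's proof: reduce via cost Lipschitzness to per-round state and control deviations, bound the control gap by the state gap plus the disturbance-reconstruction error, and then split the state gap into a transfer-matrix perturbation piece and a disturbance-error piece. The only cosmetic differences are that the paper inserts the intermediate point $\sum \Psi\,\wh$ rather than $\sum \Psih\,w$ in the triangle inequality (an equivalent split), and the paper bounds $\sum_j \opnorm{\tilde{A}_K^j - \Ah_K^j}$ by passing through the Lyapunov similarity $A-BK = QLQ^{-1}$ and invoking Lemma~\ref{suplem:matrix-minus-norm-sum}, whereas you use the telescoping identity $\Ah_K^j - \tilde{A}_K^j = \sum_{\ell} \Ah_K^{\ell}(\Ah_K - \tilde{A}_K)\tilde{A}_K^{j-1-\ell}$ directly; Lemma~\ref{suplem:matrix-minus-norm-sum} is proved by exactly this telescoping, so the two are the same underneath.
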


\begin{proof}[of Lemma~\ref{lem:cumulative-cost-difference}]
  If the policy is a non-stationary DAC policy parameterized via $M_{1:T}$, in system $(A,B,\{w\})$, it holds that
\begin{align*}
    \norm{x_{t+1}^{\pi_t}(S)}_2 \le {} & W\sum_{i=0}^{H+t} \opnorm{\Psi_{t,i}^{K,t}(M_{0:t})} \\
    = {}& W\sum_{i=0}^{H+t} \opnorm{\tilde{A}_K^i \ind{i \leq t} + \sum_{j=0}^{t} \tilde{A}_K^j B M_{t-j}^{[i-j]} \ind{1 \leq i-j \leq H}}\\
    \le {} & W \sbr{\kappa^2 \sum_{i=0}^{H+t} (1-\gamma)^i + \kappa_B^2\kappa^3 \sum_{i=0}^{H+t} \sum_{j=0}^{t} \opnorm{\tilde{A}_K^j \ind{1 \leq i-j \leq H}}}\\
    \le {} & W \sbr{\kappa^2 \gamma^{-1} + \kappa_B^2\kappa^3 \sum_{i=0}^{H+t} \sum_{j=i-H}^{i-1} \opnorm{\tilde{A}_K^{j}}\ind{0 \leq j \leq t}}\\
    \le {} & W \sbr{\kappa^2 \gamma^{-1} + \kappa_B^2\kappa^5 \sum_{i=0}^{H+t} \sum_{j=i-H}^{i-1} (1-\gamma)^{j}\ind{0 \leq j \leq t}}\\
    \le {} & W \sbr{\kappa^2 \gamma^{-1} + \kappa_B^2 \kappa^5 H \sum_{i=0}^{t} (1-\gamma)^{i}}\\
    \le {} & W\sbr{\kappa^2 \gamma^{-1} + \kappa_B^2 \kappa^5 H \gamma^{-1}} \\
    \le {} & 2W \kappa_B^2 \kappa^5 \gamma^{-1} H.
\end{align*}
By Lemma~\ref{suplem:stability-preserve}, a linear controller $K$ is $\sbr{\kappa, \gamma-2\kappa^3 \epsilon_{A,B}}$-strongly stable with respect to the estimated system $\Sh=(\Ah,\Bh,\{\wh\})$ if it is $(\kappa,\gamma)$-strongly stable for the true system $S=(A,B,\{w\})$. Thus it can be easily verified that 
\begin{align*}
    1 - \gamma + 2 \kappa^3 \epsilon_{A,B} \le 1 - \gamma + 2 \kappa^3 \cdot 0.25 \kappa^{-3} \gamma = 1 - \gamma/2.
\end{align*}
For simplicity, we can say that linear controller $K$ is $(\kappa, \gamma/2)$-strongly stable for the estimated system $\Sh$. Further, let $\opnorm{\Bh} \le \kappa_{\Bh}$, it holds that 
\begin{equation*}
  \kappa_{\Bh} = \opnorm{\Bh} = \opnorm{(\Bh-B)+B} \le \epsilon_{A,B} + \kappa_B \le 2\kappa_B.
\end{equation*}
As a result, we can bound $\norm{x_{t+1}^{\pi_t}(\Sh)}_2$ as 
\begin{equation*}
  \norm{x_{t+1}^{\pi_t}(\Sh)}_2 \le 2 (\epsilon_w+W)(2\kappa_B)^2 \kappa^5 (\gamma/2)^{-1} H = 32 W_0 \kappa_B^2 \kappa^5 \gamma^{-1} H.
\end{equation*}
As for the action $u_t^{\pi_t}(\Sh)$, we can bound it as
\begin{align*}
    \norm{u_t^{\pi_t}(\Sh)}_2 \le {} & \norm{-Kx_t^{\pi_t}(\Sh)}_2 + \left\|\sum_{i=1}^H M_{t}^{[i]} \wh_{t-i}\right\|_2 \le 32 W_0 \kappa_B^2 \kappa^6 \gamma^{-1} H + 2W_0 \kappa_B \kappa^3 \gamma^{-1}\\
    \le {} & 34 W_0 \kappa_B^2 \kappa^6 \gamma^{-1} H.
\end{align*}
Thus, the diameter of the state-action domain in the estimated system, denoted as $\hat{D}$, is at most $\hat{D}\define \max_{t \in [T]}\max\{\norm{x_t(\Sh)}_2,\norm{u_t(\Sh)}_2\}=34 W_0 \kappa_B^2 \kappa^6 \gamma^{-1} H$.
The gap of the cumulative costs between the true system and the estimated system can be bounded as
\begin{equation}
\label{eq:difference-between-system}
\begin{split}
    & \left| \sum_{t=T_0+1}^{T} c_t\sbr{x_t^{\pi_t}(\Sh),u_t^{\pi_t}(\Sh)}- \sum_{t=T_0+1}^{T} c_t\sbr{x_t^{\pi_t}(S),u_t^{\pi_t}(S)}\right|\\
    \le {} & G_c \hat{D} \sum_{t=1}^T \norm{x_t^{\pi_t}(\Sh)-x_t^{\pi_t}(S)}_2 + G_c \hat{D} \sum_{t=1}^T \norm{u_t^{\pi_t}(\Sh)-u_t^{\pi_t}(S)}_2.
\end{split}
\end{equation}
We start by analyzing $\norm{u_t^{\pi_t}(\Sh)-u_t^{\pi_t}(S)}_2$:
\begin{equation}
  \label{eq:action-gap}
  \begin{aligned}
    \norm{u_t^{\pi_t}(\Sh)-u_t^{\pi_t}(S)}_2 = {} & \left\|\sbr{-Kx_t^{\pi_t}(\Sh)+\sum_{i=1}^H M_t^{[i]}\wh_{t-i}} - \sbr{-Kx_t^{\pi_t}(S)+\sum_{i=1}^H M_t^{[i]}w_{t-i}} \right\|_2\\
    \le {} & \kappa \norm{x_t^{\pi_t}(\Sh)-x_t^{\pi_t}(S)}_2 + \sum_{i=1}^H \norm{M_t^{[i]}(\wh_{t-i}-w_{t-i})}\\
    \le {} & \kappa \norm{x_t^{\pi_t}(\Sh)-x_t^{\pi_t}(S)}_2 + \epsilon_w \kappa_B \kappa^3 \sum_{i=1}^H (1-\gamma)^i\\
    \le {} & \kappa \norm{x_t^{\pi_t}(\Sh)-x_t^{\pi_t}(S)}_2 + \epsilon_w \kappa_B \kappa^3 \gamma^{-1}.
  \end{aligned}
\end{equation}
Plugging \eqref{eq:action-gap} into \eqref{eq:difference-between-system}, it holds that
\begin{equation}
  \label{eq:cost-gap-2}
  \begin{split}
    & \left| \sum_{t=T_0+1}^{T} c_t\sbr{x_t^{\pi_t}(\Sh),u_t^{\pi_t}(\Sh)}- \sum_{t=T_0+1}^{T} c_t\sbr{x_t^{\pi_t}(S),u_t^{\pi_t}(S)}\right| \\
    \le {} & 2 \kappa G_c \hat{D} \sum_{t=1}^T \norm{x_t^{\pi_t}(\Sh)-x_t^{\pi_t}(S)}_2 + G_c \hat{D} \epsilon_w \kappa_B \kappa^3 \gamma^{-1} T.
  \end{split}
\end{equation}
This motivates the need to analyze $\norm{x_t^{\pi_t}(\Sh)-x_t^{\pi_t}(S)}_2$. To begin with, we define $\Psih_{t,i}^{K,h}(M_{t-h:t}) = \Ah_K^i \mathbf{1}_{i \leq h} + \sum_{j=0}^{h} \Ah_K^j \Bh M_{t-j}^{[i-j]} \mathbf{1}_{1 \leq i-j \leq H}$, where $\Ah_K\define \Ah-\Bh K$. Expanding $x_t^{\pi_t}(\Sh)$ and $x_t^{\pi_t}(S)$ using Proposition~\ref{proposition:DAC-state}, it holds that
\begin{equation}
\label{eq:term-total}
\begin{split}
    & \norm{x_t^{\pi_t}(\Sh)-x_t^{\pi_t}(S)}_2 = \left\|\sum_{i=0}^{H+t}\Psi_{t,i}^{K,t}(M_{1:t})w_{t-i}-\sum_{i=0}^{H+t}\Psih_{t,i}^{K,t}(M_{1:t})\wh_{t-i}\right\|_2\\
    \le {} & \underbrace{\left\| \sum_{i=0}^{H+t}\Psi_{t,i}^{K,t}(M_{1:t})w_{t-i}-\sum_{i=0}^{H+t}\Psi_{t,i}^{K,t}(M_{1:t})\wh_{t-i} \right\|_2}_{\mathtt{term~(i)}} + \underbrace{\left\| \sum_{i=0}^{H+t}\Psi_{t,i}^{K,t}(M_{1:t})\wh_{t-i}-\sum_{i=0}^{H+t}\Psih_{t,i}^{K,t}(M_{1:t})\wh_{t-i} \right\|_2}_{\mathtt{term~(ii)}}.
\end{split}
\end{equation}
First, we analyze term~(i):
\begin{equation}
\label{eq:term-i}
    \mathtt{term~(i)} \le \epsilon_w \sum_{i=0}^{H+t} \opnorm{\Psi_{t,i}^{K,t}(M_{1:t})}
    \le 2\epsilon_w \kappa_B^2\kappa^5 \gamma^{-1} H.
\end{equation}
Second, we investigate term~(ii):
\begin{align}
    \mathtt{term~(ii)} \le {} & (W+\epsilon_w) \sum_{i=0}^{H+t} \left\| \Psi_{t,i}^{K,t}(M_{1:t})-\Psih_{t,i}^{K,t}(M_{1:t}) \right\|_{\mathrm{op}} \notag\\
    \le {} & 2W_0\sum_{i=0}^{H+t}\sbr{\left\| \sbr{\tilde{A}_K^i - \Ah_K^i} \ind{i \leq t} \right\|_{\mathrm{op}} + \kappa_B\kappa^3\sum_{j=0}^{t}\opnorm{\tilde{A}_K^j B  - \Ah_K^j \Bh} \ind{1 \leq i-j \leq H}} \notag\\
    \le {} & 2W_0\kappa^2 \underbrace{\sum_{i=0}^t \opnorm{L^i-\hat{L}^i}}_{\term{a}} + 2W_0\kappa_B\kappa^3 \underbrace{\sum_{i=0}^{H+t}\sum_{j=0}^{t}\opnorm{\tilde{A}_K^j B  - \Ah_K^j \Bh} \ind{1 \leq i-j \leq H}}_{\term{b}} \label{eq:term-ii}.
\end{align}
For term~(a), using Lemma~\ref{suplem:matrix-minus-norm-sum}, it holds that
\begin{equation*}
  \sum_{i=0}^t \opnorm{L^i-\hat{L}^i} \le 3\gamma^{-2} \opnorm{L-\hat{L}} \le 3\gamma^{-2} \cdot 2\kappa^3 \epsilon_{A,B} = 6\kappa^3 \gamma^{-2} \epsilon_{A,B}.
\end{equation*}
For term~(b), by inserting an intermediate term, we have
\begin{align*}
  & \sum_{i=0}^{H+t}\sum_{j=0}^{t}\opnorm{\tilde{A}_K^j B  - \Ah_K^j \Bh} \ind{1 \leq i-j \leq H} \\ 
  \le {} & \sum_{i=0}^{H+t}\sum_{j=0}^{t}\opnorm{\tilde{A}_K^j B  - \tilde{A}_K^j \Bh} \ind{1 \leq i-j \leq H} + \sum_{i=0}^{H+t}\sum_{j=0}^{t}\opnorm{\tilde{A}_K^j \Bh  - \Ah_K^j \Bh} \ind{1 \leq i-j \leq H}\\
  \le {} & \epsilon_{A,B}\sum_{i=0}^{H+t}\sum_{j=0}^{t} \opnorm{\tilde{A}_K^j} \ind{1 \leq i-j \leq H} + \kappa_{\Bh} \sum_{i=0}^{H+t}\sum_{j=0}^{t}\opnorm{\tilde{A}_K^j - \Ah_K^j} \ind{1 \leq i-j \leq H}\\
  \le {} & \epsilon_{A,B} H \gamma^{-1} + 2\kappa_B \kappa^2 H \sum_{i=0}^t \opnorm{L^i-\hat{L}^i}\\
  \le {} & \epsilon_{A,B} H \gamma^{-1} + 2\kappa_B \kappa^2 H \cdot 6\kappa^3 \gamma^{-2} \epsilon_{A,B}.
\end{align*}
Plugging term~(a) and term~(b) into \eqref{eq:term-ii}, we have
\begin{align*}
  \term{ii} \le {} & 2W_0 \kappa^2 \cdot 6\kappa^3 \gamma^{-2} \epsilon_{A,B} + 2W_0\kappa_B\kappa^3 \cdot (\epsilon_{A,B} H \gamma^{-1} + 2\kappa_B \kappa^2 H \cdot 6\kappa^3 \gamma^{-2} \epsilon_{A,B})\\
  \le {} & 38W_0\kappa_B^2 \kappa^8 \gamma^{-2} H \epsilon_{A,B}.
\end{align*}
Plugging the bounds of \eqref{eq:term-i} and \eqref{eq:term-ii} into \eqref{eq:term-total}, we have
\begin{equation}
    \label{eq:x-gap-two-system}
    \norm{x_t^{\pi_t}(\Sh)-x_t^{\pi_t}(S)}_2 \le 2\epsilon_w \kappa_B^2\kappa^5 \gamma^{-1} H + 38W_0\kappa_B^2 \kappa^8 \gamma^{-2} H \epsilon_{A,B}.
\end{equation}
Furthermore, by Lemma~\ref{suplem:W-bound}, we have
\begin{equation*}
    W_0 \le 2 \sqrt{d_u} \kappa^3 \gamma^{-1} W, \quad \epsilon_w \le 42 \sqrt{d_u} \kappa^{12} \gamma^{-3} W \epsilon_{A,B}
\end{equation*}
Plugging $W_0$ and $\epsilon_w$ into \eqref{eq:x-gap-two-system}, it holds that
\begin{equation*}
    \norm{x_t^{\pi_t}(\Sh)-x_t^{\pi_t}(S)}_2 \le \O(\epsilon_{A,B} + \epsilon_{A,B}^2).
\end{equation*}
Plugging the above bound into \eqref{eq:cost-gap-2}, we have
\begin{equation*}
  \left|\sum_{t=T_0+1}^{T} c_t\sbr{x_t^{\pi_t}(\Sh),u_t^{\pi_t}(\Sh)}- \sum_{t=T_0+1}^{T} c_t\sbr{x_t^{\pi_t}(S),u_t^{\pi_t}(S)}\right| \le \O\sbr{\epsilon_{A,B} T+\epsilon_{A,B}^2 T},
\end{equation*}
which finishes the proof.
\end{proof}

\subsection{Proof of Corollary~\ref{corollary:static-regret-control}}
We now present the proof of Corollary~\ref{corollary:static-regret-control}, i.e., the static policy regret of the controller. Corollary~\ref{corollary:static-regret-control} states that when the system dynamics are known, \SC~enjoys the following static policy regret,
\begin{equation}
  \label{eq:cor-static-regret-control}
  \sum_{t=1}^{T} c_t(x_t,u_t) - \min_{\pi \in \Pi} \sum_{t=1}^{T} c_t(x_t^{\pi},u_t^{\pi}) \leq \Ot(\sqrt{T}),
\end{equation}
where the comparator set $\Pi$ can be chosen as either the set of DAC policies or the set of strongly linear controllers. Let us denote the two comparator sets as $\Pi_{\mathrm{DAC}}$ and $\Pi_{\mathrm{SLC}}$, respectively. Moreover, when the system dynamics are unknown, using the identification algorithm of~\citet{ALT'20:control-Hazan}, we can achieve an $\Ot(T^{2/3})$ static regret, which also holds for either the set of DAC policies or the set of strongly linear controllers. Therefore, in the following we will prove the statement for two comparator sets separately.~\\

\begin{proof}[{of Corollary~\ref{corollary:static-regret-control}}]
When the comparator set $\Pi$ is chosen as the set of DAC policies, i.e., $\pi \in \Pi_{\mathrm{DAC}} = \{\pi(K,M) \given M \in \M \}$, the result of~\eqref{eq:cor-static-regret-control} can be easily obtained from Theorem~\ref{thm:main-result-control} by setting $\pi_1 = \ldots = \pi_T = \pi_* \in \argmin_{\pi \in \Pi} \sum_{t=1}^{T} c_t(x_t^{\pi},u_t^{\pi})$. Under such a case, the path length $P_T = \sum_{t=2}^{T} \Fnorm{M_{t-1} - M_t} = 0$, and thus
\[
  \sum_{t=1}^{T} c_t(x_t,u_t) - \min_{\pi \in \Pi_{\mathrm{DAC}}} \sum_{t=1}^{T} c_t(x_t^{\pi},u_t^{\pi}) \leq \Ot(\sqrt{T}).
\]

On the other hand, when choosing the comparator set $\Pi$ as $\Pi_{\mathrm{SL}}$, i.e., $\pi = K \in \Pi_{\mathrm{SL}} = \{K \mid  K\mbox{ is } (\kappa,\gamma)\mbox{-strongly stable}\}$, we will need some efforts to prove the statement. 

We show that the statement can be obtained by further incorporating Lemma~\ref{lemma:sufficiency}, which demonstrates that minimizing static policy regret over the DAC class is sufficient to deliver a policy regret competing with the strongly linear controller class~\citep[Lemma 5.2]{ICML'19:online-control}. In fact, denote by $\pi^* = K^\star = \argmin_{K \in \Pi_{\mathrm{SL}}} \sum_{t=1}^{T} c_t(x_t^{K},u_t^{K})$ , and we have
\begin{align*}
  {} & \sum_{t=1}^{T} c_t(x_t,u_t) - \min_{\pi \in \Pi_{\mathrm{SLC}}} \sum_{t=1}^{T} c_t(x_t^{\pi},u_t^{\pi})\\
  = {} & \sum_{t=1}^{T} c_t(x_t,u_t) - \min_{\pi \in \Pi_{\mathrm{DAC}}} \sum_{t=1}^{T} c_t(x_t^{\pi},u_t^{\pi}) + \min_{\pi \in \Pi_{\mathrm{DAC}}} \sum_{t=1}^{T} c_t(x_t^{\pi},u_t^{\pi}) - \sum_{t=1}^{T} c_t(x_t^{K^*},u_t^{K^*})\\
  \leq {} & \Ot(\sqrt{T}) + \sum_{t=1}^{T} c_t(x_t^{\pi(M_{\Delta}, K)},u_t^{\pi(M_{\Delta}, K)}) - \sum_{t=1}^{T} c_t(x_t^{K^*},u_t^{K^*})\\
  \leq {} & \Ot(\sqrt{T}) + T \cdot 4G_c D W H \kappa_B^2 \kappa^6 (1-\gamma)^{H-1}\gamma^{-1} \leq \Ot(\sqrt{T}),
\end{align*}
where the first inequality uses the optimality of $\argmin_{\pi \in \Pi_{\mathrm{DAC}}} \sum_{t=1}^{T} c_t(x_t^{\pi},u_t^{\pi})$ and $\pi(M_{\Delta}, K)$ is a DAC policy with $M_{\Delta} = (M_{\Delta}^{[1]},\ldots,M_{\Delta}^{[H]})$ defined by $M_{\Delta}^{[i]} = (K-K^\star)(A - BK^\star)^i$. The second inequality holds by Lemma~\ref{lemma:sufficiency}, and the final inequality sets $H = \O(\log T)$.

The above arguments hold for the known system setting. On the other hand, when the system dynamics are unknown, using the system identification yields an additional  estimation overhead of order $\Ot(T^{2/3})$ no matter which comparator set is chosen. Therefore, the overall regret remains $\Ot(T^{2/3})$ for unknown systems. Hence, we complete the proof.
\end{proof} 
\subsection{Supporting Lemmas}
\label{appendix-sec:support-lemmas}
In this part, we provide several supporting lemmas used frequently in the analysis of online non-stochastic control. Most of them are due to the pioneering works~\citep{ICML'19:online-control,ALT'20:control-Hazan}, and we adapt them to our notations and provide the proofs to achieve self-containedness. Specifically,
\begin{itemize}
  \item Lemma~\ref{lemma:norm-relation} establishes the norm relations between the $\ell_1, \mathrm{op}$ norm and Frobenius norm used in the $\M$-space. 
  \item Lemma~\ref{lemma:support-1-boundedness} checks the boundedness of several variables of interest. 
  \item Lemma~\ref{lemma:support-2-f-property} shows several properties of the truncated functions $\{f_t\}_{t=1}^T$ and the feasible set $\M$. 
\item Lemma~\ref{lemma:sufficiency} connects the DAC class and the strongly linear controller class.
  \item Lemma~\ref{lem:moment-recovery} -- Lemma~\ref{suplem:perturbation-analysis} are useful for analysis in unknown systems.
\end{itemize}

\begin{myLemma}[Norm Relations]
\label{lemma:norm-relation}
For any $M = (M^{[1]},\ldots,M^{[H]}) \in \M \subseteq (\R^{d_u \times d_x})^H$, its $\ell_1, \mathrm{op}$ norm and Frobenius norm are defined by
\[
  \Loneopnorm{M} \define  \sum_{i=1}^{H} \opnorm{M^{[i]}}, \mbox{ and } \Fnorm{M} \define  \sqrt{\sum_{i=1}^{H} \Fnorm{M^{[i]}}^2}. 
\]
Denoting by $d = \min\{d_u, d_x\}$, we then have the following inequalities on their relations:
\begin{equation*}
  \Loneopnorm{M} \leq \sqrt{H} \Fnorm{M}, \mbox{ and }  \Fnorm{M} \leq \sqrt{d} \Loneopnorm{M}.
\end{equation*}
\end{myLemma}

\begin{proof}[{of Lemma~\ref{lemma:norm-relation}}]
We know that for any matrix $X \in \R^{m\times n}$, $\opnorm{X} \leq \Fnorm{X} \leq \sqrt{d} \opnorm{X}$. Therefore, by definition and Cauchy-Schwarz inequality, we obtain
\[
  \Loneopnorm{M} = \sum_{i=1}^{H} \opnorm{M^{[i]}} \leq \sum_{i=1}^{H} \Fnorm{M^{[i]}} \leq \sqrt{H} \Fnorm{M}.
\]
On the other hand, we have
\[
  \Fnorm{M} = \sqrt{\sum_{i=1}^{H} \Fnorm{M^{[i]}}^2} \leq \sum_{i=1}^{H} \Fnorm{M^{[i]}} \leq \sum_{i=1}^{H} \sqrt{d} \opnorm{M^{[i]}} = \sqrt{d} \Loneopnorm{M},
\]
which completes the proof.
\end{proof}

\begin{myLemma}[Lemma 5.5 of \citet{ICML'19:online-control}]
\label{lemma:support-1-boundedness}
Suppose $K$ and $K^\star$ are two $(\kappa, \gamma)$-strongly stable linear controllers (cf. Definition~\ref{def:controller-set}). Define
\begin{equation}
  \label{eq:diameter-D}
  D \define  \frac{W(\kappa^3 + H \kappa_B \kappa^3\tau)}{\gamma (1 - \kappa^2(1-\gamma)^{H+1})} + \frac{W\tau}{\gamma}.
\end{equation}
Suppose there exists a $\tau > 0$ such that for all $i \in [H]$ and $t \in [T]$, $\Fnorm{M_t^{[i]}} \leq \tau(1-\gamma)^i$. Then, we have
\begin{itemize}
  \item $\norm{x_t^K(M_{0:t-1})} \leq D$, $\norm{y_t^K(M_{t-H-1:t-1})} \leq D$, and $\norm{x_t^{K^\star}} \leq D$.
  \item $\norm{u_t^K(M_{0:t})} \leq D$, and $\norm{v_t^K(M_{t-H-1:t})} \leq D$.
  \item $\norm{x_t^K(M_{0:t-1}) - y_t^K(M_{t-1-H:t-1})} \leq \kappa^2 (1-\gamma)^{H+1}D$.
  \item $\norm{u_t^K(M_{0:t}) - v_t^K(M_{t-1-H:t})} \leq \kappa^3 (1-\gamma)^{H+1}D$.
\end{itemize}
In above, the definitions of state $x_t^K(M_{0:t-1})$ and corresponding DAC control $u_t^K(M_{0:t})$ can be found in Proposition~\ref{proposition:DAC-state}, and the definitions of truncated state $x_t^K(M_{0:t-1})$ and corresponding DAC control $v_t^K(M_{0:t})$ can be found in Definition~\ref{def:truncated-loss}. The definitions of state $x_t^{K^\star}$ can be found (and will be used) in Lemma~\ref{lemma:sufficiency}.
\end{myLemma}
\begin{proof}[{of Lemma~\ref{lemma:support-1-boundedness}}]
We first study the state.
\begin{align}
  \norm{x_t^K(M_{0:t-1})} =    {} & \left\|\tilde{A}_K^{H+1} x_{t-H-1}^K(M_{0:t-H-2}) + \sum_{i=0}^{2H} \Psi_{t-1,i}^{K,H} (M_{t-H-1:t-1}) w_{t-1-i}\right\|\nonumber\\
  \leq {} & \kappa^2 (1-\gamma)^{H+1} \norm{x_{t-H-1}^K(M_{0:t-H-2})} + W \sum_{i=0}^{2H} \norm{\Psi_{t-1,i}^{K,H} (M_{t-H-1:t-1})}\nonumber\\
  \leq {} & \kappa^2 (1-\gamma)^{H+1} \norm{x_{t-H-1}^K(M_{0:t-H-2})} + W \sum_{i=0}^{2H} \left( \kappa^2 (1-\gamma)^i + H \kappa_B\kappa^2 \tau (1-\gamma)^{i-1}\right)\nonumber\\
  \leq {} & \kappa^2 (1-\gamma)^{H+1} \norm{x_{t-H}^K(M_{0:t-H-1})} + W(\kappa^2 + H \kappa_B \kappa^2 \tau)/\gamma \nonumber\\
  \le {} & \frac{W (\kappa^2 + H \kappa_B \kappa^2 \tau)}{\gamma (1 - \kappa^2 (1-\gamma)^{H+1})} \leq D, \label{eq:support-range-1}
\end{align}
where inequality~\eqref{eq:support-range-1} is a summation of geometric series and the ratio of this series is $\kappa^2 (1-\gamma)^{H+1}$. Similarly,
\begin{align*}
  \norm{y_t^K(M_{t-1-H:t-1})} = {} & \left\|\sum_{i=0}^{2H} \Psi_{t-1,i}^{K,H} (M_{t-1-H : t-1}) w_{t-1-i}\right\| \\
  \leq {} & W \sum_{i=0}^{2H} \norm{\Psi_{t-1,i}^{K,H} (M_{t-1-H : t-1})} \\
  \leq {} & W \sum_{i=0}^{2H} \left( \kappa^2 (1-\gamma)^i + H \kappa_B\kappa^2 \tau (1-\gamma)^{i-1}\right)\\
  \leq {} & W \left(\frac{\kappa^2 + H \kappa_B \kappa^2 \tau}{\gamma}\right) \leq D.
\end{align*}
Besides,
\[
  \norm{x_t^{K^\star}} = \left\| \sum_{i=0}^{t-1} \tilde{A}_{K^\star}^i w_{t-1-i}\right\| \leq W \sum_{i=0}^{t-1} \kappa^2 (1-\gamma)^i \leq \frac{W \kappa^2}{\gamma} \leq D.
\]
So the difference can be evaluated as follows:
\[
  \norm{x_t^K(M_{0:t-1}) - y_t^K(M_{t-H-1:t-1})} = \norm{\tilde{A}_K^{H+1} x_{t-H-1}^K(M_{0:t-H-1})} \leq \kappa^2 (1-\gamma)^{H+1} D.
\]
We now consider the action (or control signal). 
\begin{align*}
  \norm{u_t^K(M_{0:t})} = {} & \left\| - K x_t^K(M_{0:t-1}) + \sum_{i=1}^{H} M_t^{[i]} w_{t-i} \right\|\\
  \leq {} &  \kappa \norm{x_t^K(M_{0:t-1})} + \sum_{i=1}^{H} W \tau (1-\gamma)^{i-1}\\
  \leq {} & \frac{W(\kappa^3 + H \kappa_B \kappa^3 \tau)}{\gamma(1 - \kappa^2(1-\gamma)^{H+1})}+\frac{W\tau}{\gamma} \leq D.
\end{align*}
Similarly,
\begin{align*}
  \norm{v_t^K(M_{t-H-1:t})} \leq \kappa \norm{y_t^K(M_{t-H-1:t-1})} + \sum_{i=1}^{H} W \tau (1-\gamma)^{i-1} \leq D.
\end{align*}
The difference of the actions is
\[
  \norm{u_t^K(M_{0:t-1}) - v_t^K(M_{t-H-1:t-1})} = \norm{- K(x_t^K(M_{0:t-1}) - y_t^K(M_{t-H-1:t-1}))} \leq \kappa^3 (1-\gamma)^{H+1} D,
\]
which finishes the proof.
\end{proof}

To reduce the online non-stochastic control to OCO with memory, in Definition~\ref{def:truncated-loss} we define the truncated loss $f_t: \M^{H+2} \mapsto \R$ as 
\begin{equation*}
  f_t(M_{t-1-H:t}) = c_t(y_t^K(M_{t-1-H:t-1}), v_t^K(M_{t-1-H:t})),
\end{equation*}
where $y_{t+1}^K(M_{t-H:t}) = \sum_{i=0}^{2H} \Psi_{t,i}^{K,H}(M_{t-H:t}) w_{t-i}$ and $v_{t+1}^K(M_{t-H:t+1}) = - K y_{t+1}(M_{t-H:t}) + \sum_{i=1}^{H} M_{t+1}^{[i]} w_{t+1-i}$. In the following lemma, we show several properties of the truncated functions $\{f_t\}_{t=1}^T$ and the feasible set $\M$ such that we can further apply the results of OCO with memory.

\begin{myLemma}
\label{lemma:support-2-f-property}
The truncated loss $f_t: \M^{H+2} \mapsto \R$ and the feasible set $\M$ satisfy the following properties. For notational convenience, we first let $D$ be defined the same as~\eqref{eq:def-D}, and we restate it below 
\[
  D \define  \frac{W\kappa^3(1+ H \kappa_B \tau)}{\gamma (1-\kappa^2(1-\gamma)^{H+1})} + \frac{W\tau}{\gamma}.
\]
\begin{enumerate}
  \item[(i)] The function is $L_f$-coordinate-wise Lipschitz with respect to the Euclidean (i.e., Frobenius) norm, namely, 
  \[
    \abs{f_t(M_{t-H-1},\ldots,M_{t-k},\ldots,M_t)} - \abs{f_t(M_{t-H-1},\ldots,\tilde{M}_{t-k},\ldots,M_t)} \leq L_f \Fnorm{M_{t-k} - \tilde{M}_{t-k}},
  \]
  where $L_f \leq 3\sqrt{H} G_c D W \kappa_B \kappa^3$.
  \item[(ii)] The gradient norm of surrogate loss $\f_t: \M \mapsto \R$ is bounded by $G_f$, i.e., $\Fnorm{\nabla_M \f_t(M)} \leq G_f$ holds for any $M \in \M$ and any $t \in [T]$, where $G_f \leq 3Hd^2G_cW\kappa_B \kappa^3 \gamma^{-1}$.
  \item[(iii)] The diameter of the feasible set is at most $D_f$, namely, $\Fnorm{M - M'} \leq D_f$ holds for any $M, M' \in \M$, where $D_f \leq 2\sqrt{d} \kappa_B \kappa^3 \gamma^{-1}$.
\end{enumerate}
\end{myLemma}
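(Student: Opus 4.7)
The plan is to prove parts (iii), (i), (ii) in order of increasing technicality. All three arguments rely on the same three ingredients: strong stability of $K$ (so that $\opnorm{\tilde{A}_K^j} \leq \kappa^2 (1-\gamma)^j$), the bounded disturbance $\norm{w_t}_2 \leq W$, and the norm relations of Lemma~\ref{lemma:norm-relation} between Frobenius and $\ell_1,\mathrm{op}$ norms over $\M$. These will combine with the state/action bounds from Lemma~\ref{lemma:support-1-boundedness} and the cost Lipschitzness from Assumption~\ref{assume:cost-bound} to yield the stated constants.

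For (iii) the argument is essentially bookkeeping. I would write $\Fnorm{M-M'} \leq \Fnorm{M} + \Fnorm{M'} \leq \sqrt{d}\bigl(\Loneopnorm{M} + \Loneopnorm{M'}\bigr)$ via the second inequality in Lemma~\ref{lemma:norm-relation}, and then bound each $\Loneopnorm{M}$ by the geometric sum $\sum_{i=1}^{H} \kappa_B \kappa^3(1-\gamma)^i \leq \kappa_B\kappa^3/\gamma$ coming straight from the definition of $\M$ in Assumption~\ref{assume:strongly-stable}. This produces $D_f \leq 2\sqrt{d}\kappa_B\kappa^3\gamma^{-1}$.

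For (i) I would perturb a single coordinate $M_{t-k}$ (with $k \in \{0,1,\ldots,H+1\}$) while freezing the others, and use Lipschitzness of $c_t$ together with the state/action bounds $\norm{y_t^K}, \norm{v_t^K} \leq D$ to reduce the claim to bounding $\norm{y_t^K - \tilde{y}_t^K}_2$ and $\norm{v_t^K - \tilde{v}_t^K}_2$ by a constant times $\Fnorm{M_{t-k} - \tilde{M}_{t-k}}$. Inspecting the transfer matrix $\Psi_{t-1,i}^{K,H}$ from Definition~\ref{def:truncated-loss}, only the summand corresponding to time index $t-k$ is affected, giving a perturbation proportional to $\tilde{A}_K^{k-1} B \cdot \bigl(M_{t-k}^{[i-k]} - \tilde{M}_{t-k}^{[i-k]}\bigr)$ over the $H$ active indices $k \leq i \leq k+H-1$. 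Strong stability controls $\opnorm{\tilde{A}_K^{k-1}} \leq \kappa^2(1-\gamma)^{k-1}$, and a Cauchy--Schwarz step over the $H$ active indices delivers the $\sqrt{H}$ factor and converts $\sum_j \Fnorm{\Delta M^{[j]}}$ into $\Fnorm{\Delta M_{t-k} - \tilde{M}_{t-k}}$. Taking the worst case over $k$ (which is $k=1$), and combining the contributions from $y_t^K$ and $v_t^K$ (with the extra $\kappa$ from the $-K$ term and the direct contribution when $k=0$), yields the claimed $L_f \leq 3\sqrt{H} G_c D W \kappa_B\kappa^3$.

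For (ii), the unary loss $\tilde{f}_t(M) = f_t(M,\ldots,M)$ is convex and linear in each block $M^{[l]}$, so its gradient can be computed by the chain rule through $y_t^K(M,\ldots,M)$ and $v_t^K(M,\ldots,M)$. The Frobenius norm of $\nabla_M \tilde{f}_t(M)$ decomposes over the $H$ blocks $M^{[l]}$; on each block, re-running the transfer-matrix computation from (i), but now summing the contributions from all $H+2$ memory slots at once, bounds $\Fnorm{\nabla_{M^{[l]}} \tilde{f}_t(M)}$ via the geometric series $\sum_j (1-\gamma)^j \leq 1/\gamma$. Passing between operator norms (natural for $\tilde{A}_K^j B$) and the Frobenius norm required by the statement costs a $\sqrt{d}$ factor per conversion, via Lemma~\ref{lemma:norm-relation}; after collecting the $H$ block contributions and these conversions one arrives at the stated $G_f \leq 3Hd^2 G_c W \kappa_B\kappa^3\gamma^{-1}$. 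The main obstacle throughout is not any conceptual difficulty but the combinatorial bookkeeping of indices in the transfer matrix and the careful tracking of $H$, $d$, and $\gamma^{-1}$ factors so that the final constants match the form claimed; a quick sanity check via the cruder bound $G_f \leq (H+2)L_f$ gives a result of the same order, which reassures that the plan is sound.
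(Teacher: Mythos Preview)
Your proposal is correct and follows essentially the same route as the paper: part (iii) via the norm relation $\Fnorm{M}\le\sqrt{d}\,\Loneopnorm{M}$ and a geometric sum, part (i) by freezing all but one coordinate, using the cost Lipschitzness together with the $D$-bounds on $y_t^K,v_t^K$, and reading off from $\Psi_{t-1,i}^{K,H}$ that only the $j=k-1$ summand changes (the $\sqrt{H}$ arises exactly from converting $\sum_i\opnorm{\Delta M^{[i]}}$ to $\Fnorm{\Delta M}$). The only stylistic difference is in (ii): the paper bounds each scalar entry $\lvert\partial \tilde f_t/\partial M^{[r]}_{p,q}\rvert$ directly (chain rule through $y_t^K,v_t^K$, then a geometric sum giving the $\gamma^{-1}$) and then aggregates over the $Hd_ud_x$ entries, whereas you outline a block-wise argument with explicit $\sqrt{d}$ norm conversions; both reach the same order and the entry-wise version is slightly cleaner bookkeeping.
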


\begin{proof}[{of Lemma~\ref{lemma:support-2-f-property}}]
We first prove the claim (i), i.e., the $L_f$-coordinate-wise Lipschitz continuity.
For simplicity, we use the following definitions in the following arguments.
\begin{gather*}
  M_{t-H-1:t} \define  \{M_{t-H-1} \dots M_{t-k} \dots M_t\}, \quad M_{t-H-1:t-1} \define  \{M_{t-H-1} \dots M_{t-k} \dots M_{t-1}\}, \\
  \tilde{M}_{t-H-1:t} \define  \{M_{t-H-1} \dots \tilde{M}_{t-k} \dots M_t\}, \quad
  \tilde{M}_{t-H-1:t-1} \define  \{M_{t-H-1} \dots \tilde{M}_{t-k} \dots M_{t-1}\}.
\end{gather*}
By representing $f_t$ using $c_t$, we have
\begin{align}
  {} & f_t(M_{t-H-1:t}) - f_t(\tilde{M}_{t-H-1:t}) \notag\\
  = {} & c_t\left(y_t^K(M_{t-H-1:t-1}),v_t^K(M_{t-H-1:t})\right) - c_t\left(y_t^K(\tilde{M}_{t-H-1:t-1}),v_t^K(\tilde{M}_{t-H-1:t})\right) \notag\\
  \leq {} & G_cD\norm{y_t^K-\tilde{y}_t^K} + G_cD\norm{v_t^K-\tilde{v}_t^K}, \label{eq:oco-lip-1}
\end{align}
where for convenience we use the notations $y_t^K \define  y_t^K(\tilde{M}_{t-H-1:t-1}), \tilde{y}_t^K \define  y_t^K(\tilde{M}_{t-H-1:t-1})$ and $v_t^K \define v_t^K(M_{t-H-1:t}), \tilde{v}_t^K \define \tilde{v}_t^K(M_{t-H-1:t})$. Besides, the last inequality holds because the norm of $\norm{y_t^K}$, $\norm{\tilde{y}_t^K}$, $\norm{v_t^K}$, $\norm{\tilde{v}_t^K}$ are all bounded by $D$, as shown in Lemma~\ref{lemma:support-1-boundedness}.

Then we try to bound $\norm{y_t^K-\tilde{y}_t^K}$ and $\norm{v_t^K-\tilde{v}_t^K}$.
\begin{align}
  \norm{y_t^K-\tilde{y}_t^K} = {} & \left\| \sum_{i=0}^{2H} \left( \Psi_{t-1,i}^{K,H}(M_{t-H-1:t-1})-\Psi_{t-1,i}^{K,H}(\tilde{M}_{t-H-1:t-1}) \right) w_{t-1-i} \right\| \notag\\
  = {} & \left\| \tilde{A}_K^kB \sum_{i=0}^{2H} \left( M_{t-k}^{[i-k]}-\tilde{M}_{t-k}^{[i-k]} \right) \ind{i-k \in [H]} w_{t-1-i} \right\| \notag\\
  \le {} & \kappa_B \kappa^2 (1-\gamma)^k W\sum_{i=1}^{H}\norm{M_{t-k}^{[i]}-\tilde{M}_{t-k}^{[i]}} \notag\\
  \le {} & \kappa_B \kappa^2 W \norm{M_{t-k}-\tilde{M}_{t-k}}, \label{eq:oco-lip-2}
\end{align}
and we have
\begin{align}
  \norm{v_t^K-\tilde{v}_t^K}  = {} & \left\|-K(y_t^K-\tilde{y}_t^K) +\ind{k=0} \sum_{i=1}^{H}\left(M_{t-k}^{[i]}-\tilde{M}_{t-k}^{[i]}\right) \right\| \notag\\
  \le {} & (\kappa_B \kappa^3 W+1) \norm{M_{t-k}-\tilde{M}_{t-k}} \notag\\
  \le {} & 2\kappa_B \kappa^3 W \norm{M_{t-k}-\tilde{M}_{t-k}}. \label{eq:oco-lip-3}
\end{align}
Combining~\eqref{eq:oco-lip-1},~\eqref{eq:oco-lip-2}, and~\eqref{eq:oco-lip-3}, we obtain  
\begin{align*}
  f_t(M_{t-H-1:t}) - f_t(\tilde{M}_{t-H-1:t}) \le {} & G_cD\norm{y_t^K-\tilde{y}_t^K} + G_cD\norm{v_t^K-\tilde{v}_t^K}\\
  \le {} & G_cD\kappa_B \kappa^2 W \norm{M_{t-k}-\tilde{M}_{t-k}}+G_cD 2\kappa_B \kappa^3 W \norm{M_{t-k}-\tilde{M}_{t-k}}\\
  \le {} & 3G_cD\kappa_B \kappa^3 W \norm{M_{t-k}-\tilde{M}_{t-k}}.
\end{align*}
So we have $L_f \leq 3G_cDW\kappa_B\kappa^3$.

Next, we prove the claim (ii), i.e., the boundedness of the gradient norm. Indeed, we will try to bound $\nabla_{M^{[r]}_{p,q}} \tilde{f}_t(M)$ for every $p \in [d_u], q\in [d_x]$ and $r\in \{0,\dots,H-1\}$,
\begin{equation}
  \label{eq:oco-grad-1}
  \left| \nabla_{M^{[r]}_{p,q}} \tilde{f}_t(M)\right| \le G_c \left\| \frac{\partial y_t^K(M)}{\partial M^{[r]}_{p,q}} \right\|_{\mathrm{F}} + G_c \left\| \frac{\partial v_t^K(M)}{\partial M^{[r]}_{p,q}}\right\|_{\mathrm{F}}.
\end{equation}
So we will bound the two terms of the right-hand side respectively.
\begin{align}
  \left\|\frac{\partial y_t^K(M)}{\partial M^{[r]}_{p,q}}\right\|_{\mathrm{F}} \leq {} & \left\|\sum_{i=0}^{2H} \sum_{j=0}^H \left[ \frac{\partial \tilde{A}_K^j B M^{[i-j]}}{\partial M^{[r]}_{p,q}} \right] w_{t-1-i}\ind{i-j \in [H]}\right\|_{\mathrm{F}} \notag\\
  \leq {} & \sum_{i=r+1}^{r+H+1} \left\|\frac{\partial \tilde{A}_K^{i-r-1} B M^{[r]}}{\partial M^{[r]}_{p,q}} w_{t-1-i}\right\|_{\mathrm{F}} \notag\\
  \leq {} &  W \kappa_B \kappa^2 \left\|\frac{\partial M^{[r]}}{\partial M^{[r]}_{p,q}}\right\|_{\mathrm{F}} \sum_{i=r+1}^{r+H+1} (1-\gamma)^{i-r-1} \notag\\
  \leq {} & \frac{W\kappa_B \kappa^2}{\gamma} \left\|\frac{\partial M^{[r]}}{\partial M^{[r]}_{p,q}}\right\|_{\mathrm{F}} \leq \frac{W\kappa_B \kappa^2}{\gamma} \label{eq:oco-grad-2}
\end{align}

\begin{align}
  \left\|\frac{\partial v_t^K(M)}{\partial M^{[r]}_{p,q}}\right\|_{\mathrm{F}} \leq {} & \kappa \left\|\frac{\partial y_t^K(M)}{\partial M^{[r]}_{p,q}}\right\|_{\mathrm{F}} + \sum_{i=1}^H\left\|\frac{\partial M^{[i]}}{\partial M^{[r]}_{p,q}} w_{t-i}\right\|_{\mathrm{F}} \notag\\
  \leq {} & \frac{W\kappa_B \kappa^3}{\gamma} + W \left\|\frac{\partial M^{[r]}}{\partial M^{[r]}_{p,q}}\right\|_{\mathrm{F}} \leq W\left(\frac{\kappa_B \kappa^3}{\gamma}+1\right) \label{eq:oco-grad-3}
\end{align}
Combining~\eqref{eq:oco-grad-1},~\eqref{eq:oco-grad-2}, and~\eqref{eq:oco-grad-3}, we obtain  
\[
  \left| \nabla_{M^{[r]}_{p,q}} \tilde{f}_t(M)\right| \le G_c \frac{W\kappa_B \kappa^2}{\gamma} + G_c W\left(\frac{\kappa_B \kappa^3}{\gamma}+1\right) \leq 3G_cW\kappa_B \kappa^3\gamma^{-1}.
\]
Thus, $\Fnorm{\nabla_M \tilde{f}_t(M)}$ is at most $3Hd^2G_cW\kappa_B \kappa^3\gamma^{-1}$.

Finally, we prove the claim (iii), i.e., the upper bound of diameter of the feasible set. Actually, the construction of feasible set $\M$ ensures that $\forall i \in [H]$, $\opnorm{M}^{[i]}\le \kappa_B \kappa^3 (1-\gamma)^i$. Therefore, we have
\begin{align*}
  & \max_{M_1,M_2 \in \M} \Fnorm{M_1-M_2} \overset{(\textnormal{Lemma}~\ref{lemma:norm-relation})}{\leq} \sqrt{d} \max_{M_1,M_2 \in \M} \Loneopnorm{M_1-M_2}\\
  \leq {} & \sqrt{d} \max_{M_1,M_2 \in \M}(\Loneopnorm{M_1}+\Loneopnorm{M_2}) = \sqrt{d} \max_{M_1,M_2 \in \M}\left(\sum_{i=1}^{H} \opnorm{M_1^{[i]}} + \opnorm{M_2^{[i]}}\right)\\
  \leq {} & \sqrt{d} \max_{M_1,M_2 \in \M} \left(2\sum_{i=1}^{H}\kappa_B \kappa^3 (1-\gamma)^i\right) = 2\sqrt{d}\kappa_B \kappa^3 \sum_{i=1}^{H}(1-\gamma)^i \le 2\sqrt{d}\kappa_B \kappa^3 \gamma^{-1}.
\end{align*}
Hence, we finish the proof of all three claims in the statement.
\end{proof}

In the following, we show that minimizing the static policy regret over the DAC class is sufficient to deliver a policy regret competing with the strongly linear controller class.
\begin{myLemma}[Lemma 5.2 of \citet{ICML'19:online-control}]
\label{lemma:sufficiency}
With $K,K^\star$ chosen as the $(\kappa,\gamma)$-strongly stable linear controllers as defined in Definition~\ref{def:controller-set} and under Assumption~\ref{assume:cost-bound}, there exists a DAC policy $\pi(M_{\Delta}, K)$ with $M_{\Delta} = (M_{\Delta}^{[0]},\ldots,M_{\Delta}^{[H-1]})$ defined by
\begin{equation*}
  \label{eq:M-star-def}
  M_{\Delta}^{[i]} = (K-K^\star)(A - BK^\star)^i
\end{equation*}
such that
\begin{equation*}
  \label{eq:sufficiency}
  \sum_{t=1}^{T} c_t(x_t^K(M_{\Delta}), u_t^K(M_{\Delta})) - \sum_{t=1}^{T} c_t(x_t^{K^\star}, u_t^{K^\star}) \leq T \cdot 4G_c D W H \kappa_B^2 \kappa^6 (1-\gamma)^{H-1}\gamma^{-1},
\end{equation*}
where $x_t^{K^\star}$ is the state attained by executing a linear controller $K^\star$ which chooses the action $u_t^{K^\star} = - K^\star x_t^{K^\star}$.
\end{myLemma}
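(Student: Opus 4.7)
The plan is to show that the DAC controller $\pi(M_\Delta,K)$ is essentially a finite-memory approximation of the linear controller $K^\star$, and that the approximation error, once propagated through the closed-loop dynamics under $K$, is controlled by the strong stability of $K$.

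First, I would unroll $x_t^{K^\star}$ explicitly as a function of the disturbances. Since $x_{t+1}^{K^\star} = \tilde{A}_{K^\star} x_t^{K^\star} + w_t$ with $\tilde{A}_{K^\star} = A - BK^\star$ and $x_0 = 0$, one gets $x_t^{K^\star} = \sum_{i=1}^{t} \tilde{A}_{K^\star}^{i-1} w_{t-i}$. The idea is then to rewrite $u_t^{K^\star} = -K^\star x_t^{K^\star}$ as
\[
u_t^{K^\star} = -K x_t^{K^\star} + (K - K^\star) x_t^{K^\star} = -K x_t^{K^\star} + \sum_{i=1}^{t} M_\Delta^{[i-1]} w_{t-i},
\]
which reveals that $K^\star$ is exactly the ``infinite-memory'' version of $\pi(M_\Delta,K)$. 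If one ran DAC with parameters $M_\Delta^{[i]}$ for all $i \ge 0$ (no truncation), a short induction would verify that the produced state and action coincide identically with $(x_t^{K^\star}, u_t^{K^\star})$. Consequently, the only gap between $\pi(M_\Delta,K)$ and $K^\star$ stems from truncating the sum at $i=H$.

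The second step is to quantify this truncation effect. Let $\Delta_t \define x_t^K(M_\Delta) - x_t^{K^\star}$. Subtracting the two state recursions and using the identity above, one obtains a perturbed recursion
\[
\Delta_{t+1} = \tilde{A}_K \Delta_t - B(K-K^\star)\!\!\sum_{i=H+1}^{t} \tilde{A}_{K^\star}^{i-1} w_{t-i},
\]
with $\Delta_0 = 0$. Since $K$ is $(\kappa,\gamma)$-strongly stable, $\|\tilde{A}_K^j\| \le \kappa^2(1-\gamma)^j$; similarly for $\tilde{A}_{K^\star}^j$. Unrolling $\Delta_t$ as a convolution and applying the two geometric bounds, together with $\|K-K^\star\| \le 2\kappa$ and $\|w_t\|\le W$, gives $\|\Delta_t\| = O\!\left(\kappa_B \kappa^5 W (1-\gamma)^H / \gamma^2\right)$. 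The corresponding action gap satisfies
\[
u_t^K(M_\Delta) - u_t^{K^\star} = -K\Delta_t - (K-K^\star)\!\!\sum_{i=H+1}^{t}\tilde{A}_{K^\star}^{i-1} w_{t-i},
\]
which inherits the same $(1-\gamma)^H$ decay. Note both $(x_t^K(M_\Delta), u_t^K(M_\Delta))$ and $(x_t^{K^\star}, u_t^{K^\star})$ are bounded by $D$ (Lemma~\ref{lemma:support-1-boundedness}), so Assumption~\ref{assume:cost-bound} applies.

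Finally, by the $G_cD$-Lipschitzness of $c_t$ over the bounded domain, the per-round cost gap satisfies
\[
|c_t(x_t^K(M_\Delta), u_t^K(M_\Delta)) - c_t(x_t^{K^\star}, u_t^{K^\star})| \le G_c D\left(\|\Delta_t\| + \|u_t^K(M_\Delta) - u_t^{K^\star}\|\right),
\]
which, after substituting the $(1-\gamma)^H$-type bounds and summing over $t \in [T]$, yields the claimed bound $T\cdot 4G_c D W H \kappa_B^2 \kappa^6 (1-\gamma)^{H-1}\gamma^{-1}$ (the extra factor $H\gamma^{-1}$ comes from the geometric sums and from carefully combining $\|\tilde{A}_K^j B\|$ and $\|\tilde{A}_{K^\star}^{i-1}\|$). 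The main obstacle I anticipate is bookkeeping the constants so that the prefactors line up exactly with the stated bound; the conceptual content (infinite-memory equivalence plus truncation error propagated through a strongly stable closed loop) is straightforward, but getting the right polynomial dependence on $\kappa, \kappa_B, \gamma, H$ requires tracking several nested geometric series carefully.
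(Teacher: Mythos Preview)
Your approach is correct and captures the same core idea as the paper---that $\pi(M_\Delta,K)$ is the $H$-step truncation of the linear policy $K^\star$ written in DAC form---but the mechanics differ. The paper works through the transfer-matrix representation of Proposition~\ref{proposition:DAC-state}: it writes $x_{t+1}^K(M_\Delta) = \sum_{i=0}^{H+t} \Psi_{t,i}^{K,t}(M_\Delta) w_{t-i}$ and then proves, via a telescoping sum, the exact identity $\Psi_{t,i}^{K,t}(M_\Delta) = \tilde{A}_{K^\star}^{i}$ for $i \le H$, so that the state gap is purely the tail $\sum_{i>H}(\Psi_{t,i}^{K,t}(M_\Delta) - \tilde{A}_{K^\star}^i)w_{t-i}$, which is then bounded using Lemma~\ref{lemma:support-3-psi}. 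You instead derive the error recursion $\Delta_{t+1} = \tilde{A}_K \Delta_t - B(K-K^\star)\sum_{i>H}\tilde{A}_{K^\star}^{i-1}w_{t-i}$ and unroll it directly. Both are valid; your route is more elementary in that it avoids the $\Psi$-machinery, while the paper's telescoping identity makes the ``exact match for $i\le H$'' completely transparent. One consequence of the different bookkeeping is that your bound on $\|\Delta_t\|$ naturally comes out as $O(\kappa_B\kappa^5 W(1-\gamma)^H\gamma^{-2})$ rather than the paper's $2WH\kappa_B^2\kappa^5(1-\gamma)^H\gamma^{-1}$; the exponents on $\kappa_B$, $H$, and $\gamma$ differ, so as you anticipated you will not reproduce the stated constant exactly, though the qualitative $(1-\gamma)^H$ decay---which is all that matters for the downstream application---is identical.
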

\begin{proof}[{of Lemma~\ref{lemma:sufficiency}}]
The coordinate-wise Lipschitzness of the cost functions implies that
\begin{align*}
  c_t\left(x_t^K(M_{\Delta}), u_t^K(M_{\Delta})\right) - c_t\left(x_t^{K^\star}, u_t^{K^\star}\right) \leq G_cD\left\| x_t^K(M_{\Delta})-x_t^{K^\star} \right\| + G_c D\left\| u_t^K(M_{\Delta})- u_t^{K^\star} \right\|.
\end{align*}
By the linear dynamical equation~\eqref{eq:dynamics}, we have 
\begin{equation}
  \label{eq:proof-K*-state}
  x_{t+1}^{K^\star} = \sum_{i=0}^{t} (A - BK^\star)^i w_{t-i}= \sum_{i=0}^{t} {\tilde{A}_{K^\star}}^i w_{t-i}
\end{equation}

By the property of the DAC policy (Proposition~\ref{proposition:DAC-state}), we have
\[
  x_{t+1}^K(M_{\Delta}) = \tilde{A}_K^{h+1} x_{t-h}^K(M_{\Delta}) + \sum_{i=0}^{H + h} \Psi_{t,i}^{K,h}(M_{\Delta})w_{t-i}.
\]
Setting $h = t$ and combining the assumption that the starting state $x_0 = \mathbf{0}$, we achieve the following equation,
\[
  x_{t+1}^K(M_{\Delta}) = \sum_{i=0}^{H} \Psi_{t,i}^{K,t}(M_{\Delta}) w_{t-i} + \sum_{i=H+1}^{t} \Psi_{t,i}^{K,t}(M_{\Delta}) w_{t-i}.
\]
Now we turn to calculate the transfer matrix $\Psi_{t,i}^{K,h}(M_{\Delta})$ explicitly. Actually, for any $i \in \{0,\ldots,H\}$, $h \geq H$, i.e., $0\le i \le H \le h$, by definition we have
\begin{align}
  \Psi_{t,i}^{K,h}(M_{\Delta}) = {} & \tilde{A}_K^i \ind{i \leq h} + \sum_{j=0}^{h} \tilde{A}_K^j B M_{\Delta}^{[i-j]} \ind{i - j \in [H]}\nonumber\\
  = {} & \tilde{A}_K^i + \sum_{k=1}^{i} \tilde{A}_K^{i-k} B M_{\Delta}^{[k]} \label{eq:sufficiency-support-1}\\
  = {} & \tilde{A}_K^i + \sum_{k=1}^{i} \tilde{A}_K^{i-k} B (K - K^\star) \tilde{A}_{K^\star}^{k-1}\label{eq:sufficiency-support-2}\\
  = {} & \tilde{A}_K^i + \sum_{k=1}^{i} \tilde{A}_K^{i-k} (\tilde{A}_{K^\star} - \tilde{A}_{K}) \tilde{A}_{K^\star}^{k-1}\nonumber\\
  = {} & \tilde{A}_K^i + \sum_{k=1}^{i} \tilde{A}_K^{i-k}\tilde{A}_{K^\star}^k - \tilde{A}_K^{i-k+1}\tilde{A}_{K^\star}^{k-1}\nonumber\\  
  = {} & \tilde{A}_K^i + \tilde{A}_{K^\star}^{i} - \tilde{A}_K^{i}\nonumber\\
  = {} & \tilde{A}_{K^\star}^{i}\nonumber,
\end{align}
where~\eqref{eq:sufficiency-support-1} holds by introducing a new index $k = i-j$ and~\eqref{eq:sufficiency-support-2} can be obtained by plugging the construction of $M_{\Delta}^{[i]}$~\eqref{eq:M-star-def}. So we achieve the conclusion that
\begin{equation}
  \label{eq:proof-M*-state}
  x_{t+1}^K(M_{\Delta}) = \sum_{i=0}^{H} \tilde{A}_{K^\star}^{i} w_{t-i} + \sum_{i=H+1}^{t} \Psi_{t,i}^{K,t}(M_{\Delta}) w_{t-i}.
\end{equation}
Combining~\eqref{eq:proof-K*-state} and~\eqref{eq:proof-M*-state} yields 
\begin{align*}
\left\| x_{t+1}^{K^\star} - x_{t+1}^K(M_{\Delta})\right\| = {} & \left\| \sum_{i=H+1}^{t} \big(\Psi_{t,i}^{K,t}(M_{\Delta}) - \tilde{A}_{K^\star}^{i}\big) w_{t-i} \right\|\\
  \leq {} & W \left(\sum_{i=H+1}^{t} \norm{\Psi_{t,i}^{K,t}(M_{\Delta})} + \sum_{i=H+1}^{t} \norm{\tilde{A}_{K^\star}^{i}}\right)\\
  \le {} & W\left(\sum_{i=H+1}^t \left(2\kappa^2(1-\gamma)^i+H\kappa_B^2 \kappa^5 (1-\gamma)^{i-1}\right)\right)\\
  \le {} & W\left(2\kappa^2 (1-\gamma)^{H+1}\gamma^{-1}+H\kappa_B^2 \kappa^5 (1-\gamma)^H\gamma^{-1}\right)\\
  \le {} & \kappa^2W(1-\gamma)^H\gamma^{-1}\left(2(1-\gamma)+H\kappa_B^2 \kappa^3\right)\\
  \le {} & H\kappa_B^2 \kappa^5 W(1-\gamma)^H\gamma^{-1} (2(1-\gamma)+1)\\
  \leq {} & 2W H \kappa_B^2 \kappa^5 (1-\gamma)^H\gamma^{-1},
\end{align*}
where the second inequality makes use of Lemma~\ref{lemma:support-1-boundedness}. Next, we investigate the difference between the control signals,
\begin{align*}
  \norm{u_{t+1}^{K^\star} - u_{t+1}^K(M_{\Delta})} =  {} & \left\|-K^\star x_{t+1}^{K^\star} - \left( - K x_{t+1}^K(M_{\Delta}) + \sum_{i=1}^{H} M_{\Delta}^{[i]} w_{t+1-i}\right)\right\| \\
  = {} & \left\|-K^\star x_{t+1}^{K^\star} + K x_{t+1}^K(M_{\Delta}) - \sum_{i=1}^{H}(K-K^\star) \tilde{A}_{K^\star}^{i-1} w_{t+1-i}\right\| \\
  = {} & \left\|-K^\star\left(x_{t+1}^{K^\star}-\sum_{i=0}^{H-1} \tilde{A}_{K^\star}^{i} w_{t-i} \right)+K\left(x_{t+1}^K(M_{\Delta}) -\sum_{i=0}^{H-1} \tilde{A}_{K^\star}^{i}w_{t-i} \right)\right\| \\
  = {} & \left\|-K^\star \sum_{i=H}^{t} \tilde{A}_{K^\star}^i w_{t-i} + K \sum_{i=H}^{t} \Psi_{t,i}^{K,h}(M_{\Delta})w_{t-i}\right\| \\
  \leq {} & 2WH \kappa_B^2 \kappa^6 (1-\gamma)^{H-1}\gamma^{-1}.
\end{align*}
Using above inequalities and Lipschitz assumption as well as the boundedness result (Lemma~\ref{lemma:support-1-boundedness}), we complete the proof.
\end{proof}

The remaining part of this section lists useful supporting lemmas for studying non-stochastic control in unknown systems. 
Lemma~\ref{lem:moment-recovery} gives a high-probability bound about the estimation accuracy in unknown systems.
\begin{myLemma}[Moment Recovery~{\citep[Lemma 21]{ALT'20:control-Hazan}}]
  \label{lem:moment-recovery}
  Under Assumption~\ref{assume:strongly-stable}, Algorithm~\ref{alg:system-identification} satisfies for all $j \in [k]$, with probability at least $1-\delta$, it holds that
  \begin{equation}
    \label{eq:concentration}
      \Fnorm{N_j - \tilde{A}_K^{j} B} \le 3 \kappa_B\kappa^2 d_u W \gamma^{-1} \sqrt{\frac{2d_{\min}\log\sbr{2e^2k\delta^{-1}}}{T_0-k}}.
  \end{equation}
\end{myLemma}
\begin{proof}[of Lemma~\ref{lem:moment-recovery}]
  When the control inputs are chosen as $u_t = -K x_t + \tilde{u}_t$, using the transition equation of linear dynamical systems, it holds that
  \begin{equation*}
    \begin{split}
        x_{t+1} = {} & A x_t + B u_t + w_t = A x_t + B \sbr{-K x_t + \tilde{u}_t} + w_t = \tilde{A}_K x_t + B \tilde{u}_t + w_t\\
        = {} & \tilde{A}_K \sbr{A x_{t-1} + B u_{t-1} + w_{t-1}} = \tilde{A}_K \sbr{\tilde{A}_K x_{t-1} + B \tilde{u}_{t-1} + w_{t-1}} + B \tilde{u}_t + w_t\\
        = {} & \tilde{A}_K^2 x_{t-1} + \tilde{A}_K \sbr{B \tilde{u}_{t-1} + w_{t-1}} + \sbr{B \tilde{u}_t + w_t}= \ldots\\
        = {} & \sum_{i=0}^t \tilde{A}_K^{t-i} \sbr{B \tilde{u}_i + w_i}.
    \end{split}
  \end{equation*}
  Let $N_{j,t} = x_{t+j+1}\tilde{u}_t^\T$, we can prove that 
  \begin{align*}
    \E\mbr{N_{j,t}} = {} & \E\mbr{x_{t+j+1}\tilde{u}_t^\T} = \E\mbr{\sum_{i=0}^{t+j} \tilde{A}_K^{t+j-i} \sbr{B \tilde{u}_i + w_i}\tilde{u}_t^\T} \\
    = {} & \sum_{i=0}^{t+j} \tilde{A}_K^{t+j-i} \cdot \E\mbr{\sbr{B \tilde{u}_i + w_i}\tilde{u}_t^\T} = \tilde{A}_K^{j} \cdot \E\mbr{\sbr{B \tilde{u}_t + w_t}\tilde{u}_t^\T}\\
    = {} & \tilde{A}_K^{j} B \cdot \E\mbr{\tilde{u}_t \tilde{u}_t^\T} + \tilde{A}_K^{j} w_t \cdot \E\mbr{\tilde{u}_t^\T} =  \tilde{A}_K^{j} B,
  \end{align*}
  where the second last equation is due to the fact that $\tilde{u}_i$ and $\tilde{u}_j$ are independent when $i \ne j$, and the last step is true because $\E_{\tilde{u}_t}\mbr{\tilde{u}_t \tilde{u}_t^\T} = I, \E_{\tilde{u}_t}\mbr{\tilde{u}_t} = \mathbf{0}$. Consequently, we can prove that $\E[N_j] = \frac{1}{T_0-k}\sum_{t=0}^{T_0-k-1} \E\mbr{N_{j,t}} = \tilde{A}_K^{j} B$. Note that for $0 \le t_1,t_2 \le T_0-k-1$ and $t_1 \ne t_2$, $N_{j,t_1}$ and $N_{j,t_2}$ are not independent because they contains the same random variables $\eta$, so we cannot use Hoeffding's inequality here.

  For each index $j\in [k]$, we can define a sequence of variables $\tilde{N}_{j,t} \define N_{j,t} - \tilde{A}_K^{j} B$, we can prove that $\{\tilde{N}_{j,t}\}_{t=0}^{T_0-k-1}$ is a \emph{martingale difference sequence} w.r.t. the sequence $\{\tilde{u}_t\}_{t=0}^{T_0-k-1}$:
  \begin{align*}
    & \E\mbr{\tilde{N}_{j,t} \givenn\tilde{u}_{0:t-1}}
    = \E\mbr{N_{j,t}\givenn\tilde{u}_{0:t-1}} - \tilde{A}_K^{j} B\\
    = {} & \E\mbr{\sum_{i=0}^{t+j} \tilde{A}_K^{t+j-i} \sbr{B \tilde{u}_i + w_i}\tilde{u}_t^\T \givenn \tilde{u}_{0:t-1}} - \tilde{A}_K^{j} B\\
    = {} & \E\mbr{\sum_{i=0}^{t-1} \tilde{A}_K^{t+j-i} \sbr{B \tilde{u}_i + w_i}\tilde{u}_t^\T \givenn \tilde{u}_{0:t-1}} + \E\mbr{\sum_{i=t}^{t+j} \tilde{A}_K^{t+j-i} \sbr{B \tilde{u}_i + w_i}\tilde{u}_t^\T} - \tilde{A}_K^{j} B\\
    = {} & \E\mbr{\tilde{A}_K^{j} \sbr{B \tilde{u}_t + w_t}\tilde{u}_t^\T} - \tilde{A}_K^{j} B = \mathbf{0}.
  \end{align*}
  For all $j \in [k], t=0,\ldots,T_0-k-1$, the operator norm of $N_{j,t}$ can be bounded by
  \begin{equation*}
      \opnorm{N_{j,t}} \le \opnorm{x_{t+j+1}}\opnorm{\tilde{u}_t} \le \norm{x_{t+j+1}}_2\norm{\tilde{u}_t}_2 \le 2 \kappa_B\kappa^2 \sqrt{d_u} W \gamma^{-1} \cdot \sqrt{d_u} = 2 \kappa_B\kappa^2 d_u W \gamma^{-1}.
  \end{equation*}
  Also, for $\tilde{N}_{j,t}$, we can prove that 
  \begin{gather*}
    \opnorm{\tilde{N}_{j,t}} \le \opnorm{N_{j,t}} + \opnorm{\tilde{A}_K^{j} B} \le 2 \kappa_B\kappa^2 d_u W \gamma^{-1} + \kappa_B\kappa^2 (1-\gamma)^j \le 3 \kappa_B\kappa^2 d_u W \gamma^{-1}, \\
    \Fnorm{\tilde{N}_{j,t}} \le \sqrt{d_{\min}} \opnorm{\tilde{N}_{j,t}} \le 3 \sqrt{d_{\min}} \kappa_B\kappa^2 d_u W \gamma^{-1} \define D_N.
  \end{gather*}
  Using Lemma~\ref{suplem:vector-azuma}, we have $\Pr\mbr{\Fnorm{ \sum_{t=0}^{T_0-k} \tilde{N}_{j,t}} \ge x} \le 2 e^2\exp\sbr{\frac{-x^2}{2(T_0-k) D_N^2}}$. By substituting $\tilde{N}_{j,t}$ by $N_{j,t}-\tilde{A}_K^{j} B$, it holds that $\Pr\mbr{\Fnorm{N_j - \tilde{A}_K^{j} B} \ge \frac{x}{T_0-k}} \le 2 e^2\exp\sbr{\frac{-x^2}{2(T_0-k) D_N^2}}$. Finally, let $\epsilon = \frac{x}{T_0-k}$, we have
  \begin{equation*}
    \Pr\mbr{\Fnorm{N_j - \tilde{A}_K^{j} B} \ge \epsilon} \le 2 e^2\exp\sbr{\frac{-(T_0-k)\epsilon^2}{2D_N^2}}
  \end{equation*}
  We set $2 e^2\exp\sbr{\frac{-(T_0-k)\epsilon^2}{2D_N^2}} = \frac{\delta}{k}$ to make above concentration inequality holds for each $j \in [k]$ with probability at least $1-\delta$, which implies that 
  \begin{equation*}
    \epsilon = 3 \kappa_B\kappa^2 d_u W \gamma^{-1} \sqrt{\frac{2d_{\min}\log\sbr{2e^2k\delta^{-1}}}{T_0-k}}.
  \end{equation*}
Hence, we complete the proof.
\end{proof}

\begin{myLemma}[Preservation of Stability]
  \label{suplem:stability-preserve}
  Under Assumption~\ref{assume:strongly-stable}, if $K$ is $(\kappa,\gamma)$-strongly stable for a linear dynamical system $S=(A,B,\{w\})$, i.e., $A-BK = QLQ^{-1}$, and $\|A - \Ah\|_{\mathrm{F}}, \|A - \Ah\|_{\mathrm{F}} \le \epsilon_{A,B}$, then the same linear controller $K$ is $\sbr{\kappa, \gamma-2\kappa^3 \epsilon_{A,B}}$-strongly stable for the estimated system $\Sh = (\Ah, \Bh, \{\wh\})$, i.e., $\Ah - \Bh K = Q \hat{L} Q^{-1}$, where $\norm{\hat{L}} \le 1 - \gamma + 2\kappa^3 \epsilon_{A,B}$.
\end{myLemma}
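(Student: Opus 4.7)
The plan is to reuse the same similarity matrix $Q$ (playing the role of $H$ in Definition~\ref{def:controller-set}) that witnesses strong stability of $K$ for the true system $(A,B)$, and show that the perturbation induced by passing from $(A,B)$ to $(\hat A,\hat B)$ changes only the spectral factor $L$ by an additive term of order $\epsilon_{A,B}$. Concretely, I would define
\[
    \hat L \;\define\; Q^{-1}(\hat A-\hat B K)Q,
\]
so that $\hat A-\hat B K=Q\hat L Q^{-1}$ holds by construction, and the boundedness conditions $\opnorm{K}\le\kappa$ and $\opnorm{Q},\opnorm{Q^{-1}}\le\kappa$ are inherited verbatim from the strong stability of $K$ for the true system.

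Next I would control $\opnorm{\hat L}$. Writing
\[
    \hat L \;=\; Q^{-1}(A-BK)Q \;+\; Q^{-1}\bigl((\hat A-A)-(\hat B-B)K\bigr)Q \;=\; L \;+\; \Delta,
\]
the triangle inequality and sub-multiplicativity of the operator norm give
\[
    \opnorm{\hat L} \;\le\; \opnorm{L} + \opnorm{Q^{-1}}\cdot\bigl(\opnorm{\hat A-A}+\opnorm{\hat B-B}\cdot\opnorm{K}\bigr)\cdot\opnorm{Q}.
\]
Using $\opnorm{L}\le 1-\gamma$, $\opnorm{Q},\opnorm{Q^{-1}},\opnorm{K}\le\kappa$, the elementary bound $\opnorm{X}\le\Fnorm{X}$ applied to $\hat A-A$ and $\hat B-B$ (so the Frobenius hypothesis implies $\opnorm{\hat A-A},\opnorm{\hat B-B}\le\epsilon_{A,B}$), and $\kappa\ge 1$, yields
\[
    \opnorm{\hat L} \;\le\; (1-\gamma) + \kappa^{2}(1+\kappa)\epsilon_{A,B} \;\le\; (1-\gamma) + 2\kappa^{3}\epsilon_{A,B},
\]
which is exactly the claimed spectral bound. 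Together with the unchanged bounds on $K$, $Q$, and $Q^{-1}$, this verifies that $K$ is $(\kappa,\gamma-2\kappa^3\epsilon_{A,B})$-strongly stable for $(\hat A,\hat B)$.

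There is essentially no obstacle here: the result is a perturbation lemma whose only content is (i) recognizing that $Q$ can be held fixed so that only the spectral factor moves, and (ii) pushing the estimation error through the similarity transform. The one thing to be mindful of is that the hypothesis is stated in Frobenius norm while the definition of strong stability lives in operator norm; the inequality $\opnorm{\cdot}\le\Fnorm{\cdot}$ closes this gap cleanly. No inductive or dynamical argument is needed, so I expect the proof to be a short algebraic computation along the lines above.
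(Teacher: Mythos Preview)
Your proposal is correct and follows essentially the same approach as the paper: both keep the similarity matrix $Q$ fixed, define $\hat L = L + Q^{-1}\bigl((\hat A-A)-(\hat B-B)K\bigr)Q$, and bound $\opnorm{\hat L}$ by the triangle inequality and sub-multiplicativity to reach $(1-\gamma)+2\kappa^3\epsilon_{A,B}$. Your remark that $\opnorm{\cdot}\le\Fnorm{\cdot}$ bridges the Frobenius-norm hypothesis to the operator-norm bound is a detail the paper leaves implicit but is otherwise the same argument.
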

\begin{proof}[of Lemma~\ref{suplem:stability-preserve}]
  First, we try to express the strong stability of $K$ with respect to $(\Ah,\Bh)$ as
  \begin{align*}
      \Ah - \Bh K = {} & A - BK + (\Ah-A) - (\Bh - B)K\\
      = {} & Q L Q^{-1} + (\Ah-A) - (\Bh - B)K \\
      = {} & Q \sbr{L + Q^{-1}\sbr{(\Ah-A) - (\Bh - B)K}Q } Q^{-1} \define \hat{Q} \hat{L} \hat{Q}^{-1},
  \end{align*}
  where the last equality is by defining $\hat{L} = L + Q^{-1}((\Ah-A) - (\Bh - B)K)Q$. Further, the operator norm of $\hat{L}$ can be bounded as
  \begin{align*}
      \|\hat{L}\|_{\mathrm{op}} = {} & \opnorm{L + Q^{-1}\sbr{(\Ah-A) - (\Bh - B)K}Q}\\
      \le {} & \opnorm{L} + \opnorm{Q^{-1}} \sbr{\|\Ah-A\|_{\mathrm{op}} + \opnorm{K}\|\Bh-B\|_{\mathrm{op}}} \opnorm{Q}\\
      \le {} & (1 - \gamma) + \kappa \cdot \sbr{\epsilon_{A,B}+\kappa \cdot \epsilon_{A,B}} \cdot \kappa \le 1 - \gamma + 2\kappa^3 \epsilon_{A,B}.
  \end{align*}
  By definition of strong stability, it holds that $K$ is $\sbr{\kappa, \gamma-2\kappa^3 \epsilon_{A,B}}$-strongly stable for the estimated system $\Sh = (\Ah, \Bh, \{\wh\})$.
\end{proof}

Lemma~\ref{suplem:W-bound} below provides boundedness results in the fictitious system.
\begin{myLemma}[Lemma 18 of \citet{ALT'20:control-Hazan}]
  \label{suplem:W-bound}
  Under Assumption~\ref{assume:bound-noise} and Assumption~\ref{assume:strongly-stable}, if it holds that $\epsilon_{A,B} \le 10^{-3} \kappa^{-10} \gamma^2$, then for any $t \ge T_0 + 1$, we have 
  \begin{equation*}
      \norm{x_t}_2 \le 20 \sqrt{d_u} \kappa^{11} \gamma^{-3} W,\quad \norm{w_t - \wh_t}_2 \le 42 \sqrt{d_u} \kappa^{12} \gamma^{-3} W \epsilon_{A,B},\quad \norm{\wh_{t-1}}_2 \le 2 \sqrt{d_u} \kappa^3 \gamma^{-1} W.
  \end{equation*}
\end{myLemma}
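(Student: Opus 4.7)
The plan is to establish the three bounds jointly by an inductive bootstrap on $t$, exploiting two structural facts. First, subtracting the true dynamics from the disturbance reconstruction $\hat{w}_{t-1}=x_t-\hat{A}x_{t-1}-\hat{B}u_{t-1}$ yields the identity
\[
\hat{w}_{t-1}-w_{t-1}=(A-\hat{A})x_{t-1}+(B-\hat{B})u_{t-1},
\]
which ties the disturbance-estimate error directly to state and control norms. Second, by Lemma~\ref{suplem:stability-preserve}, under the hypothesis $\epsilon_{A,B}\le 10^{-3}\kappa^{-10}\gamma^2$ the controller $K$ remains $(\kappa,\gamma/2)$-strongly stable for the fictitious system $(\hat{A},\hat{B},\{\hat{w}\})$; since $\hat{w}_t$ is defined precisely so that $x_{t+1}=\hat{A}x_t+\hat{B}u_t+\hat{w}_t$, the actual closed-loop state is exactly the DAC rollout on this fictitious system, and therefore admits the state formula of Proposition~\ref{proposition:DAC-state} driven by $\{\hat{w}_\tau\}$ rather than by $\{w_\tau\}$.

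I would seed the induction from the exploration phase. For $t\le T_0$ the algorithm applies $u_t=-Kx_t+\tilde{u}_t$ with $\|\tilde{u}_t\|_2\le\sqrt{d_u}$; unrolling $x_{t+1}=\tilde{A}_K x_t+B\tilde{u}_t+w_t$ against the strong stability of $K$ yields $\|x_{T_0+1}\|_2=O(\sqrt{d_u}\,\kappa^3\gamma^{-1}W)$, which is where the $\sqrt{d_u}$ factor in the final bounds originates. For $t\ge T_0+1$, I would posit the induction hypothesis $\|x_\tau\|_2\le X^\star$ for all $\tau\le t$, with $X^\star$ matching the target $20\sqrt{d_u}\kappa^{11}\gamma^{-3}W$, and correspondingly $\|u_\tau\|_2\le\kappa X^\star+\kappa_B\kappa^3\gamma^{-1}\max_{s<\tau}\|\hat{w}_s\|_2$ through the DAC formula together with $\opnorm{M_\tau^{[i]}}\le\kappa_B\kappa^3(1-\gamma)^i$. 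Inserting the induction hypothesis into the identity above gives $\|\hat{w}_\tau-w_\tau\|_2\le\epsilon_{A,B}(\|x_\tau\|_2+\|u_\tau\|_2)\le 42\sqrt{d_u}\kappa^{12}\gamma^{-3}W\epsilon_{A,B}$, so that $\|\hat{w}_\tau\|_2\le W+\epsilon_{A,B}(\|x_\tau\|_2+\|u_\tau\|_2)\le 2\sqrt{d_u}\kappa^3\gamma^{-1}W$ by the smallness of $\epsilon_{A,B}$. Feeding this $\hat{w}$-bound into the DAC state representation in the fictitious system --- with $(\kappa,\gamma/2)$-strong stability of $K$ replacing $(\kappa,\gamma)$ --- the boundedness argument of Lemma~\ref{lemma:support-1-boundedness} carries over essentially verbatim, under the substitutions $\gamma\mapsto\gamma/2$ and $W\mapsto 2\sqrt{d_u}\kappa^3\gamma^{-1}W$, and reproduces $\|x_{t+1}\|_2\le X^\star$, closing the induction.

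The main obstacle is the circular dependence between $\|x_t\|_2$ and $\|\hat{w}_t\|_2$: the state bound requires a $\|\hat{w}\|_2$ bound through the DAC term, while $\|\hat{w}\|_2$ depends on $\|x\|_2$ through the reconstruction identity. The precise numerical smallness $\epsilon_{A,B}\le 10^{-3}\kappa^{-10}\gamma^2$ is engineered exactly so that the $\epsilon_{A,B}(\|x\|_2+\|u\|_2)$ correction to $\hat{w}$ is strictly dominated by the nominal $W$; this is what guarantees that one pass through the DAC formula returns an $x$-bound no larger than $X^\star$ rather than growing it, so that the bootstrap closes. The numerical constants $20$, $42$, $2$ are then chosen to absorb, respectively, the halving of the stability margin from Lemma~\ref{suplem:stability-preserve}, the $\sqrt{d_u}$ inflation inherited from the random exploration phase, and the geometric-series bound $\sum_i(1-\gamma/2)^i\le 2/\gamma$.
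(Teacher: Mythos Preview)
The paper does not supply its own proof of this lemma: it is quoted verbatim as Lemma~18 of \citet{ALT'20:control-Hazan} and listed among the supporting lemmas in Appendix~\ref{appendix-sec:support-lemmas} without argument. So there is no in-paper proof to compare against.

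That said, your sketch is the standard way this result is obtained and matches the argument in the cited source. The two structural ingredients you identify --- the reconstruction identity $\hat{w}_{t-1}-w_{t-1}=(A-\hat{A})x_{t-1}+(B-\hat{B})u_{t-1}$ and the stability transfer of Lemma~\ref{suplem:stability-preserve} to obtain $(\kappa,\gamma/2)$-strong stability of $K$ on $(\hat{A},\hat{B})$ --- are exactly the right ones, and the inductive bootstrap you describe (seed from the exploration phase, close the circular dependence between $\|x_t\|_2$ and $\|\hat{w}_t\|_2$ using the smallness of $\epsilon_{A,B}$) is the mechanism that drives the proof in \citet{ALT'20:control-Hazan}. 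Your explanation of why the precise numerical threshold $\epsilon_{A,B}\le 10^{-3}\kappa^{-10}\gamma^2$ is needed --- to ensure the $\epsilon_{A,B}(\|x\|_2+\|u\|_2)$ correction stays strictly below $W$ so the induction does not blow up --- is also on point. The only thing left implicit is the explicit verification that the constants $20$, $42$, $2$ actually close; this is bookkeeping rather than a conceptual gap.
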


\begin{myLemma}
  \label{suplem:strong-controllability}
  Under Assumption~\ref{assume:strong-controllability}, $\sigma_{\min}(C_k) \ge 1/\sqrt{\kappa_c}$, where $C_k$ is defined in~\eqref{eq:C_k}.
\end{myLemma}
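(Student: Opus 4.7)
The plan is to unpack Assumption~\ref{assume:strong-controllability} directly and convert the stated bound on $\|(C_k C_k^\mathrm{T})^{-1}\|_{\op}$ into a bound on $\sigma_{\min}(C_k)$ via elementary linear algebra. Since $C_k\in\R^{d_x\times k d_u}$ is a wide matrix with full row rank (by the first part of Definition~\ref{def:unknown-strong-controllability}), the Gram matrix $C_k C_k^\mathrm{T}\in\R^{d_x\times d_x}$ is symmetric, positive definite, and invertible, so all of its eigenvalues coincide with its singular values and are strictly positive.

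First I would write $\sigma_{\min}(C_k)^2 = \lambda_{\min}(C_k C_k^\mathrm{T})$, which is the standard identity relating the smallest singular value of a wide full-row-rank matrix to the smallest eigenvalue of its Gram matrix. Next, I would use the fact that for any symmetric positive definite matrix $S$, $\|S^{-1}\|_{\op} = 1/\lambda_{\min}(S)$; applying this with $S = C_k C_k^\mathrm{T}$ yields
\[
\lambda_{\min}(C_k C_k^\mathrm{T}) \;=\; \frac{1}{\|(C_k C_k^\mathrm{T})^{-1}\|_{\op}} \;\geq\; \frac{1}{\kappa_c},
\]
where the inequality invokes exactly the $(k,\kappa_c)$-strong-controllability hypothesis.

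Finally, taking square roots in $\sigma_{\min}(C_k)^2 \geq 1/\kappa_c$ gives $\sigma_{\min}(C_k)\geq 1/\sqrt{\kappa_c}$, completing the argument. There is no real obstacle here; the entire statement is a one-line reformulation of the assumption, and the only point worth stating explicitly is that full row rank of $C_k$ is what legitimizes writing $(C_k C_k^\mathrm{T})^{-1}$ in the first place and identifying $\sigma_{\min}(C_k)$ with $\sqrt{\lambda_{\min}(C_k C_k^\mathrm{T})}$ (rather than with $0$, as would be the case for the smallest singular value of a rank-deficient matrix).
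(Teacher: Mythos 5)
Your proof is correct and follows essentially the same route as the paper: both convert the assumption $\opnorm{(C_kC_k^\mathrm{T})^{-1}}\le\kappa_c$ into a lower bound on $\lambda_{\min}(C_kC_k^\mathrm{T})$ and hence on $\sigma_{\min}(C_k)$. If anything, your version is stated more carefully, since you keep the squared relations $\sigma_{\min}(C_k)^2=\lambda_{\min}(C_kC_k^\mathrm{T})$ and $\lambda_{\min}(S)=1/\opnorm{S^{-1}}$ explicit, whereas the paper's write-up is slightly loose with these identities.
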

\begin{proof}[of Lemma~\ref{suplem:strong-controllability}]
  Under Assumption~\ref{assume:strong-controllability}, it holds that $\opnorm{(C_kC_k^\T)^{-1}}\le \kappa_c$, i.e., 
  \begin{equation*}
      \sigma_{\max}((C_kC_k^\T)^{-1}) \le \kappa_c.
  \end{equation*}
  It is apparent that $\sbr{(C_kC_k^\T)^{-1}}^\T = \sbr{(C_kC_k^\T)^\T}^{-1} = (C_kC_k^\T)^{-1}$, i.e., $(C_kC_k^\T)^{-1}$ is a symmetric matrix. Then we have
  \begin{align*}
      \sigma_{\max}((C_kC_k^\T)^{-1}) = {} & \lambda_{\max}\sbr{(C_kC_k^\T)^{-1} \sbr{(C_kC_k^\T)^{-1}}^\T} = \lambda_{\max}\sbr{(C_kC_k^\T)^{-1}(C_kC_k^\T)^{-1}}\\
      = {} & \lambda_{\max}^2 \sbr{(C_kC_k^\T)^{-1}} \le \kappa_c.
  \end{align*}
  Finally we have $\sigma_{\min}(C_k) = \lambda_{\min}(C_kC_k^\T) \ge 1/\sqrt{\kappa_c}$, which finishes the proof.
\end{proof}

\begin{myLemma}[{Lemma 17 of~\citet{ALT'20:control-Hazan}}]
  \label{suplem:matrix-minus-norm-sum}
  For any matrix pair $L,\hat{L}$, such that $\opnorm{L}, \opnorm{\hat{L}} \le 1- \gamma, \gamma \in (0,1)$, we have $\sum_{t=0}^\infty \opnorm{L^t - \hat{L}^t} \leq 3\gamma^{-2} \opnorm{L - \hat{L}}$.
\end{myLemma}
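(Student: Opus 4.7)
\textbf{Proof Proposal for Lemma~\ref{suplem:matrix-minus-norm-sum}.}

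The plan is to reduce the difference $L^t - \hat L^t$ to a single copy of $L-\hat L$ sandwiched between powers of $L$ and $\hat L$, and then sum the resulting geometric-like series. First, I would establish the algebraic telescoping identity
\begin{equation*}
L^t - \hat L^t \;=\; \sum_{j=0}^{t-1} L^{t-1-j}\,(L-\hat L)\,\hat L^{\,j},
\end{equation*}
which can be verified by a one-line induction on $t$ (the $t=1$ case is trivial, and the step adds and subtracts $L\hat L^{t}$). This is the key structural step: it isolates a single factor of $L-\hat L$ in every summand, which is exactly what we need in order for the right-hand side to be linear in $\opnorm{L-\hat L}$.

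Next, I would take operator norms and apply submultiplicativity together with the hypothesis $\opnorm{L},\opnorm{\hat L}\le 1-\gamma$:
\begin{equation*}
\opnorm{L^t-\hat L^t}\;\le\;\sum_{j=0}^{t-1}\opnorm{L}^{t-1-j}\opnorm{L-\hat L}\opnorm{\hat L}^{j}\;\le\;t(1-\gamma)^{t-1}\opnorm{L-\hat L}.
\end{equation*}
Summing over $t\ge 0$ (the $t=0$ term vanishes since $L^0-\hat L^0=0$) yields
\begin{equation*}
\sum_{t=0}^{\infty}\opnorm{L^t-\hat L^t}\;\le\;\opnorm{L-\hat L}\sum_{t=1}^{\infty}t(1-\gamma)^{t-1}\;=\;\frac{1}{\gamma^{2}}\opnorm{L-\hat L},
\end{equation*}
using the standard identity $\sum_{t\ge 1}t r^{t-1}=1/(1-r)^{2}$ with $r=1-\gamma\in[0,1)$. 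Since $\gamma^{-2}\le 3\gamma^{-2}$, the stated inequality follows (the constant $3$ in the lemma is simply a convenient slack kept by the authors).

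There is no real obstacle here: the only subtlety is remembering to put a single $L-\hat L$ in each term of the telescoping sum (rather than a naive $L^t-\hat L^t=(L-\hat L)\sum_{j}L^{t-1-j}\hat L^{j}$-style factorization, which is only valid when $L$ and $\hat L$ commute). Once the identity above is in place, the remainder is a geometric series calculation, and the factor of $t$ arising from the number of summands is exactly what turns one factor of $\gamma^{-1}$ (from geometric decay) into the $\gamma^{-2}$ appearing on the right-hand side.
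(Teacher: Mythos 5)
Your proof is correct: the non-commutative telescoping identity $L^t-\hat L^t=\sum_{j=0}^{t-1}L^{t-1-j}(L-\hat L)\hat L^{j}$, submultiplicativity of $\opnorm{\cdot}$, and the series $\sum_{t\ge 1}t(1-\gamma)^{t-1}=\gamma^{-2}$ give the claim, in fact with the sharper constant $\gamma^{-2}$ in place of $3\gamma^{-2}$. The paper does not prove this lemma itself but quotes it from the cited reference, and your argument is the standard one behind that result, so there is nothing to reconcile.
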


\begin{myLemma}[Perturbation Analysis~{\citep[Lemma 22]{ALT'20:control-Hazan}}]
  \label{suplem:perturbation-analysis}
  Let $x^\star$ be the solution to linear system $Ax=b$, and $\hat{x}$ be the solution to $(A+\Delta A) x = b + \Delta b$, then if it holds that $\norm{\Delta A} \le \sigma_{\min}(A)$, it is true that
  \begin{equation*}
      \norm{x^\star - \hat{x}} \le \frac{\norm{\Delta b} + \norm{\Delta A}\norm{x^\star}}{\sigma_{\min}(A) - \opnorm{\Delta A}}.
  \end{equation*}
\end{myLemma}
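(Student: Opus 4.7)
The plan is to reduce everything to a single identity for the error vector $e := x^\star - \hat{x}$ and then apply a standard singular-value perturbation bound. First I would form the residual equation: substitute $\hat{x} = x^\star - e$ into the perturbed system $(A+\Delta A)\hat{x} = b + \Delta b$ to obtain $(A+\Delta A)(x^\star - e) = b + \Delta b$, then expand and use $Ax^\star = b$ to cancel the zeroth-order term. This yields the clean identity $(A+\Delta A)\, e = \Delta A\, x^\star - \Delta b$, which expresses the error as the image under the perturbed operator of a known right-hand side.

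Next I would take norms on both sides. The right-hand side is controlled directly by the triangle inequality and submultiplicativity: $\opnorm{(A+\Delta A)e} \le \opnorm{\Delta A}\,\norm{x^\star} + \norm{\Delta b}$. For the left-hand side I would lower-bound $\opnorm{(A+\Delta A)e}$ by $\sigma_{\min}(A+\Delta A)\,\norm{e}$, which is the defining property of the smallest singular value. The key quantitative step is then the singular-value perturbation inequality $\sigma_{\min}(A+\Delta A) \ge \sigma_{\min}(A) - \opnorm{\Delta A}$, which follows from Weyl's inequality applied to $A$ and $A+\Delta A$ (or equivalently from the variational characterization $\sigma_{\min}(M) = \min_{\norm{v}=1}\norm{Mv}$ together with a triangle-inequality argument on $\norm{(A+\Delta A)v} \ge \norm{Av} - \norm{\Delta A v}$).

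Combining the two bounds gives $(\sigma_{\min}(A) - \opnorm{\Delta A})\,\norm{e} \le \opnorm{\Delta A}\,\norm{x^\star} + \norm{\Delta b}$, and dividing by $\sigma_{\min}(A) - \opnorm{\Delta A}$ produces exactly the stated bound. The hypothesis $\opnorm{\Delta A} \le \sigma_{\min}(A)$ (interpreted strictly so the denominator is positive) guarantees both that this division is legitimate and that $A+\Delta A$ is invertible, so the perturbed system indeed admits a unique solution $\hat{x}$.

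I do not foresee any genuine obstacle: the only mildly non-trivial ingredient is the singular-value perturbation inequality, which is classical and can be quoted. The rest is algebra on the residual equation and one application of the triangle inequality. The only care point is the boundary case $\opnorm{\Delta A} = \sigma_{\min}(A)$, where the stated bound becomes vacuous, so I would implicitly treat the assumption as strict inequality (consistent with how the lemma is invoked in the proof of Lemma~\ref{lem:system-recovery}, where the denominator $1/\sqrt{\kappa_c} - \epsilon\sqrt{k}$ is positive).
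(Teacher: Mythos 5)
Your proof is correct: the residual identity $(A+\Delta A)(x^\star-\hat{x}) = \Delta A\,x^\star - \Delta b$, the lower bound $\norm{(A+\Delta A)e} \ge \sigma_{\min}(A+\Delta A)\norm{e}$, and Weyl's inequality $\sigma_{\min}(A+\Delta A)\ge \sigma_{\min}(A)-\opnorm{\Delta A}$ give exactly the stated bound, and your reading of the hypothesis as strict (so the denominator is positive, as it is where the lemma is invoked in the system-recovery argument) is the right interpretation. Note that the paper itself does not prove this lemma but imports it verbatim from Lemma 22 of \citet{ALT'20:control-Hazan}; your argument is the standard one underlying that cited result, so there is nothing further to reconcile.
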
 
\bibliography{online_learning}

\end{document}